\definecolor{Gray}{gray}{0.9}
\definecolor{LightCyan}{rgb}{0.88,1,1}
\title{Achieving Zero Constraint Violation for Constrained Reinforcement Learning via Primal-Dual Approach}
\author{
     Qinbo Bai\textsuperscript{\rm 1},
     Amrit Singh Bedi\textsuperscript{\rm 2},
     Mridul Agarwal\textsuperscript{\rm 1},
     Alec Koppel\textsuperscript{\rm 2},
     Vaneet Aggarwal\textsuperscript{\rm 1}
}
\newtheorem{assumption}{Assumption}
\newtheorem{lemma}{Lemma}
\newtheorem{theorem}{Theorem}
\newtheorem{corollary}{Corollary}
\begin{document}

\maketitle
\begin{abstract}
Reinforcement learning is widely used in applications where one needs to perform sequential decisions while interacting with the environment. The problem becomes more challenging when the decision requirement includes satisfying some safety constraints. The problem is mathematically formulated as constrained Markov decision process (CMDP). In the literature, various algorithms are available to solve CMDP problems in a model-free manner to achieve $\epsilon$-optimal cumulative reward with $\epsilon$ feasible policies. An $\epsilon$-feasible policy implies that it suffers from constraint violation. An important question here is whether we can achieve $\epsilon$-optimal cumulative reward with zero constraint violations or not. To achieve that, we advocate the use of randomized primal-dual approach to solve the CMDP problems and propose a conservative stochastic primal-dual algorithm (CSPDA) which is shown to exhibit $\tilde{\mathcal{O}}\left(1/\epsilon^2\right)$ sample complexity to achieve $\epsilon$-optimal cumulative reward with zero constraint violations. In the prior works, the best available sample complexity for the $\epsilon$-optimal policy  with zero constraint violation is $\tilde{\mathcal{O}}\left(1/\epsilon^5\right)$. Hence, the proposed algorithm provides a significant improvement as compared to the state of the art. \footnote{This is the arXiv version with the Appendix for the AAAI 2022 paper with the same title. This work has been further extended with concave utilities and constraints, and is available in v2 of the current arXiv.}

\end{abstract}

\section{Introduction}

Reinforcement learning (RL) is a machine learning framework which learns to perform a task by repeatedly interacting with the environment. This framework is widely utilized in a wide range of applications such as robotics, communications, computer vision,  autonomous driving, etc. \cite{arulkumaran2017deep,kiran2021deep}. The problem is mathematically formulated as a Markov Decision Process (MDP) which constitute of a state, action, and transition probabilities of going from one state to the other after taking a particular action. On taking an action, a reward is achieved and the overall objective is to maximize the sum of discounted rewards. However, in various realistic environments, the agent needs to decide action where certain constraints need to be satisfied (e.g., average power constraint in wireless sensor networks \cite{buratti2009overview}, queue stability constraints \cite{xiang2015joint}, safe exploration \cite{moldovan2012safe}, etc.). The standard MDP equipped with the cost function for the constraints is called constrained Markov Decision process (CMDP) framework \cite{altman1999constrained}. It is well known by \cite{altman1999constrained} that the resulting CMDP problem can be equivalently written as a linear program (LP) and hence efficient algorithms are available in the literature. 
But to solve the LP, one needs  access to the transition probabilities of the environment, which is not available in realistic environment, and thus efficient approaches to develop model-free algorithms  for CMDP are required. 

Various algorithms are proposed in the literature to solve the CMDP problem without apriori knowledge of the transition probability (See Table 1 for comparisons). 
The performance of these algorithms is measured by the number of samples (number of state-action-state transitions) required to achieve $\epsilon$-optimal (objective sub-optimality) $\epsilon$-feasible (constraint violations) policies. An $\epsilon$-feasible policy means that the constraints are not completely satisfied by the obtained policy. However, in many applications, such as in power systems \cite{vu2020safe} or autonomous vehicle control \cite{wen2020safe}, violations of constraint could be catastrophic in practice. Hence, achieving optimal objective guarantees without constraint violation is an important problem and is the focus of the paper. More precisely, we ask the question, ``{\it Is it possible to achieve the optimal sublinear convergence rate for the objective while achieving  zero constraint violations for  CMDP problem without apriori knowledge of the transition probability?}"


We answer the above question in affirmative in this work.  We remark that the sample complexity result in this work exhibit tight dependencies on the cardinality of state and action spaces (cf. Table 1 in Appendix \ref{sec:app_compare}). The key contributions can be summarized as follows:

\begin{itemize}[leftmargin=*]
    \item To best of our knowledge, this work is the first attempt to solve CMDPs to achieve optimal sample complexity with zero constraint violation.  There exist one exception in the literature which achieves the zero constraint violation but at the cost of $\tilde{\mathcal{O}}\left(1/\epsilon^5\right)$ sample complexity to achieve $\epsilon$ optimal policy \cite{wei2021provably}. In contrast, we are able to achieve zero constraint violation with $\tilde{\mathcal{O}}\left(1/\epsilon^2\right)$ sample complexity.

    \item We utilized the idea of conservative constraints  in the dual domain to derive the zero constraint violations. Conservative constrains were used recently for showing zero constraint violations in online constrained convex optimization in \cite{9439069}, while the problem of CMDP is much more challenging than online constrained optimization.  The dual constraint violations are then used to derive the primal domain results utilizing the novel analysis unique to this work (cf. Sec. \ref{sec:policy}). We remark that directly applying the conservative constraint idea in the primal domain does not result in the optimal dependence on the discount factor. 
    
     \item The proposed algorithm utilizes adaptive state-action pair sampling (cf. Eq. \eqref{eq_sampled_Lagrange}), due to which the stochastic gradient estimates exhibit unbounded second order moments. This makes the analysis challenging, and  standard saddle point algorithms cannot be used. This difficulty is handled by using KL divergence as the performance metric for the dual update similar to \cite{Junyu}.
    
    \item We have performed proof of concept experiments to support the theoretical findings.
\end{itemize}




\section{Related work}

\textbf{Unconstrained RL}. In the recent years, reinforcement learning  has been well studied for unconstrained tabular settings. Different algorithms are compared based upon the sample complexity of the algorithm which describes the number of samples $T$ required to achieve an $\epsilon$ optimal policy. For the infinite horizon discounted reward setting, \cite{Lattimore2012lowerbound} modified the famous model-based UCRL algorithm \cite{jaksch2010near} to achieve the PAC upper bound of $\tdO\bigg(\frac{|\ccalS||\ccalA|}{(1-\gamma)^3\epsilon^2}\bigg)$ on the sample complexity. \cite{yuantao2021Qlearning} improved the model-free vanilla Q-learning algorithm to achieve the sample Complexity $\tdO\bigg(\frac{|\ccalS||\ccalA|}{(1-\gamma)^4\epsilon^2}\bigg)$. For the episodic setting with episode length of $H$, \cite{azar2017minimax} proposed the model-based UCBVI algorithm and achieved a sample complexity of $\tdO\left(\frac{H^3|\mathcal{S}||\mathcal{A}|}{\epsilon^2}\right)$ which is equivalent to the lower bound provided in the paper. Along the similar lines, \cite{jin2018qlearning} proposed a model-free UCB Q-learning and achieved the sample complexity of $\tdO(\frac{H^5|\mathcal{S}||\mathcal{A}|}{\epsilon^2})$. Above all, there exists a number of near-optimal algorithms (either model-based or model-free) in the unconstrained tabular settings for RL.

\noindent \textbf{Model-based Constrained RL}. Once the estimated transition model is either given or estimated accurately enough, it makes intuitive sense to utilize a model-based algorithm to solve the constrained RL (CRL) problem because the problem boils down to solving only a linear program \cite{altman1999constrained}. Under the model-based framework, the authors \cite{efroni2020exploration} proposed $4$ algorithms namely OptCMDP \& OptCMDP-bonus, OptDual, and OptPrimalDual which solve the problem in the primal, dual, and primal-dual domains, respectively.  \cite{brantley2020constrained} proposed a modular algorithm, CONRL, which utilizes the principle of optimism and can be applied to standard CRL setting and also extended to the concave-convex and knapsack setting. {\cite{Kalagarla_Jain_Nuzzo_2021} proposed the UC-CFH algorithm which also works using the optimism principle and provided a PAC analysis for their algorithm. \cite{ding2021provably} considered a linear MDP with constraints setting and proposed the OPDOP algorithm and extended it to the tabular setting as well.}

\noindent \textbf{Model-free CRL}. As compared to the model-based algorithms,  existing results for the model-free algorithms are fewer. The  authors of \cite{achiam2017constrained} proposed a constrained policy optimization (CPO) algorithm and authors of \cite{tessler2018reward} proposed a reward constrained policy optimization (RCPO) algorithm. The authors of \cite{gattami2021reinforcement} related CMDP to  zero-sum Markov-Bandit games, and provided efficient solutions for CMDP. However, these works did not provide any convergence rates for their algorithms. Furthermore, the authors in \cite{ding2020natural} proposed a primal-dual natural policy gradient algorithm both in tabular and general settings and have provided a regret and constraint violation analysis. A primal only constraint rectified policy optimization (CRPO) algorithm is proposed in \cite{xu2021crpo} to achieve sublinear convergence rate to the global optimal policy and  sublinear convergence rate for the constraint violations as well. Most of the existing approaches with specific sample complexity and constraint violation error bound are summarized in Table \ref{table2}. Recently, \cite{chen2021primal} translated the constrained RL problem into a saddle point problem and proposed a primal-dual algorithm which achieved $\tdO(1/\epsilon^2)$ sample complexity to obtain $\epsilon$-optimal $\epsilon$- feasible solution.  However, the policy is considered as the primal variable in the algorithm and an estimation of Q-table is required in the primal update, which introduces extra sample complexity and computation complexity. 

\noindent \textbf{Online Constrained Convex Optimization}. In the field of standard online convex optimization with constraints, the problem of reducing the regret and constraint violation is well investigated in the recent years \cite{mahdavi2012trading,9439069}. Recently, the authors of  \cite{9439069} utilized the idea of conservative constraints to achieve $\epsilon$-optimal solution with $\tdO(1/\epsilon^2)$ sample complexity and zero constraint violations. We  utilize the conservative idea in this work to more complex setting of constrained RL problems to achieve zero constraint violations.


\section{Problem Formulation}
Consider an infinite horizon discounted reward constrained Markov Decision Process (CMDP) which is defined by tuple $(\ccalS,\ccalA,\bbP,\bbr, \bbg^i, I, \gamma,\bbrho)$. In this model, $\ccalS$ denotes the finite state space (with $|\mathcal{S}|$ number of states), $\ccalA$ is the finite action space (with $|\mathcal{A}|$ number of actions), and $\bbP:\ccalS\times\ccalA\rightarrow\Delta^{|\ccalS|}$  gives the transition dynamics of the CMDP (where $\Delta^d$ denotes the probability simplex in $d$ dimension). More specifically, $\bbP(\cdot\vert s,a)$ describes the probability distribution of next state conditioned on the current state $s$ and action $a$. We denote $\bbP(s'|s,a)$ as $\bbP_a(s,s')$ for simplicity. In the CMDP tuple,  $\bbr:\mathcal{S}\times\mathcal{A}\rightarrow [0,1]$ is the reward function, $\bbg^i:\mathcal{S}\times\mathcal{A}\rightarrow [-1,1]$ is the $i^{th}$ constraint cost function, and $I$ denotes the number of constraints. Further, $\gamma$ is the discounted factor and $\bbrho$ is the initial distribution of the states.

It is well known that there always exists a deterministic optimal policy for unconstrained MDP problem. However, the optimal policy for CMDP could be stochastic. Furthermore, \cite[Theorem 3.1]{altman1999constrained} shows that it is enough to consider stationary stochastic policies.  Thus, let us define the stationary stochastic policy as $\pi:\mathcal{S}\rightarrow\Delta^{\vert\mathcal{A}\vert}$, which maps a state to a distribution in the action space. The value functions for both reward and constraint's cost following such policy $\pi$ are given by \cite{chen2021primal} 
\begin{align}\label{value_function}
	V_\bbr^{\pi}(s)&=(1-\gamma)\bbE\bigg[\sum\nolimits_{t=0}^{\infty}\gamma^t \bbr(s_t,a_t)\bigg],\nonumber\\
	V_{\bbg^i}^{\pi}(s)&=(1-\gamma)\bbE\bigg[\sum\nolimits_{t=0}^{\infty}\gamma^t \bbg^i(s_t,a_t)\bigg],
\end{align}
for all $s\in\mathcal{S}$. At each instant $t$, for given state $s_t$ and action $a_t\sim\pi(\cdot|s_t)$, the next state $s_{t+1}$ is distributed as $s_{t+1}\sim\bbP(\cdot|s_t,a_t)$. The expectation in \eqref{value_function} is with respect to the transition dynamics of the environment and the stochastic policy $\pi$.  Let us denote $J_\bbr^{\pi}$ and $J_{\bbg^i}^{\pi}$ as the expected value function w.r.t. the  initial distribution such as
\begin{align}
    J_{\bbr,\bbrho}(\pi)=&\mbE_{s_0\sim\bbrho}[V_\bbr^{\pi}(s_0)]\;,
    \nonumber
    \\
    J_{\bbg^i,\bbrho}(\pi)=&\mbE_{s_0\sim\bbrho}[V_{\bbg^i}^{\pi}(s_0)], \ \ \forall i.
\end{align}
The goal here is to maximize the expected value function for reward $J_{\bbr,\bbrho}(\pi)$ with respect to policy $\pi$ subject to satisfying the constraints value function, formulated as 
\begin{equation}\label{eq:formulation_policy}
    \begin{aligned}
    \max_{\pi} &\quad J_{\bbr,\bbrho}(\pi)\\
    s.t. &\quad J_{\bbg^i,\bbrho}(\pi)\geq 0 \quad \forall i\in[I],
    \end{aligned}
\end{equation}
where $[I]$ denoted the index of constraints. We note that the problem in Eq. \eqref{eq:formulation_policy} optimizes in the policy space. However, the value function is a non-linear function with respect to policy. It is well known that the problem in \eqref{eq:formulation_policy} can be equivalently written in terms of a linear program in the occupancy measure space \cite{altman1999constrained}. Thus, we introduce the concept of occupancy measure as follows. For a given policy $\pi$, the occupancy measure is defined as
\begin{equation}\label{eq:definition_occupancy_measure}
    \bblambda(s,a)=(1-\gamma)\Big(\sum\nolimits_{t=0}^{\infty}\gamma^t\mathbb{P}(s_t=s, a_t=a)\Big),
\end{equation}
where $s_0\sim\bbrho$, $a_t\sim\pi(\cdot|s_T)$, $\mathbb{P}(s_t=s, a_t=a)$ is the probability of visiting state $s$ and taking action $a$ in step $t$. By the definition in \eqref{eq:definition_occupancy_measure}, the value functions in Eq. \eqref{eq:formulation_policy} can be expressed as
\begin{align}\label{linear}
	\mbE_{s\sim\bbrho}[V_\bbr^{\pi}(s)]=\left<\bblambda,\bbr\right>,\quad	\mbE_{s\sim\bbrho}[V_{\bbg^i}^{\pi}(s)]=\left<\bblambda,\bbg^i\right>.
\end{align}
Following the equivalence in \eqref{linear}, the original problem of \eqref{eq:formulation_policy} in policy space is equivalent to the following Linear program (LP) in the occupancy measure space given by \cite[Theorem 3.3(a)(b)]{altman1999constrained}
\begin{equation}\label{eq:formulation_occupany_measure}
    \begin{aligned}
    \max_{\bblambda\geq \bbzero} &\quad \left<\bblambda,\bbr\right>\\
    s.t. &\quad \left<\bblambda,\bbg^i\right>\geq0\quad \forall i\in[I],\\
    &\quad \sum\nolimits_{a\in\mathcal{A}}(\bbI-\gamma \bbP_a^T)\bblambda_a=(1-\gamma)\bbrho,
    \end{aligned}
\end{equation}
where $\bblambda_a=[\bblambda{(1,a)},\cdots,\bblambda{(|\ccalS|, a)}]\in\mathbb{R}^{|\mathcal{S}|}$ is the $a^{th}$ column of $\bblambda$. Notice that the equality constant in Eq. \eqref{eq:formulation_occupany_measure} sums up to 1, which means $\bblambda$ is a valid probability measure and we define $\Lambda:=\{\bblambda|\sum_{s,a}\bblambda{(s,a)}=1\}$ as a probability simplex. For a  given occupancy measure $\bblambda$, we can recover the policy $\pi_{\bblambda}$ as 
\begin{equation}
	\pi_{\bblambda}(a|s)=\frac{\bblambda{(s,a)}}{\sum_{a'}\bblambda{(s,a')}}.
\end{equation}
By \cite[Theorem 3.3(c)]{altman1999constrained}, it is known that if $\bblambda^*$ is the optimal solution for problem in Eq. \eqref{eq:formulation_occupany_measure},  then $\pi_{\bblambda^*}$ will be an optimal policy for problem in Eq. \eqref{eq:formulation_policy}.

\section{Algorithm Development}

The problem in \eqref{eq:formulation_policy} is well studied in the literature and various model-based algorithms are proposed \cite{ding2020natural,xu2021crpo}. All of the existing approaches are able to achieve an objective optimality gap of $\tilde{\mathcal{O}}(1/\epsilon^2)$ with constraint violations  of $\tilde{\mathcal{O}}(\epsilon)$ where $\epsilon$ is the accuracy parameter. Recently, the authors in \cite{wei2021provably} proposed a triple-Q algorithm to achieve zero constraint violations at the cost of achieving objective optimality gap of $\tilde{\mathcal{O}}(1/\epsilon^5)$. The goal here is to develop an algorithm to achieve zero constraint violation without suffering for the objective optimality gap. To do so, we consider the conservative stochastic optimization framework presented in \cite{9439069} and utilize it to propose a conservative version of the constrained MDPs problem in \eqref{eq:formulation_occupany_measure} as 
\begin{subequations}
\begin{alignat}{3}
\max_{\bblambda\geq \bbzero} &\quad \left<\bblambda,\bbr\right>\label{formulation_objective}
\\
\text{s.t.} &\quad \left<\bblambda,\bbg^i\right>\geq\kappa\quad \forall i\in[I],\label{formulation_conservative1}
\\
&\quad \sum_{a\in\mathcal{A}}(\bbI-\gamma \bbP_a^T)\bblambda_a=(1-\gamma)\bbrho,  \label{formulation_conservative3}
\end{alignat}
\label{eq:formulation_conservative}
\end{subequations}
\noindent where $\kappa>0$ is tuning parameter which controls the conservative nature for the constraints. The idea is to consider a tighter version (controlled by $\kappa$) of the original inequality constraint in \eqref{eq:formulation_occupany_measure} which allows us to achieve zero constraint violation for CMDPs which does not hold for any existing algorithm. We will specify the specific value of the parameter $\kappa$ later in the convergence analysis section (cf. Sec. \ref{convergence_analysis}).  Note that the conservative version of the problem in Eq. \eqref{eq:formulation_conservative} is still a LP and hence the strong duality holds, which motivates us to develop the primal-dual based algorithms to solve the problem in \eqref{eq:formulation_conservative}. By the KKT theorem, the problem in Eq. \eqref{eq:formulation_conservative} is equivalent to the following a saddle point problem which we obtain by writing the Lagrangian of \eqref{eq:formulation_conservative} as 
    \begin{align}
    \ccalL(\bblambda,\bbu,\bbv)=&\left<\bblambda,\bbr\right>+\sum\nolimits_{i\in[I]}u^i\left(\langle\bbg^i,\bblambda\rangle-\kappa\right)\nonumber
\\
    &+(1-\gamma)\left<\bbrho,\bbv\right>+\sum_{a\in\mathcal{A}}\bblambda_a^T(\gamma\bbP_a-\bbI)\bbv
    \\
    =&\left<\bblambda,\bbr\right>+\left<\bbu,\bbG^T\bblambda-\kappa\bbone\right>+(1-\gamma)\left<\bbrho,\bbv\right>\nonumber
\\
    &+\sum_{a\in\mathcal{A}} \left<\bblambda_a,(\gamma\bbP_a-\bbI)\bbv\right>,\label{eq_Lagrange}
    \end{align}
where $\bbu:=[u_1,u_2,\cdots,u^i]^T$ is a column vector of the dual variable corresponding to constraints in \eqref{formulation_conservative1}, $\bbv$ is the dual variable corresponding to equality constraint in \eqref{formulation_conservative3}, $\bbG:=[\bbg^1, \cdots, \bbg^{I}] \in\mathbb{R}^{\mathcal{S}\times\mathcal{A}\times I}$ collects all the $\bbg^i$'s corresponding to $I$ constraints in \eqref{formulation_conservative1}, and $\bbone$ is the all one column vector. From the Lagrangian in \eqref{eq_Lagrange}, the equivalent saddle point problem is given by 
\begin{equation}\label{eq:formulation_saddle}
    \max_{\bblambda\geq\bbzero}\min_{\bbu\geq 0, \bbv}\ccalL(\bblambda,\bbu,\bbv).
\end{equation}
Since the Lagrange function is linear w.r.t. both primal and dual variable, it is known that the saddle point can be solved by the primal-dual gradient descent \cite{nedic2009subgradient}. However, since we assume that the transition dynamics $P_a$ is unknown, then directly evaluating  gradients of  Lagrangian in \eqref{eq:formulation_saddle} with respect to primal and dual variables is not possible. To circumvent this issue, we resort to a randomized primal dual approach proposed in \cite{wang2020randomized} to solve the problem in a model-free stochastic manner. We assume the presence of a generative model which is a common assumption in control/RL applications. The generative model results the next state $s'$ for a given state $s$ and action $a$ in the model and provides a reward $\bbr(s,a)$ to train the policy. To this end, we consider a distribution $\bbzeta$ over $\ccalS\times\ccalA$ to write a stochastic approximation for the  Lagrangian $\ccalL(\bblambda,\bbu,\bbv)$ in \eqref{eq:formulation_saddle} as
	\begin{align}\label{eq_sampled_Lagrange}
	&\ccalL_{(s,a,s'),s_0}^{\bbzeta}(\bblambda,\bbu,\bbv)
	\\
	&\quad=
	(1-\gamma)\bbv{(s_0)}+ \bbone_{\bbzeta{(s,a)}>0}\cdot\frac{\bblambda{(s,a)}(Z_{sa}-M)}{\bbzeta(s,a)}-\sum\nolimits_{i\in[I]}\kappa u_i,\nonumber
	\end{align}
	where
	\begin{align}
Z_{sa}:=\bbr{(s,a)}+\gamma \bbv{(s')}-\bbv(s)+\sum_{i\in[I]}u_i \bbg^i{(s,a)}, 	\end{align}
and $s_0\sim\bbrho$, the current state action pair $(s,a)\sim\bbzeta$, and the next state $s'\sim\bbP(\cdot\vert s,a)$. We remark that $M$ in \eqref{gradient_lambda} is a shift parameter which is used in the convergence analysis. We notice that the stochastic approximation $\ccalL_{(s,a,s'),s_0}^{\bbzeta}(\bblambda,\bbu,\bbv)$ in \eqref{eq_sampled_Lagrange} is an unbiased estimator for the Lagrangian function in Eq. \eqref{eq_Lagrange} which implies that $\mbE_{\bbzeta\times\bbP(\cdot\vert s,a), \bbrho}[\ccalL_{(s,a,s'),s_0}^{\bbzeta}]=\ccalL(\bblambda,\bbu,\bbv)$ with $\text{supp}(\bbzeta)\subset \text{supp}(\bblambda)$. We could see $\bbzeta$ as a adaptive state-action pair distribution which helps to control the variance of the stochastic gradient estimator. The stochastic gradients of the Lagrangian with respect to primal and dual variables are given by
%
%
\begin{align}
    \hat{\nabla}_{\bblambda}\ccalL(\bblambda,\bbu,\bbv)&=\bbone_{\bbzeta{(s,a)}>0}\cdot\frac{Z_{sa}-M}{\bbzeta{(s,a)}}\cdot\bbE_{sa},\label{gradient_lambda}
    \\
    \hat{\nabla}_{\bbu}\ccalL(\bblambda,\bbu,\bbv)&=\bbone_{\bbzeta{(s,a)}>0}\cdot\frac{\bblambda{(s,a)}\bbg{(s,a)}}{\bbzeta{(s,a)}}-\kappa\bbone,\label{eq:estimated_gradient}
    \\
    \hat{\nabla}_{\bbv}\ccalL(\bblambda,\bbu,\bbv)&\!=\!\bbe({s_0}')\!+\!\bbone_{\bbzeta{(s,a)}\!>\!0}\cdot\frac{\bblambda{(s,a)}(\gamma\bbe{(s')}-\bbe{(s)})}{\bbzeta{(s,a)}},
\end{align}
 where  we define $\bbe({s_0}')=(1-\gamma)\bbe({s_0})$ with $\bbe{(s_0)}\in\mbR^{|\ccalS|}$ being a column vector with all entries equal to $0$ except only the $s^{th}$ entry equal to $1$, $\bbE_{sa}\in\mbR^{|\ccalS|\times|\ccalA|}$ is a matrix with only the $(s,a)$ entry equaling to 1 and all other entries being $0$, and $\bbg{(s,a)}=[\bbg^1{(s,a)},\cdots,\bbg^i{(s,a)}]^T$. 
 
  With all the stochastic gradient definitions in place, we are now ready to present the proposed novel algorithm called Conservative Stochastic Primal-Dual Algorithm (CSPDA) summarized in Algorithm \ref{alg:spdgd}.
\begin{algorithm}[ht]
	\caption{\textbf{C}onservative \textbf{S}tochastic \textbf{P}rimal-\textbf{D}ual \textbf{A}lgorithm (CSPDA) for constrained RL }
	\label{alg:spdgd}
	\textbf{Input}: Sample size T. Initial distribution $\bbrho$. Discounted factor $\gamma$.\\
	\textbf{Parameter}: Step-size $\alpha,\beta$. Slater variable $\varphi$, Shift-parameter $M$, Conservative variable $\kappa$ and Constant $\delta\in(0,\frac{1}{2})$\\
	\textbf{Output}: $\bbarlambda=\frac{1}{T}\sum_{t=1}^T\bblambda^t$, $\bbaru=\frac{1}{T}\sum_{t=1}^T\bbu^t$ and $\bbarv=\frac{1}{T}\sum_{t=1}^T\bbv^t$
	\begin{algorithmic}[1] 
		\STATE Initialize $\bbu^1\in\ccalU$, $\bbv^1\in\ccalV$ and $\bblambda^1=\frac{1}{|\ccalS||\ccalA|}\cdot\bbone$
		\FOR{$t=1,2,...,T$} 
		\STATE $\bbzeta^t:=(1-\delta)\bblambda^t+\frac{\delta}{|\ccalS||\ccalA|}\bbone$
		\STATE Sample $(s_t,a_t)\sim\bbzeta^t$ and $s_0\sim\bbrho$
		\STATE Sample $s_t'\sim\ccalP(\cdot|a_t,s_t)$ from the generative model and observe reward $r_{sa}$
		\STATE Update value functions as $\bbu$ and $\bbv$ as
		\begin{equation}\label{eq:update_u}
			\bbu^{t+1}=\Pi_{\ccalU}(\bbu^t-\alpha\hat{\nabla}_{\bbu}\ccalL(\bblambda^t,\bbu^t,\bbv^t))
		\end{equation}
		\begin{equation}\label{eq:update_v}
			\bbv^{t+1}=\Pi_{\ccalV}(\bbv^t-\alpha\hat{\nabla}_{\bbv}\ccalL(\bblambda^t,\bbu^t,\bbv^t))
		\end{equation}
		\STATE Update occupancy measure as
		\begin{align}\label{eq:update_lambda1}
			\bblambda^{t+\frac{1}{2}}=&\arg\max_{\bblambda}\left<\hat{\nabla}_{\bblambda}\ccalL(\bblambda^t,\bbu^t,\bbv^t),\bblambda-\bblambda^t\right>\nonumber
			\\
			&-\frac{1}{\beta}KL(\bblambda\Vert\bblambda^t)
			\\
			\bblambda^{t+1}=&{\bblambda^{t+\frac{1}{2}}}/{\Vert\bblambda^{t+\frac{1}{2}}\Vert_1}\label{eq:update_lambda2}
		\end{align}
		\ENDFOR
	\end{algorithmic}
\end{algorithm}
First, we initialize the primal and dual variables in step 1. In step 4 and 5, we sample $(s_t,a_t,s_0)$ and then obtain $s_t'$ from the generative model. In step 6, we update the dual variables by the gradient descent step and a projection opration (See Lemma \ref{lem:bounded_dual_u} for the definition of $\ccalU$ and $\ccalV$). In step 7, we utilize the mirror ascent update and utilize the KL divergence as the Bregman divergence to obtain tight dependencies on the convergence rate analysis similar to \cite{wang2020randomized}. Then, the occupancy measure is normalized so that it remains a valid distribution. 
\section{Convergence Analysis}\label{convergence_analysis}
In this section, we study the convergence rate of the proposed Algorithm \ref{alg:spdgd} in detail. We start by analyzing the duality gap for the saddle point problem in \eqref{eq:formulation_saddle}. Then we show that the output of Algorithm \ref{alg:spdgd} given by $\bar\bblambda$ is $\eps$-optimal for the conservative version of the dual domain optimization problem in \eqref{eq:formulation_conservative} of CMDPs. Finally, we perform the analysis in the policy space and present the main results of this work. We prove that the induced policy $\bbarpi$ by the optimal occupancy measure $\barblambda$ is also $\eps$-optimal and achieves zero constraint violation at the same time. 
%
Before discussing the convergence analysis, we provide a detailed description of the assumptions required for the work in this paper.   
\begin{assumption}(Strict feasibility)\label{ass:Slater}
	There exists a strictly feasible occupancy measure $\hat \bblambda \geq 0$ to problem in \eqref{eq:formulation_conservative} such that
	\begin{equation}
		\begin{aligned}
		&\left<\hat{\bblambda}, \bbg^i\right> -\varphi \geq 0 \quad\forall i\in[I]\\
		\text{and}\quad & \sum\nolimits_{a}(\bbI-\gamma \bbP_a^T)\hat{\bblambda}_a=(1-\gamma)\bbrho
		\end{aligned}
	\end{equation}
for some $\varphi>0$.
\end{assumption}

Assumption \ref{ass:Slater} is the stronger version of the popular Slater's condition which is often required in the analysis of convex optimization problems. A similar assumption is considered in the literature  as well \cite{mahdavi2012trading,9439069} and also helps to ensure the boundedness of dual variables (see Lemma \ref{lem:bounded_dual_u}). We remark that Assumption \ref{ass:Slater} is unique to utilize the idea of conservative constraints to obtain zero constraint violations in the long term. To be specific, Assumption \ref{ass:Slater} plays a crucial rule in proving that the optimal objective values of the original problem in (6) and it's conservative version in (8) are $\mathcal{O}(\kappa)$ apart as mentioned in Lemma \ref{lem:diff_conservative}.

\subsection{Convergence Analysis for Duality Gap}\label{sec:duality_gap}
In order to bound the duality gap, we note that the standard analysis of saddle point algorithms \cite{nedic2009subgradient,9439069} is not applicable because of the unbounded noise introduced into the updates due to the use of adaptive sampling of the state-action pairs \cite{wang2020randomized,Junyu}. Therefore, it becomes necessary to obtain explicit bounds on the gradient as well as the variance of the stochastic estimates of the gradients. We start the analysis by consider the form of Slater's condition in Assumption \ref{ass:Slater}, and show that the dual variables $\bbu$ and $\bbv$ are bounded (Note that the optimal dual variables now will be function of conservative variable $\kappa$ as well).
\begin{lemma}[Bounded dual variable $\bbu$ and $\bbv$]\label{lem:bounded_dual_u}
    Under the Assumption \ref{ass:Slater}, the optimal dual variables $\bbu^*_{\kappa}$ and $\bbv_{\kappa}^*$ are bounded. Formally, it holds that $\Vert\bbu^*_{\kappa}\Vert_1\leq \frac{2}{\varphi}$ and  $\Vert\bbv_{\kappa}^*\Vert_{\infty}\leq \frac{1}{1-\gamma}+\frac{2}{(1-\gamma)\varphi}$.  
\end{lemma}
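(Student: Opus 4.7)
The plan is to exploit strong duality of the linear program \eqref{eq:formulation_conservative} together with the Slater margin of Assumption \ref{ass:Slater}, bounding $\bbu^*_{\kappa}$ and $\bbv^*_{\kappa}$ by different arguments: a direct Lagrangian evaluation for the inequality multipliers, and an identification of $\bbv^*_{\kappa}$ with the value function of an auxiliary unconstrained MDP.

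For $\|\bbu^*_{\kappa}\|_1$, I would evaluate $\ccalL(\hat{\bblambda},\bbu^*_{\kappa},\bbv^*_{\kappa})$ at the strictly feasible point $\hat{\bblambda}$ from Assumption \ref{ass:Slater}. Because $\hat{\bblambda}$ satisfies the flow equality $\sum_a(\bbI-\gamma\bbP_a^T)\hat{\bblambda}_a=(1-\gamma)\bbrho$, the two groups of $\bbv$-terms in \eqref{eq_Lagrange} cancel exactly, leaving
\[
\langle\hat{\bblambda},\bbr\rangle + \sum_{i\in[I]} u^{i,*}_{\kappa}\bigl(\langle\hat{\bblambda},\bbg^i\rangle-\kappa\bigr) \;\geq\; (\varphi-\kappa)\|\bbu^*_{\kappa}\|_1,
\]
where I used $\bbu^*_{\kappa}\geq\bbzero$ and the Slater margin $\langle\hat{\bblambda},\bbg^i\rangle\geq\varphi$. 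Strong LP duality upper-bounds this by the primal optimum $p^*_{\kappa}$, which is itself at most $1$ since every occupancy measure sums to one and $\bbr\in[0,1]^{\ccalS\times\ccalA}$. Choosing $\kappa\leq\varphi/2$, which will be the regime enforced in Section \ref{convergence_analysis}, then gives $\|\bbu^*_{\kappa}\|_1\leq 1/(\varphi-\kappa)\leq 2/\varphi$.

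For $\|\bbv^*_{\kappa}\|_\infty$, I would exploit that $\ccalL$ is linear in $\bblambda\geq\bbzero$, so finiteness of $\max_{\bblambda\geq\bbzero}\ccalL(\bblambda,\bbu^*_{\kappa},\bbv^*_{\kappa})$ forces
\[
\bbv^*_{\kappa}(s)\;\geq\; \bbr(s,a)+\sum_{i} u^{i,*}_{\kappa}\bbg^i(s,a)+\gamma\sum_{s'}\bbP_a(s,s')\bbv^*_{\kappa}(s')\quad\forall\,(s,a),
\]
with equality on the support of $\bblambda^*_{\kappa}$ by complementary slackness. Recognising this as the Bellman optimality system for an unconstrained MDP with shifted reward $\tilde{\bbr}:=\bbr+\sum_i u^{i,*}_{\kappa}\bbg^i$, whose sup-norm is bounded by $1+\|\bbu^*_{\kappa}\|_1\leq 1+2/\varphi$, the standard discount-contraction bound for discounted value functions delivers $\|\bbv^*_{\kappa}\|_\infty\leq (1+2/\varphi)/(1-\gamma)=1/(1-\gamma)+2/((1-\gamma)\varphi)$, matching the claim.

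The main obstacle will be the second step: outside $\mathrm{supp}(\bblambda^*_{\kappa})$ the KKT inequalities do not pin $\bbv^*_{\kappa}$ down uniquely, so the argument does not bound every dual optimum. The lemma must therefore be read as asserting the existence of a dual optimum inside the stated $\ell_\infty$-ball — precisely the property needed to set the projection set $\ccalV$ in Algorithm \ref{alg:spdgd}. Once this is acknowledged, the rest reduces to routine LP duality and the geometric-series bound for the discount factor.
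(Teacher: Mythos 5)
Your proof is correct and follows essentially the same route as the paper's: the $\Vert\bbu^*_\kappa\Vert_1$ bound is obtained identically (evaluate the Lagrangian at the Slater point so the $\bbv$-terms cancel, cap the result by the primal optimum $\le 1$ via duality, and use $\kappa\le\varphi/2$), and the $\Vert\bbv^*_\kappa\Vert_\infty$ bound rests on the same stationarity relation $\bbv(s)=\bbr(s,a)+\sum_i u_i\bbg^i(s,a)+\gamma\bbP_{as}^T\bbv$ together with the $(1-\gamma)$ geometric-series contraction. The only substantive difference is that you explicitly note the stationarity equality is guaranteed only on $\text{supp}(\bblambda^*_\kappa)$ and accordingly read the lemma as exhibiting \emph{one} dual optimum in the stated ball (which is all that is needed to define $\ccalU$ and $\ccalV$); the paper's proof instead asserts $\nabla_{\bblambda}\ccalL=0$ at every $(s,a)$ for a fixed action $\bbara$, which carries exactly the gap you identify, so your reading is the more careful one.
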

The proof of Lemma \ref{lem:bounded_dual_u} is provided in Appendix \ref{proof_lemma_1}.  As a result, we define $\ccalU:=\big\{\bbu~|~\Vert\bbu\Vert_1\leq \frac{4}{\varphi}\big\}$ and $\ccalV:=\big\{\bbv~|~\Vert\bbv\Vert_\infty\leq 2[\frac{1}{1-\gamma}+\frac{2}{(1-\gamma)\varphi}]\big\}$. 
%
 %
 Since we have mathematically defined the set $\mathcal{U}$ and $\mathcal{V}$,  now we rewrite the saddle point formulation in \eqref{eq:formulation_saddle} as 
 \begin{equation}\label{eq:formulation_saddle2}
 	\max_{\bblambda\in\Lambda}\min_{(\bbu\in\mathcal{U}, \bbv\in\mathcal{V})}\ccalL(\bblambda,\bbu,\bbv).
 \end{equation}
 In the analysis presented next, we will work with the problem in \eqref{eq:formulation_saddle2}. First, we decompose the duality gap in Lemma \ref{lem:dual_gap_bound} as follows. 
\begin{lemma}[Duality gap]\label{lem:dual_gap_bound}
	For any dual variables $\bbu,\bbv$, let us define $\bbw=[\bbu^T,\bbv^T]^T$,  and consider $\bar\bbu, \bar\bbv, \bar\bblambda$ as defined in Algorithm \ref{alg:spdgd}, the duality gap can be bounded as
		\begin{align}\label{eq:duality_gap_decompostion}
			\ccalL&(\barbu,\barbv,\bblambda_{\bbkappa}^*)-\ccalL(\bbu,\bbv,\barblambda)\nonumber
			\\
			&\leq\frac{1}{T}\sum_{t=1}^{T}\bigg[\underbrace{\left<\nabla_{\bblambda}\ccalL(\bbw^t,\bblambda^t),\bblambda_{\bbkappa}^*-\bblambda^t\right>}_{(I)}
			\nonumber
			\\ &\hspace{2cm}	+\underbrace{\left<\nabla_{\bbw}\ccalL(\bbw^t,\bblambda^t),\bbw^t-\bbw\right>}_{(II)}\bigg].
		\end{align}
\end{lemma}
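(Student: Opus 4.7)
The plan is to exploit the fact that the Lagrangian $\mathcal{L}(\bblambda, \bbw)$, with $\bbw = [\bbu^T, \bbv^T]^T$, is separately affine in $\bblambda$ and in $\bbw$. Indeed, inspecting \eqref{eq_Lagrange}, every term is either linear in $\bblambda$ alone, linear in $\bbw$ alone, or a bilinear cross term of the form $\langle \bbu, \bbG^T\bblambda\rangle$ or $\sum_a\langle \bblambda_a, (\gamma\bbP_a-\bbI)\bbv\rangle$. This bilinearity means that Jensen's inequality becomes an equality when averaging, and that first-order Taylor expansions in either argument are exact.

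First I would use linearity of $\mathcal{L}$ in $\bbw$ (with $\bblambda^*_{\bbkappa}$ held fixed) and linearity in $\bblambda$ (with $\bbw$ held fixed) together with the definitions $\bar\bbw = \frac{1}{T}\sum_t \bbw^t$ and $\bar\bblambda = \frac{1}{T}\sum_t \bblambda^t$ to write
\begin{equation*}
\mathcal{L}(\bar\bbw,\bblambda^*_{\bbkappa}) - \mathcal{L}(\bbw, \bar\bblambda) \;=\; \frac{1}{T}\sum_{t=1}^T \Big[\mathcal{L}(\bbw^t, \bblambda^*_{\bbkappa}) - \mathcal{L}(\bbw, \bblambda^t)\Big].
\end{equation*}
Next I would insert and subtract $\mathcal{L}(\bbw^t, \bblambda^t)$ inside each summand to split it as
\begin{equation*}
\mathcal{L}(\bbw^t,\bblambda^*_{\bbkappa}) - \mathcal{L}(\bbw,\bblambda^t) = \Big[\mathcal{L}(\bbw^t,\bblambda^*_{\bbkappa}) - \mathcal{L}(\bbw^t,\bblambda^t)\Big] + \Big[\mathcal{L}(\bbw^t,\bblambda^t) - \mathcal{L}(\bbw,\bblambda^t)\Big].
\end{equation*}

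For the first bracket, I hold $\bbw^t$ fixed and use that $\mathcal{L}(\bbw^t,\cdot)$ is affine in $\bblambda$; hence the first-order expansion is exact and the bracket equals $\langle \nabla_{\bblambda}\mathcal{L}(\bbw^t,\bblambda^t), \bblambda^*_{\bbkappa} - \bblambda^t\rangle$. Symmetrically, holding $\bblambda^t$ fixed, $\mathcal{L}(\cdot,\bblambda^t)$ is affine in $\bbw$, so the second bracket equals $\langle \nabla_{\bbw}\mathcal{L}(\bbw^t,\bblambda^t), \bbw^t - \bbw\rangle$. Averaging over $t$ yields the claim (in fact as an equality, so the inequality stated in the lemma is automatic).

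There is really no hard step here — the proof is essentially bookkeeping. The only subtlety I would flag is making sure to apply the linearity-in-each-argument separately: linearity of $\mathcal{L}$ in $\bblambda$ requires $\bbw$ to be held fixed, and vice versa, because the cross-bilinear terms prevent joint linearity in $(\bblambda,\bbw)$. Once the averaging trick is applied on the correct argument and the telescoping insertion of $\mathcal{L}(\bbw^t,\bblambda^t)$ is carried out, the two inner-product terms $(I)$ and $(II)$ appear immediately from the exact affine expansions.
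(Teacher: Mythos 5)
Your proof is correct and follows essentially the same route as the paper's: average over $t$, insert $\ccalL(\bbw^t,\bblambda^t)$, and expand each bracket to a first-order term. The only (harmless) difference is that you observe the bilinear structure makes every step an equality, whereas the paper invokes Jensen's inequality and convexity/concavity — which for an affine function reduce to the same equalities anyway.
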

The bound on terms $I$ and $II$ in the statement of Lemma \ref{lem:dual_gap_bound} are provided in Lemma \ref{lem:bound_primal} and \ref{lem:bound_dual} in the Appendix \ref{proof_theorem_1} (see proofs in Appendix \ref{lemma_4} and \ref{lemma_5}, respectively). This helps to  prove the main result in Theorem \ref{thm:duality_gap}, which establishes the final bound on the duality gap as follows.
\begin{theorem}\label{thm:duality_gap}
    Define $(\bbu^\dagger,\bbv^\dagger):=\arg\min_{\bbu,\bbv}\ccalL(\bbu,\bbv,\barblambda)$. Recall $\bblambda^*_{\bbkappa}$ is the best solution for the conservative Lagrange problem. The duality gap of the Algorithm \ref{alg:spdgd} is bounded as
    	\begin{align}\label{eq:duality_gap_bound}
        	&\mbE[\ccalL(\barbu,\barbv,\bblambda_{\bbkappa}^*)-\ccalL(\bbu^\dagger,\bbv^\dagger,\barblambda)]\nonumber\\
    		&\hspace{1cm}\leq \ccalO\bigg(\sqrt{\frac{I|\ccalS||\ccalA|\log(|\ccalS||\ccalA|)}{T}}\cdot \frac{1}{(1-\gamma)\varphi}\bigg).
    	\end{align}
\end{theorem}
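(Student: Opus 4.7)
The plan is to apply Lemma~\ref{lem:dual_gap_bound} and bound each of the two running averages on its right-hand side separately, then combine and tune the step sizes $\alpha,\beta$ together with the mixing constant $\delta$.

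For term~(II), I would treat the $(\bbu,\bbv)$ updates as standard projected stochastic gradient descent on the convex compact set $\ccalU\times\ccalV$. Using non-expansiveness of Euclidean projection and telescoping $\Vert\bbw^{t+1}-\bbw\Vert_2^2$, the classical descent lemma yields
\begin{equation*}
\frac{1}{T}\sum_{t=1}^T\mbE[\langle\nabla_{\bbw}\ccalL(\bbw^t,\bblambda^t),\bbw^t-\bbw\rangle]\leq \frac{\Vert\bbw^1-\bbw\Vert_2^2}{2\alpha T}+\frac{\alpha}{2T}\sum_{t=1}^T\mbE[\Vert\hat{\nabla}_{\bbw}\ccalL\Vert_2^2].
\end{equation*}
Lemma~\ref{lem:bounded_dual_u} controls the diameter of $\ccalU\times\ccalV$ by a factor scaling like $(I+|\ccalS|)/((1-\gamma)\varphi)^2$, and the second moment of $\hat{\nabla}_{\bbw}\ccalL$ is tamed using the mixture structure $\bbzeta^t=(1-\delta)\bblambda^t+\tfrac{\delta}{|\ccalS||\ccalA|}\bbone$, which gives the importance-weight bound $\bblambda(s,a)/\bbzeta(s,a)\leq 1/(1-\delta)$, combined with $|\bbg^i|\leq 1$ and the single-entry structure of the vectors $\bbe(\cdot)$ that appear in $\hat{\nabla}_{\bbv}\ccalL$.

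For term~(I), I would invoke the stochastic mirror-ascent regret inequality with KL divergence as the Bregman divergence. The three-point identity
\begin{equation*}
\langle\hat{\nabla}_{\bblambda}\ccalL,\bblambda-\bblambda^t\rangle\leq\tfrac{1}{\beta}[KL(\bblambda\Vert\bblambda^t)-KL(\bblambda\Vert\bblambda^{t+\frac{1}{2}})]+\tfrac{\beta}{2}\Vert\hat{\nabla}_{\bblambda}\ccalL\Vert_\infty^2,
\end{equation*}
combined with the observation that the $\ell_1$-normalization step in \eqref{eq:update_lambda2} is itself a KL-projection onto the simplex (and therefore can only decrease $KL(\bblambda^*_{\bbkappa}\Vert\cdot)$), produces an averaged regret of the form $\log(|\ccalS||\ccalA|)/(\beta T)+\mathcal{O}(\beta\,\mbE\Vert\hat{\nabla}_{\bblambda}\ccalL\Vert_\infty^2)$, using $KL(\bblambda^*_{\bbkappa}\Vert\bblambda^1)\leq\log(|\ccalS||\ccalA|)$ from uniform initialization. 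Because $\hat{\nabla}_{\bblambda}\ccalL$ has exactly one nonzero coordinate of magnitude $|Z_{sa}-M|/\bbzeta(s,a)$, its expected squared $\ell_\infty$-norm equals $\sum_{s,a}(Z_{sa}-M)^2/\bbzeta(s,a)$, which is $\mathcal{O}(|\ccalS||\ccalA|/(\delta((1-\gamma)\varphi)^2))$ after bounding $|Z_{sa}-M|$ with Lemma~\ref{lem:bounded_dual_u} and a judicious recentering by the shift parameter $M$.

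To conclude, I would substitute the two bounds into Lemma~\ref{lem:dual_gap_bound}, fix $\delta$ as a constant such as $1/2$, and select $\alpha,\beta=\Theta(1/\sqrt{T})$ to balance the bias and variance contributions, producing the claimed $\mathcal{O}(\sqrt{I|\ccalS||\ccalA|\log(|\ccalS||\ccalA|)/T}\cdot 1/((1-\gamma)\varphi))$ rate. The principal obstacle is the control of $\mbE\Vert\hat{\nabla}_{\bblambda}\ccalL\Vert_\infty^2$: the adaptive importance sampling makes the stochastic gradient of $\bblambda$ unbounded in second moment unless both the mixture lower bound on $\bbzeta^t$ and the KL geometry of the primal update are exploited simultaneously. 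A Euclidean mirror map would force an $|\ccalS||\ccalA|$ rather than $\log(|\ccalS||\ccalA|)$ factor in the primal regret, destroying the optimal scaling in $|\ccalS|,|\ccalA|$ that is essential for the claim.
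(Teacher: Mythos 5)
Your overall architecture matches the paper's: decompose the gap via Lemma \ref{lem:dual_gap_bound}, run projected SGD analysis on the dual block $\bbw=(\bbu,\bbv)$ with the diameter from Lemma \ref{lem:bounded_dual_u} and the importance-weight bound $\bblambda^t_{sa}/\bbzeta^t_{sa}\le 1/(1-\delta)$, run a KL mirror-ascent analysis on the primal block with $KL(\bblambda^*_{\bbkappa}\Vert\bblambda^1)\le\log(|\ccalS||\ccalA|)$, and tune $\alpha,\beta=\Theta(1/\sqrt{T})$. However, your treatment of the primal variance term is where the argument breaks. You take the variance proxy to be $\mbE\Vert\hat{\nabla}_{\bblambda}\ccalL\Vert_\infty^2=\sum_{s,a}(Z_{sa}-M)^2/\bbzeta^t(s,a)$ and assert it is $\mathcal{O}\big(|\ccalS||\ccalA|/(\delta(1-\gamma)^2\varphi^2)\big)$. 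Since $1/\bbzeta^t(s,a)\le|\ccalS||\ccalA|/\delta$ and the sum ranges over $|\ccalS||\ccalA|$ pairs (with near-equality when $\bblambda^t$ is near uniform, e.g.\ at initialization), this quantity is in fact $\Theta\big(|\ccalS|^2|\ccalA|^2/(\delta(1-\gamma)^2\varphi^2)\big)$; balancing it against $\log(|\ccalS||\ccalA|)/(\beta T)$ gives a primal regret of order $|\ccalS||\ccalA|\sqrt{\log(|\ccalS||\ccalA|)/(\delta T)}$, which loses a factor of $\sqrt{|\ccalS||\ccalA|}$ relative to the claim. The paper does not use the generic $\ell_1/\ell_\infty$ mirror-descent regret here. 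It analyzes the exponentiated update directly: the shift $M=4[\frac{1}{\varphi}+\frac{1}{1-\gamma}+\frac{2}{(1-\gamma)\varphi}]$ is chosen so that every coordinate $\Delta_{sa}^t\le 0$, which licenses $e^{x}\le 1+x+x^2/2$ on the log-partition function and produces the \emph{weighted} local norm $\frac{\beta}{2}\sum_{s,a}\lambda_{sa}^t(\Delta_{sa}^t)^2$ as the variance term. In expectation this equals $\sum_{s,a}\lambda_{sa}^t(Z_{sa}-M)^2/\bbzeta_{sa}^t\le\sum_{s,a}(Z_{sa}-M)^2/(1-\delta)=\mathcal{O}\big(|\ccalS||\ccalA|/((1-\delta)(1-\gamma)^2\varphi^2)\big)$: the $\lambda^t_{sa}$ weight cancels against $\bbzeta^t_{sa}\ge(1-\delta)\lambda^t_{sa}$, which is precisely what the adaptive sampling buys. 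You describe $M$ as a ``judicious recentering'' but never identify this sign-forcing role, and without it your stated bound does not hold and the claimed $\sqrt{|\ccalS||\ccalA|\log(|\ccalS||\ccalA|)}$ scaling is not recovered.

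A second, repairable gap: the theorem is stated at the comparator $\bbw^\dagger=\arg\min_{\bbu,\bbv}\ccalL(\bbu,\bbv,\barblambda)$, which depends on $\barblambda$ and hence on the entire random trajectory. Your term-(II) display implicitly drops the cross term $\frac{1}{T}\sum_t\langle\nabla_{\bbw}\ccalL(\bbw^t,\bblambda^t)-\hat{\nabla}_{\bbw}\ccalL(\bbw^t,\bblambda^t),\bbw^t-\bbw\rangle$, whose expectation vanishes for a fixed $\bbw$ but not for $\bbw=\bbw^\dagger$. The paper isolates this as its term $T_6$ and controls it with a martingale-difference inner-product bound (Lemma \ref{lem:Bach2019}), contributing an additional $\mathcal{O}\big(\sqrt{|\ccalS|I/T}/((1-\gamma)\varphi)\big)$; you need an analogous step for the argument to apply to $\bbw^\dagger$ rather than only to deterministic comparators.
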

 The proof of Theorem \ref{thm:duality_gap} is provided in Appendix \ref{proof_theorem_1}. The result in Theorem \ref{thm:duality_gap} describes a sublinear dependence of the duality gap onto the state-action space cardinality upto a logarithmic factor. In the next subsection we utilize the duality gap upper bound to derive a bound on the objective suboptimality and the constraint violation separately.    
 
\subsection{Dual Objective and Constraint Violation}\label{sec:occupancy_measure}
Recall that the saddle point problem in Eq. \eqref{eq:formulation_saddle2} is an equivalent problem to Eq. \eqref{eq:formulation_occupany_measure} where the main difference arises due to the newly introduced conservativeness parameter $\kappa$. Thus, a convergence analysis for duality gap should imply the convergence in occupancy measure in Eq. \eqref{eq:formulation_conservative}. But before that, we need to characterize the gap between the original problem \eqref{eq:formulation_occupany_measure} and its conservative version in \eqref{eq:formulation_conservative}. The following Lemma \ref{lem:diff_conservative} shows that the gap is of the order of parameter $\kappa$.

\begin{lemma}\label{lem:diff_conservative}
	Under Assumption \ref{ass:Slater}, and condition $\kappa\leq \min\{\frac{\varphi}{2}, 1\}$, it holds that the difference of optimal values between original problem and conservative problem is $\ccalO(\kappa)$. Mathematically, it holds that
	$\left<\bblambda^*,\bbr\right>-\left<\bblambda^*_{\kappa},\bbr\right>\leq \frac{\kappa}{\varphi}$.
\end{lemma}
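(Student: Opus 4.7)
The plan is to build a feasible point for the conservative LP in \eqref{eq:formulation_conservative} as a convex combination of the original LP optimum $\bblambda^*$ and the strict Slater point $\hat{\bblambda}$ promised by Assumption \ref{ass:Slater}, and then read off the claimed bound on the objective gap by comparing the reward at this convex combination with the reward at $\bblambda^*$.

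First I would define $\tilde{\bblambda} := (1-\theta)\bblambda^* + \theta\hat{\bblambda}$ for a parameter $\theta \in [0,1]$ to be chosen. Nonnegativity and the flow equation \eqref{formulation_conservative3} are both preserved under convex combinations of points that satisfy them, so $\tilde{\bblambda}$ inherits these properties automatically from $\bblambda^*$ and $\hat{\bblambda}$. For the inequality constraint, linearity of $\langle \cdot, \bbg^i\rangle$ yields
$$\langle\tilde{\bblambda}, \bbg^i\rangle = (1-\theta)\langle\bblambda^*, \bbg^i\rangle + \theta\langle\hat{\bblambda}, \bbg^i\rangle \;\geq\; \theta \varphi,$$
using $\langle\bblambda^*, \bbg^i\rangle \geq 0$ (feasibility of $\bblambda^*$ for the original LP \eqref{eq:formulation_occupany_measure}) together with the Slater margin $\langle\hat{\bblambda}, \bbg^i\rangle \geq \varphi$. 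Choosing $\theta = \kappa/\varphi$, which lies in $[0,1/2] \subseteq [0,1]$ under the hypothesis $\kappa \leq \varphi/2$, guarantees $\langle\tilde{\bblambda}, \bbg^i\rangle \geq \kappa$ for every $i\in[I]$, so $\tilde{\bblambda}$ is feasible for the conservative problem \eqref{eq:formulation_conservative}.

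Next, I would invoke optimality of $\bblambda^*_{\kappa}$ for \eqref{eq:formulation_conservative} to obtain
$$\langle\bblambda^*_{\kappa}, \bbr\rangle \;\geq\; \langle\tilde{\bblambda}, \bbr\rangle \;=\; \langle\bblambda^*, \bbr\rangle - \theta\bigl(\langle\bblambda^*, \bbr\rangle - \langle\hat{\bblambda}, \bbr\rangle\bigr).$$
Since $\bblambda^*$ and $\hat{\bblambda}$ both lie in the probability simplex $\Lambda$ and $\bbr \in [0,1]^{|\ccalS||\ccalA|}$, we have $\langle\bblambda^*, \bbr\rangle \leq 1$ and $\langle\hat{\bblambda}, \bbr\rangle \geq 0$, so the bracketed difference is at most $1$. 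Rearranging then gives $\langle\bblambda^*, \bbr\rangle - \langle\bblambda^*_{\kappa}, \bbr\rangle \leq \theta = \kappa/\varphi$, which is the claim.

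There is no serious obstacle here; the argument is a one-line convex-combination trick, and the only subtlety is balancing the choice of $\theta$ so that it is large enough to absorb the conservative slack (requiring $\theta\varphi \geq \kappa$) yet small enough that the interpolation cost remains $\mathcal{O}(\kappa)$. The hypothesis $\kappa \leq \varphi/2$ is exactly what makes $\theta = \kappa/\varphi$ admissible as a convex-combination weight, and it is precisely Assumption \ref{ass:Slater} that provides the Slater margin $\varphi$ needed to make $\theta$ this small.
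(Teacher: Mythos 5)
Your proposal is correct and follows essentially the same route as the paper's own proof: both construct the convex combination $(1-\kappa/\varphi)\bblambda^* + (\kappa/\varphi)\hat{\bblambda}$ of the original optimum and the Slater point, verify its feasibility for the conservative problem, and then compare objectives using $\left<\bblambda^*,\bbr\right>\leq 1$ and the nonnegativity of the reward at the Slater point. The only difference is notational (the paper swaps the roles of the symbols $\hat{\bblambda}$ and $\tilde{\bblambda}$ relative to Assumption \ref{ass:Slater}), so there is nothing further to add.
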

The proof of Lemma \ref{lem:diff_conservative} is provided in Appendix \ref{proof_lemdiff_conservative}. Using the statement of Lemma \ref{lem:diff_conservative} and Theorem \ref{thm:duality_gap}, we obtain the convergence result in terms of output occupancy measure in following Theorem \ref{thm:occupancy_measure}.
\begin{theorem}\label{thm:occupancy_measure}
    For any $0<\eps<1$, there exists a constant $\tdc_1$ such that if 
    \begin{equation}
        T\geq\max\bigg\{ 16,4\varphi^2,\frac{1}{\epsilon^2}\bigg\}\cdot\tdc_1^2\frac{I|\ccalS||\ccalA|\log(|\ccalS||\ccalA|)}{(1-\gamma)^2\varphi^2}
    \end{equation}
     set $\kappa=\frac{2\tdc_1}{1-\gamma}\sqrt{\frac{I|\ccalS||\ccalA|\log(|\ccalS||\ccalA|)}{T}}$ and $M=4[\frac{1}{\varphi}+\frac{1}{1-\gamma}+\frac{2}{(1-\gamma)\varphi}]$, then the constraints of the original problem in \eqref{eq:formulation_occupany_measure} satisfy: 
    \begin{subequations}\label{eq:bound_occupancy_measure11}
    	\begin{align}
    	&\mbE\left<\barblambda,\bbg^i\right>\geq \eps\varphi\quad\forall i\in[I]\label{eq:bound_occupancy_measure2},
    	\\
    	&\mbE\Big\Vert\sum_{a}(\gamma\bbP_a^T-\bbI)\barblambda_a+(1-\gamma)\bbrho\Big\Vert_{1}\leq(1-\gamma)\epsilon\varphi\label{eq:bound_occupancy_measure3}.
    	\end{align}
	\end{subequations}
Additionally, the objective sub-optimality of \eqref{eq:formulation_occupany_measure} is given by 
\begin{align}
		\mbE[\left<\bblambda^*,\bbr\right>-\left<\barblambda,\bbr\right>]&\leq 3\epsilon\label{eq:bound_occupancy_measure1}.
\end{align}
\end{theorem}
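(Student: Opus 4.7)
The plan is to leverage Theorem~\ref{thm:duality_gap} through two complementary choices of the dual argument. Let $\eps_0 \le \eps$ abbreviate the duality-gap bound from Theorem~\ref{thm:duality_gap}; the stated $T$ and $\kappa$ also give $\kappa/\varphi \le 2\eps$ and $\kappa \le \min\{\varphi/2,1\}$, so Lemma~\ref{lem:diff_conservative} applies. First observe that feasibility of $\bblambda^*_\kappa$ in the conservative LP together with $\barbu\ge 0$ makes the $\barbu$-inner-product in $\ccalL(\barbu,\barbv,\bblambda^*_\kappa)$ nonnegative and the $\barbv$-terms vanish, so $\ccalL(\barbu,\barbv,\bblambda^*_\kappa)\ge \left<\bblambda^*_\kappa,\bbr\right>$. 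Since $(\bbu^\dagger,\bbv^\dagger)$ minimizes $\ccalL(\cdot,\cdot,\barblambda)$ over $\ccalU\times\ccalV$, Theorem~\ref{thm:duality_gap} then implies that for every $(\bbu,\bbv)\in\ccalU\times\ccalV$,
\[ \mbE[\ccalL(\bbu,\bbv,\barblambda)] \;\ge\; \left<\bblambda^*_\kappa,\bbr\right> - \eps_0. \]
Setting $\bbu=\bbzero$, $\bbv=\bbzero$ gives $\mbE[\left<\barblambda,\bbr\right>] \ge \left<\bblambda^*_\kappa,\bbr\right> - \eps_0$; combined with Lemma~\ref{lem:diff_conservative} this yields $\mbE[\left<\bblambda^*,\bbr\right> - \left<\barblambda,\bbr\right>] \le \kappa/\varphi + \eps_0 \le 3\eps$, which is~(\ref{eq:bound_occupancy_measure1}).

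For the constraint statements, abbreviate $w_i := \left<\barblambda,\bbg^i\right> - \kappa$ and let $\bbh := (1-\gamma)\bbrho - \sum_a(\bbI-\gamma\bbP_a^T)\barblambda_a$. Taking the infimum over $(\bbu,\bbv)\in\ccalU\times\ccalV$ in the displayed inequality yields
\[ \mbE[\left<\barblambda,\bbr\right>] - \frac{4}{\varphi}\mbE\bigl[(\min_j w_j)_-\bigr] - V_{\max}\mbE[\|\bbh\|_1] \;\ge\; \left<\bblambda^*_\kappa,\bbr\right> - \eps_0. \]
For a matching upper bound on $\mbE[\left<\barblambda,\bbr\right>]$, I would invoke strong LP duality for the \emph{original} (non-conservative) problem. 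With its optimal dual $(\bbu^*,\bbv^*)$, Lemma~\ref{lem:bounded_dual_u} specialised to $\kappa=0$ gives $\|\bbu^*\|_1\le 2/\varphi$ and $\|\bbv^*\|_\infty \le V_{\max}/2$, so $(\bbu^*,\bbv^*)\in\ccalU\times\ccalV$. The saddle-point inequality for that LP, combined with $(\left<\barblambda,\bbg^j\right>)_- \le (\kappa-\left<\barblambda,\bbg^j\right>)_+ = (w_j)_-$, the identity $\max_j(w_j)_- = (\min_j w_j)_-$, and Lemma~\ref{lem:diff_conservative}, yields
\[ \mbE[\left<\barblambda,\bbr\right>] - \left<\bblambda^*_\kappa,\bbr\right> \;\le\; \frac{\kappa}{\varphi} + \frac{2}{\varphi}\mbE\bigl[(\min_j w_j)_-\bigr] + \frac{V_{\max}}{2}\mbE[\|\bbh\|_1]. \]

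Adding the two displayed inequalities cancels the objective gap and the deliberate factor-of-two slack in $\ccalU,\ccalV$ leaves positive coefficients on the penalty terms:
\[ \frac{2}{\varphi}\mbE\bigl[(\min_j w_j)_-\bigr] + \frac{V_{\max}}{2}\mbE[\|\bbh\|_1] \;\le\; \frac{\kappa}{\varphi} + \eps_0. \]
Hence $\mbE[(\min_j w_j)_-] \le \kappa/2 + \varphi\eps_0/2$ and (using $V_{\max}\ge 2/(1-\gamma)$) $\mbE[\|\bbh\|_1] \le (1-\gamma)(\kappa/\varphi + \eps_0)$. Jensen's inequality on the convex map $(\cdot)_+$ gives $(\kappa-\mbE\left<\barblambda,\bbg^i\right>)_+ \le \mbE[(w_i)_-] \le \mbE[(\min_j w_j)_-]$, so $\mbE\left<\barblambda,\bbg^i\right> \ge \kappa/2 - \varphi\eps_0/2 \ge \eps\varphi$ once $\tdc_1$ is chosen so that $\kappa \ge 2\eps\varphi$ and $\varphi\eps_0 \le \kappa/2$. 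The analogous manipulation for $\|\bbh\|_1$ delivers~(\ref{eq:bound_occupancy_measure3}).

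The main technical obstacle is that the two key inequalities \emph{would cancel identically} if $\ccalU,\ccalV$ were taken equal to the optimal-dual norm bounds of Lemma~\ref{lem:bounded_dual_u}; the factor-of-two widening is precisely what keeps the combined inequality informative and yields the halved coefficients $2/\varphi$ and $V_{\max}/2$. A secondary bookkeeping obstacle is tuning $\tdc_1$ so that $\kappa$ is simultaneously large enough to dominate $\varphi\eps_0$ and small enough to meet the $\kappa\le\min\{\varphi/2,1\}$ hypothesis of Lemma~\ref{lem:diff_conservative}.
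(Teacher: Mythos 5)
Your argument is essentially the paper's own, with the one black box unrolled: the paper feeds the duality-gap bound into an off-the-shelf separation result (Theorem \ref{thm:convergence_bound}, Beck's Theorem 3.60), whose proof is exactly the mechanism you reconstruct -- a lower bound on $\min_{(\bbu,\bbv)\in\ccalU\times\ccalV}\ccalL(\bbu,\bbv,\barblambda)$ from the duality gap, an upper bound on $\mbE\left<\barblambda,\bbr\right>$ from the saddle-point inequality at a \emph{true} optimal dual pair, and the observation that the deliberate factor-of-two widening of $\ccalU,\ccalV$ beyond the bounds of Lemma \ref{lem:bounded_dual_u} is what keeps the difference of the two inequalities informative. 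Your identification of that widening as the crux is exactly right, and the objective bound \eqref{eq:bound_occupancy_measure1} goes through cleanly.

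The one substantive deviation is that in the upper-bound step you anchor to the optimal dual $(\bbu^*,\bbv^*)$ of the \emph{original} problem and then pay $\kappa/\varphi$ via Lemma \ref{lem:diff_conservative} to move from $\left<\bblambda^*,\bbr\right>$ to $\left<\bblambda^*_\kappa,\bbr\right>$, whereas the paper's route (through Theorem \ref{thm:convergence_bound} applied to the conservative LP) anchors to $(\bbu^*_\kappa,\bbv^*_\kappa)$ and the residual $[\bbG^T\barblambda-\kappa\bbone]_{-}$ directly, so no $\kappa/\varphi$ appears in the separation inequality. This costs you constants: your combined inequality gives $\mbE[(\min_j w_j)_-]\leq \kappa/2+\varphi\eps_0/2$ rather than the paper's $\mbE[(\min_j w_j)_-]\leq \varphi\eps_0$, and since under the stated parameter choices $\kappa$ and $\eps_0$ both scale linearly in $\tdc_1$ with $\varphi\eps_0=\kappa/2$ identically, your final margin is $\kappa/2-\varphi\eps_0/2=\kappa/4$, i.e.\ $\eps\varphi/2$ at the threshold value of $T$ rather than $\eps\varphi$; the two conditions you list at the end ("$\kappa\geq 2\eps\varphi$ and $\varphi\eps_0\leq\kappa/2$") cannot be used to recover the factor because the second is always an equality and the first pins $T$ to the threshold. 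Likewise $\mbE\|\bbh\|_1$ picks up the extra $\kappa/\varphi$ and your choice $V_{\max}\geq 2/(1-\gamma)$ (rather than $V_{\max}\geq 4/((1-\gamma)\varphi)$) loses the factor $\varphi$ in \eqref{eq:bound_occupancy_measure3}. All of this is repaired, and the paper's exact constants recovered, by replacing $(\bbu^*,\bbv^*)$ with $(\bbu^*_\kappa,\bbv^*_\kappa)$ and the residuals $(\left<\barblambda,\bbg^j\right>)_-$ with $(w_j)_-$ in your second displayed inequality, so that Lemma \ref{lem:diff_conservative} is invoked only once, for the objective. As written, your proof establishes the theorem with $\eps\varphi$ weakened to $\eps\varphi/2$ and a correspondingly weakened \eqref{eq:bound_occupancy_measure3}; I would count this as a constant-factor bookkeeping slip rather than a gap in the argument.
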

The proof of Theorem \ref{thm:occupancy_measure} is provided in Appendix \ref{proof_thm:occupancy_measure}. Next, we present the special case of Theorem \ref{thm:occupancy_measure} in the form of Corollary \ref{coro:non_zero_violation_occupancy_measure} (see proof in Appendix \ref{proof_cor_1}), which shows the equivalent results  for the case without conservation parameter, $\kappa=0$. 
\begin{corollary}[Non Zero-Violation  Case]\label{coro:non_zero_violation_occupancy_measure}
    Set $\kappa=0$. For any $\eps>0$, there exists a constant $\tdc_1$ such that if 
        $ T\geq \tdc_1^2\cdot\frac{I|\ccalS||\ccalA|\log(|\ccalS||\ccalA|)}{(1-\gamma)^2\varphi^2\eps^2}$
    then $\barblambda$ satisfies the constraint violation as 
    \begin{subequations}\label{eq:bound_occupancy_measure}
    	\begin{align}
    &	\mbE\left<\barblambda,\bbg^i\right>\geq -\eps\quad\forall i\in[I]\label{eq:bound_occupancy_measure5}\\
    &	\mbE\Big\Vert\sum_{a}(\gamma\bbP_a^T-\bbI)\barblambda_a+(1-\gamma)\bbrho\Big\Vert_{1}\leq(1-\gamma)\epsilon\varphi\label{eq:bound_occupancy_measure6},
    	\end{align}
	\end{subequations}
and the sub-optimality is given by
%
    	$\mbE[\left<\bblambda^*,\bbr\right>-\left<\barblambda,\bbr\right>]\leq\eps$. \label{eq:bound_occupancy_measure4}
\end{corollary}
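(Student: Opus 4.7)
The plan is to specialize Theorem~\ref{thm:duality_gap} to $\kappa=0$ and then extract the sub-optimality and constraint-violation bounds from the duality gap by plugging in suitable test dual variables $u\in\mathcal{U}$, $v\in\mathcal{V}$. Since the bounds in Lemma~\ref{lem:bounded_dual_u} on $\|u^*_\kappa\|_1$ and $\|v^*_\kappa\|_\infty$ are uniform in $\kappa$, the projection sets $\mathcal{U}$, $\mathcal{V}$ are unchanged, and Theorem~\ref{thm:duality_gap} still gives
$$\mathbb{E}[\mathcal{L}(\bar u,\bar v,\lambda^*)-\mathcal{L}(u^\dagger,v^\dagger,\bar\lambda)]\le\Delta_T:=\tilde c_1\,\tfrac{1}{(1-\gamma)\varphi}\sqrt{\tfrac{I|\mathcal{S}||\mathcal{A}|\log(|\mathcal{S}||\mathcal{A}|)}{T}}.$$
The hypothesis on $T$ is exactly what is needed to guarantee $\Delta_T\le\epsilon$. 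Notice that this is strictly simpler than Theorem~\ref{thm:occupancy_measure}: with $\kappa=0$ we can skip Lemma~\ref{lem:diff_conservative}, since the conservative and original problems coincide.

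For the sub-optimality bound, I would observe that with $\kappa=0$ the Lagrangian is $\mathcal{L}(u,v,\lambda)=\langle\lambda,r\rangle+\sum_i u_i\langle g^i,\lambda\rangle+v^\top\bigl[(1-\gamma)\rho+\sum_a(\gamma P_a^T-I)\lambda_a\bigr]$. Since $\lambda^*$ is feasible for \eqref{eq:formulation_occupany_measure}, its equality constraint holds (killing the $v$-terms) and $\langle g^i,\lambda^*\rangle\ge0$, so $\mathcal{L}(\bar u,\bar v,\lambda^*)=\langle\lambda^*,r\rangle+\sum_i\bar u_i\langle g^i,\lambda^*\rangle\ge\langle\lambda^*,r\rangle$ because $\bar u\in\mathcal{U}$ has nonnegative coordinates. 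On the other side, choosing $u=0$, $v=0$ inside the min gives $\mathcal{L}(u^\dagger,v^\dagger,\bar\lambda)\le\mathcal{L}(0,0,\bar\lambda)=\langle\bar\lambda,r\rangle$. Subtracting yields $\mathbb{E}[\langle\lambda^*,r\rangle-\langle\bar\lambda,r\rangle]\le\Delta_T\le\epsilon$, which is \eqref{eq:bound_occupancy_measure4}.

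For the constraint-violation bounds \eqref{eq:bound_occupancy_measure5}--\eqref{eq:bound_occupancy_measure6}, I would pick extremal test points that align with the violations. Let $e=(1-\gamma)\rho+\sum_a(\gamma P_a^T-I)\bar\lambda_a$ and $i^\star=\arg\min_i\langle g^i,\bar\lambda\rangle$. Taking $\tilde u\in\mathcal{U}$ with all mass $4/\varphi$ placed on coordinate $i^\star$, and $\tilde v\in\mathcal{V}$ with entries $-\mathrm{sign}(e_s)\cdot 2\bigl[\tfrac{1}{1-\gamma}+\tfrac{2}{(1-\gamma)\varphi}\bigr]$, gives
$$\mathcal{L}(u^\dagger,v^\dagger,\bar\lambda)\le\mathcal{L}(\tilde u,\tilde v,\bar\lambda)=\langle\bar\lambda,r\rangle-\tfrac{4}{\varphi}\bigl[\langle g^{i^\star},\bar\lambda\rangle\bigr]_- -2\bigl[\tfrac{1}{1-\gamma}+\tfrac{2}{(1-\gamma)\varphi}\bigr]\|e\|_1.$$
Combining this with $\mathcal{L}(\bar u,\bar v,\lambda^*)\ge\langle\lambda^*,r\rangle$ and the duality-gap bound $\Delta_T$, I can rearrange to bound the inequality-violation term $[\langle g^{i^\star},\bar\lambda\rangle]_-$ and the equality-violation term $\|e\|_1$ separately.

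The main obstacle is separating the three nonnegative quantities (sub-optimality, inequality violation, equality violation) that otherwise appear summed in the duality gap. The trick is two-stage: first use $u=v=0$ in the min (killing violation terms) to control sub-optimality, and then, having controlled sub-optimality, switch to the extremal $(\tilde u,\tilde v)$ above to isolate each violation term and absorb the sub-optimality contribution using the bound already obtained. Each of the resulting quantities scales like $\Delta_T$, and rescaling the constant $\tilde c_1$ yields the stated bounds by $\epsilon$ and $(1-\gamma)\epsilon\varphi$.
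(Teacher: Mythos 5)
Your overall strategy---specialize Theorem~\ref{thm:duality_gap} to $\kappa=0$, lower-bound $\ccalL(\barbu,\barbv,\bblambda^*)$ by $\left<\bblambda^*,\bbr\right>$ via feasibility of $\bblambda^*$, and upper-bound $\ccalL(\bbu^\dagger,\bbv^\dagger,\barblambda)$ by evaluating at extremal dual test points---is exactly the paper's route (the corollary is proved by setting $\kappa=0$ in the machinery of Theorem~\ref{thm:occupancy_measure}, noting $\bblambda^*=\bblambda^*_\kappa$ so Lemma~\ref{lem:diff_conservative} is not needed, just as you observe). Your sub-optimality argument via the test point $\bbu=\bbzero,\bbv=\bbzero$ is correct. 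However, the final separation step has a genuine gap. After the second stage you obtain
\begin{equation*}
\mbE\Big[\big(\left<\bblambda^*,\bbr\right>-\left<\barblambda,\bbr\right>\big)+\tfrac{4}{\varphi}V_1+2C_2\Vert e\Vert_1\Big]\leq \Delta_T,
\end{equation*}
where $V_1\geq 0$ is the worst inequality violation and $e$ the equality residual. To extract $\mbE[V_1]$ and $\mbE\Vert e\Vert_1$ you must control $\mbE[\left<\barblambda,\bbr\right>-\left<\bblambda^*,\bbr\right>]$ \emph{from above}, i.e., bound how much the infeasible $\barblambda$ can \emph{over-perform} the optimum. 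Your stage-one bound $\mbE[\left<\bblambda^*,\bbr\right>-\left<\barblambda,\bbr\right>]\leq\Delta_T$ is a bound in the opposite direction (it lower-bounds the over-performance by $-\Delta_T$), so it cannot be used to ``absorb'' this term; the trivial bound $\left<\barblambda,\bbr\right>-\left<\bblambda^*,\bbr\right>\leq 1$ only yields an $\mathcal{O}(1)$ violation, not $\mathcal{O}(\Delta_T)$.

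The missing ingredient is the sensitivity inequality attached to the optimal multipliers: since $-(\bbu^*,\bbv^*)$ is a subgradient of the perturbation function at the origin (equivalently, by the saddle-point property, $\ccalL(\bbu^*,\bbv^*,\barblambda)\leq\ccalL(\bbu^*,\bbv^*,\bblambda^*)=\left<\bblambda^*,\bbr\right>$), one gets
\begin{equation*}
\left<\barblambda,\bbr\right>-\left<\bblambda^*,\bbr\right>\;\leq\;\Vert\bbu^*\Vert_1\,V_1+\Vert\bbv^*\Vert_\infty\,\Vert e\Vert_1,
\end{equation*}
and because the test points carry coefficients $2\Vert\bbu^*\Vert_1$ and $2\Vert\bbv^*\Vert_\infty$ (this is precisely why $\ccalU$ and $\ccalV$ are defined with a factor of two over the bounds of Lemma~\ref{lem:bounded_dual_u}), the leftover coefficients $\Vert\bbu^*\Vert_1 V_1+\Vert\bbv^*\Vert_\infty\Vert e\Vert_1$ remain after subtraction and the violations are bounded by $\mathcal{O}(\Delta_T)$. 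This is Theorem~3.60 of \cite{beck2017first}, reproduced as Theorem~\ref{thm:convergence_bound} in Appendix~\ref{extra_appendix}, and it is the step your proposal needs to make the two-stage plan close. A minor additional point: your identity for $\ccalL(\tilde\bbu,\tilde\bbv,\barblambda)$ holds only if the mass on coordinate $i^\star$ is placed when $\left<\bbg^{i^\star},\barblambda\right><0$ and set to zero otherwise (the paper handles this with the index set $\ccalI$ of violated constraints).
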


The positive lower bound of $\epsilon\varphi$ in \eqref{eq:bound_occupancy_measure2} hints that $\barblambda$ is feasible (hence zero constraint violation). On the other hand,  the lower bound in \eqref{eq:bound_occupancy_measure5} is negative $-\epsilon$ which states that the constraints in the dual space may not be satisfied for $\barblambda$. Next, we show that how the result in Theorem \ref{thm:occupancy_measure} helps to achieve the zero constraint violation in the policy space.
%
%
\subsection{Convergence Analysis in Policy Space}\label{sec:policy}
We have established the convergence in the occupancy measure space in Sec. \ref{sec:occupancy_measure} and shown that $\bar\bblambda$ achieves an $\epsilon$-optimal $\epsilon$-feasible solution but the claim of zero constraint violation is still not clear. But a small violation in Eq. \eqref{eq:bound_occupancy_measure3} makes $\bar{\bblambda}$ to loose its physical meaning as discussed in \cite[Proposition 1]{Junyu}. Thus, to make the idea clearer and explicitly show the benefit of the conservative idea utilized in this work, we further present the results in the policy space. The bound in Eq. \eqref{eq:bound_occupancy_measure3} provides an intuition that the output occupancy measure is close to the optimal one and therefore, the induced policy should also be close to the optimal policy. Such a result is mathematically presented next in Theorem \ref{thm:policy}.


\begin{theorem}[Zero-Violation] \label{thm:policy}
    Under the condition in Theorem \ref{thm:occupancy_measure} the induced policy $\bbarpi$ by the output occupancy measure $\barblambda$ is an $\eps$-optimal policy and achieves 0 constraint violation. Mathematically, this implies that 
	\begin{subequations}
		\begin{align}
		J_{\bbr,\bbrho}(\pi^*)-\mbE[J_{\bbr,\bbrho}(\bbarpi)]&\leq \eps\\
		\mbE[J_{\bbg^i,\bbrho}(\bbarpi)]&\geq 0\quad \forall i\in[I].\label{first}
		\end{align}
	\end{subequations}
\end{theorem}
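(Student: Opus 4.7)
The plan is to lift the occupancy-measure-level guarantees in Theorem \ref{thm:occupancy_measure} to policy-level guarantees by relating the output $\bar\bblambda$ (which need not be a valid occupancy measure because the flow conservation constraint holds only approximately) to the true occupancy measure $\bblambda^{\bar\pi}$ induced by the recovered policy $\bar\pi = \pi_{\bar\bblambda}$. Once that relation is in hand, both statements follow from H\"older's inequality, combined with the two inequalities already proven: the slack $\mathbb{E}\langle\bar\bblambda,\bbg^i\rangle \geq \epsilon\varphi$ and the flow-residual bound $\mathbb{E}\|\sum_a(\gamma\bbP_a^T-\bbI)\bar\bblambda_a+(1-\gamma)\bbrho\|_1 \leq (1-\gamma)\epsilon\varphi$.

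First, I would invoke a perturbation lemma (the form of \cite[Proposition 1]{Junyu} alluded to in Section \ref{sec:policy}) stating that for any nonnegative $\bar\bblambda$ on $\mathcal{S}\times\mathcal{A}$ the induced policy $\bar\pi(a|s)=\bar\bblambda(s,a)/\sum_{a'}\bar\bblambda(s,a')$ has true occupancy measure $\bblambda^{\bar\pi}$ satisfying
\begin{equation*}
\bigl\|\bblambda^{\bar\pi}-\bar\bblambda\bigr\|_1 \;\leq\; \frac{1}{1-\gamma}\Bigl\|\sum\nolimits_a(\gamma\bbP_a^T-\bbI)\bar\bblambda_a+(1-\gamma)\bbrho\Bigr\|_1.
\end{equation*}
Combining this with \eqref{eq:bound_occupancy_measure3} yields $\mathbb{E}\|\bblambda^{\bar\pi}-\bar\bblambda\|_1\leq \epsilon\varphi$. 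This is the key quantitative step: the factor $(1-\gamma)$ deliberately built into the right-hand side of \eqref{eq:bound_occupancy_measure3} exactly cancels the $1/(1-\gamma)$ blow-up from the inverse flow map.

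Next, I would translate the two policy claims into inner products using the identity $J_{\bbr,\bbrho}(\pi)=\langle\bblambda^\pi,\bbr\rangle$ and $J_{\bbg^i,\bbrho}(\pi)=\langle\bblambda^\pi,\bbg^i\rangle$ from \eqref{linear}, and apply H\"older's inequality with $\|\bbr\|_\infty,\|\bbg^i\|_\infty\leq 1$. For the constraint,
\begin{equation*}
\mathbb{E}\bigl[J_{\bbg^i,\bbrho}(\bar\pi)\bigr] \;=\; \mathbb{E}\langle\bar\bblambda,\bbg^i\rangle + \mathbb{E}\langle\bblambda^{\bar\pi}-\bar\bblambda,\bbg^i\rangle \;\geq\; \epsilon\varphi - \mathbb{E}\|\bblambda^{\bar\pi}-\bar\bblambda\|_1 \;\geq\; 0,
\end{equation*}
where the conservative margin $\epsilon\varphi$ from \eqref{eq:bound_occupancy_measure2} is exactly what absorbs the perturbation error, giving zero violation. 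For optimality, the same decomposition yields
\begin{equation*}
J_{\bbr,\bbrho}(\pi^*)-\mathbb{E}[J_{\bbr,\bbrho}(\bar\pi)] \;\leq\; \mathbb{E}\bigl[\langle\bblambda^*,\bbr\rangle-\langle\bar\bblambda,\bbr\rangle\bigr] + \mathbb{E}\|\bblambda^{\bar\pi}-\bar\bblambda\|_1 \;\leq\; 3\epsilon + \epsilon\varphi,
\end{equation*}
which is $\mathcal{O}(\epsilon)$; absorbing the constants $3+\varphi$ into $\tilde c_1$ (equivalently, running the algorithm with $\epsilon/(3+\varphi)$ in place of $\epsilon$ in Theorem \ref{thm:occupancy_measure}) delivers the target bound $\epsilon$.

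The main obstacle is the simulation/perturbation lemma itself: one must verify that the induced policy's true occupancy measure tracks $\bar\bblambda$ in $\ell_1$ with only a $1/(1-\gamma)$ amplification of the Bellman-flow residual, rather than a worse dependence that would destroy the delicate cancellation. I would either cite \cite[Proposition 1]{Junyu} directly or reprove it in the appendix by writing $\bblambda^{\bar\pi}-\bar\bblambda=(1-\gamma)\sum_{t\geq 0}\gamma^t(M_{\bar\pi}^T)^t\,\xi$ where $\xi$ is the flow residual and $M_{\bar\pi}$ is the state-action transition matrix induced by $\bar\pi$, and then using that $M_{\bar\pi}$ is row-stochastic so that $\|(M_{\bar\pi}^T)^t\xi\|_1\leq\|\xi\|_1$, giving the geometric-series bound $1/(1-\gamma)$. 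Everything else is a short chain of H\"older applications and constant bookkeeping.
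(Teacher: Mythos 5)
Your proposal is correct and follows essentially the same route as the paper: the paper also lifts the state-level Bellman-flow residual to state-action space via $\bbarpi(a|s)$, inverts $(\bbI-\gamma\bbP_{\bbarpi}^T)$ with the Neumann series, uses $\Vert(\bbI-\gamma\bbP_{\bbarpi}^T)^{-1}\Vert_1\leq 1/(1-\gamma)$ to cancel the $(1-\gamma)$ factor in \eqref{eq:bound_occupancy_measure3}, and then combines the resulting $\epsilon\varphi$ perturbation with the margin in \eqref{eq:bound_occupancy_measure2} and the suboptimality bound \eqref{eq:bound_occupancy_measure1} exactly as you do, finishing by rescaling $\eps$. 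The only cosmetic difference is that you package the inversion step as a standalone $\ell_1$ perturbation bound $\Vert\bblambda^{\bbarpi}-\barblambda\Vert_1$ before applying H\"older, whereas the paper bounds the inner products $\left<\barblambda,\bbr\right>-J_{\bbr,\bbrho}(\bbarpi)$ and $J_{\bbg^i,\bbrho}(\bbarpi)-\left<\barblambda,\bbg^i\right>$ directly.
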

The proof of Theorem \ref{thm:policy} is provided in Appendix \ref{proof_thm:policy}. To get better idea about the importance of result in Theorem \ref{thm:policy}, we next present a Corollary \ref{coro:non_zero_violation_policy} (see proof in \ref{proof_cor_2}) which is a special case of Theorem \ref{thm:policy} for $\kappa=0$. 
\begin{corollary}[Non Zero-Violation  Case]\label{coro:non_zero_violation_policy}
    Under the condition in Corollary \ref{coro:non_zero_violation_occupancy_measure}, the induced policy $\bbarpi$ by the output occupancy measure $\barblambda$ is an $\eps$-optimal policy w.r.t both objective and constraints. More formally,
	\begin{subequations}
		\begin{align}
		J_{\bbr,\bbrho}(\pi^*)-\mbE[J_{\bbr,\bbrho}(\bbarpi)]&\leq \eps\\
		\mbE[J_{\bbg^i,\rho}(\bbarpi)]&\geq -\eps\quad \forall i\in[I]. \label{second}
		\end{align}
	\end{subequations}
\end{corollary}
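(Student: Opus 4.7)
The plan is to lift the occupancy-measure guarantees of Corollary \ref{coro:non_zero_violation_occupancy_measure} to the policy space by mirroring the argument used in the proof of Theorem \ref{thm:policy}, specialized to the non-conservative setting $\kappa=0$. Because the averaged iterate $\barblambda$ is in general not a valid occupancy measure (the Bellman-flow equation is satisfied only up to the $L_1$ residual bounded in \eqref{eq:bound_occupancy_measure6}), the induced policy $\bbarpi(a|s)=\barblambda(s,a)/\sum_{a'}\barblambda(s,a')$ has a true occupancy measure $\bblambda^{\bbarpi}$ that may differ from $\barblambda$; controlling this discrepancy is the heart of the proof.

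First I would compare the exact Bellman-flow equation $\sum_a(\bbI-\gamma\bbP_a^T)\bblambda^{\bbarpi}_a=(1-\gamma)\bbrho$ satisfied by $\bblambda^{\bbarpi}$ with its perturbed counterpart satisfied by $\barblambda$. Writing both in terms of the state marginal $\bar\mu(s)=\sum_a\barblambda(s,a)$ and the induced transition operator $\bbP_{\bbarpi}^T$ (with $\bbP_{\bbarpi}(s,s')=\sum_a\bbarpi(a|s)\bbP(s'|s,a)$), the difference $\bar\mu-\mu^{\bbarpi}$ solves a linear equation of the form $(\bbI-\gamma\bbP_{\bbarpi}^T)(\bar\mu-\mu^{\bbarpi})=\text{residual}$. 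Since the Neumann-series inverse $\sum_{k\geq 0}(\gamma\bbP_{\bbarpi}^T)^k$ has $L_1$ operator norm at most $1/(1-\gamma)$, applying \eqref{eq:bound_occupancy_measure6} and noting that $\barblambda$ and $\bblambda^{\bbarpi}$ share the same policy $\bbarpi$ gives $\mbE\|\bblambda^{\bbarpi}-\barblambda\|_1\leq\epsilon\varphi$.

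Since $\|\bbr\|_\infty,\|\bbg^i\|_\infty\leq 1$, and since $J_{\bbr,\bbrho}(\bbarpi)=\langle\bblambda^{\bbarpi},\bbr\rangle$ and $J_{\bbg^i,\bbrho}(\bbarpi)=\langle\bblambda^{\bbarpi},\bbg^i\rangle$ by the identity in \eqref{linear}, this $L_1$-closeness yields
\[
|J_{\bbr,\bbrho}(\bbarpi)-\langle\barblambda,\bbr\rangle|\leq\epsilon\varphi, \qquad |J_{\bbg^i,\bbrho}(\bbarpi)-\langle\barblambda,\bbg^i\rangle|\leq\epsilon\varphi,
\]
in expectation. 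Chaining the reward line with $\mbE[\langle\bblambda^*,\bbr\rangle-\langle\barblambda,\bbr\rangle]\leq\epsilon$ from Corollary \ref{coro:non_zero_violation_occupancy_measure} and $J_{\bbr,\bbrho}(\pi^*)=\langle\bblambda^*,\bbr\rangle$ yields the objective claim, while chaining the constraint line with \eqref{eq:bound_occupancy_measure5} yields $\mbE[J_{\bbg^i,\bbrho}(\bbarpi)]\geq-\epsilon-\epsilon\varphi$, i.e.\ $\geq -\epsilon$ after absorbing constants.

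The main obstacle is reconciling the two distinct error contributions, the duality-gap error of order $\epsilon$ and the Bellman-flow-to-policy perturbation of order $\epsilon\varphi$, into a single clean $\epsilon$ bound. As in the proof of Theorem \ref{thm:policy}, this is handled by enlarging the sample-size constant $\tdc_1$ in Corollary \ref{coro:non_zero_violation_occupancy_measure} to absorb the extra multiplicative $\varphi$ (and any residual $1/(1-\gamma)$) factors, so that the stated bounds $\epsilon$ and $-\epsilon$ follow without further modification.
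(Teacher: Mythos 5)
Your proposal is correct and follows essentially the same route as the paper: the paper's proof of Corollary \ref{coro:non_zero_violation_policy} simply invokes the bounds $\mbE[\left<\barblambda,\bbr\right>-J_{\bbr,\bbrho}(\bbarpi)]\leq \eps\varphi$ and $\mbE[J_{\bbg^i,\bbrho}(\bbarpi)-\left<\barblambda,\bbg^i\right>]\geq -\eps\varphi$ already established in the proof of Theorem \ref{thm:policy} via exactly your Bellman-flow perturbation and Neumann-series argument, chains them with Corollary \ref{coro:non_zero_violation_occupancy_measure}, and rescales $\eps$ to $\eps/2$ (using $\varphi\leq 1$) rather than enlarging $\tdc_1$. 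Your reformulation through the state marginal and the $L_1$ bound $\mbE\Vert\bblambda^{\bbarpi}-\barblambda\Vert_1\leq\eps\varphi$ is an equivalent packaging of the paper's per-state residual computation, not a genuinely different method.
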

The benefit of utilizing the conservation parameter $\kappa$ becomes clear after comparing the results in \eqref{first} and \eqref{second}.

\section{ Evaluations on a Queuing System} \label{sec:simulation}
In this section, we evaluate the proposed Algorithm \ref{alg:spdgd} on a queuing system with a single server in discrete time \cite{altman1999constrained}[Chapter 5]. In this model, we assume a buffer of finite size $L$. A possible arrival is assumed to occur at the beginning of the time slot. The state of the system is the number of customers waiting in the queue at the beginning of time slot such that the size of state space is $\vert S\vert=L+1$. We assume that there are two kinds of actions:  \text{service action} and \text{flow action}. The service action is selected from a finite finite subset $\mathcal{A}$ of $[a_{min},a_{max}]$ such that $0<a_{min}\leq a_{max}<1$. With a service action $a$, we assume that a service of a customer is successfully completed with probability $a$. If the service succeeds, the length of the queue will reduce by one, otherwise queue length remains the same. The flow  action is a finite subset $\mathcal{B}$ of $[b_{min}, b_{max}]$ such that $0\leq b_{min}\leq b_{max}<1$. Given a flow action $b$, a customer arrives with probability $b$. Let the state at time $t$ be $x_t$, and we assume that no customer arrives when state $x_t=L$. Finally, the overall action space is the product of service action space and flow action space, i.e., $\mathcal{A}\times \mathcal{B}$. Given an action pair $(a,b)$ and current state $x_t$, the transition of this system $P(x_{t+1}|x_t,a_t=a,b_t=b)$ is shown in Table \ref{transition}.

\begin{table*}[h]   
	\begin{center}  				  
		{  
			\begin{tabular}{|c|c|c|c|}  
				\hline  
				Current State & $P(x_{t+1}=x_t-1)$ & $P(x_{t+1}=x_t)$ & $P(x_{t+1}=x_t+1)$ \\ \hline
				$1\leq x_t\leq L-1$ & $a(1-b)$ & $ab+(1-a)(1-b)$ & $(1-a)b$ \\ \hline
				$x_t=L$ & $a$ & $1-a$ & $0$ \\ \hline
				$x_t=0$ & $0$ & $1-b(1-a)$ & $b(1-a)$ \\ 
				\hline  
			\end{tabular}  
		}
	\end{center}  
	\caption{Transition probability of the queue system}  
	\label{transition}
\end{table*}

\begin{figure}[h]
    \centering
    \hspace{-0.4mm}
    \includegraphics[scale=0.48]{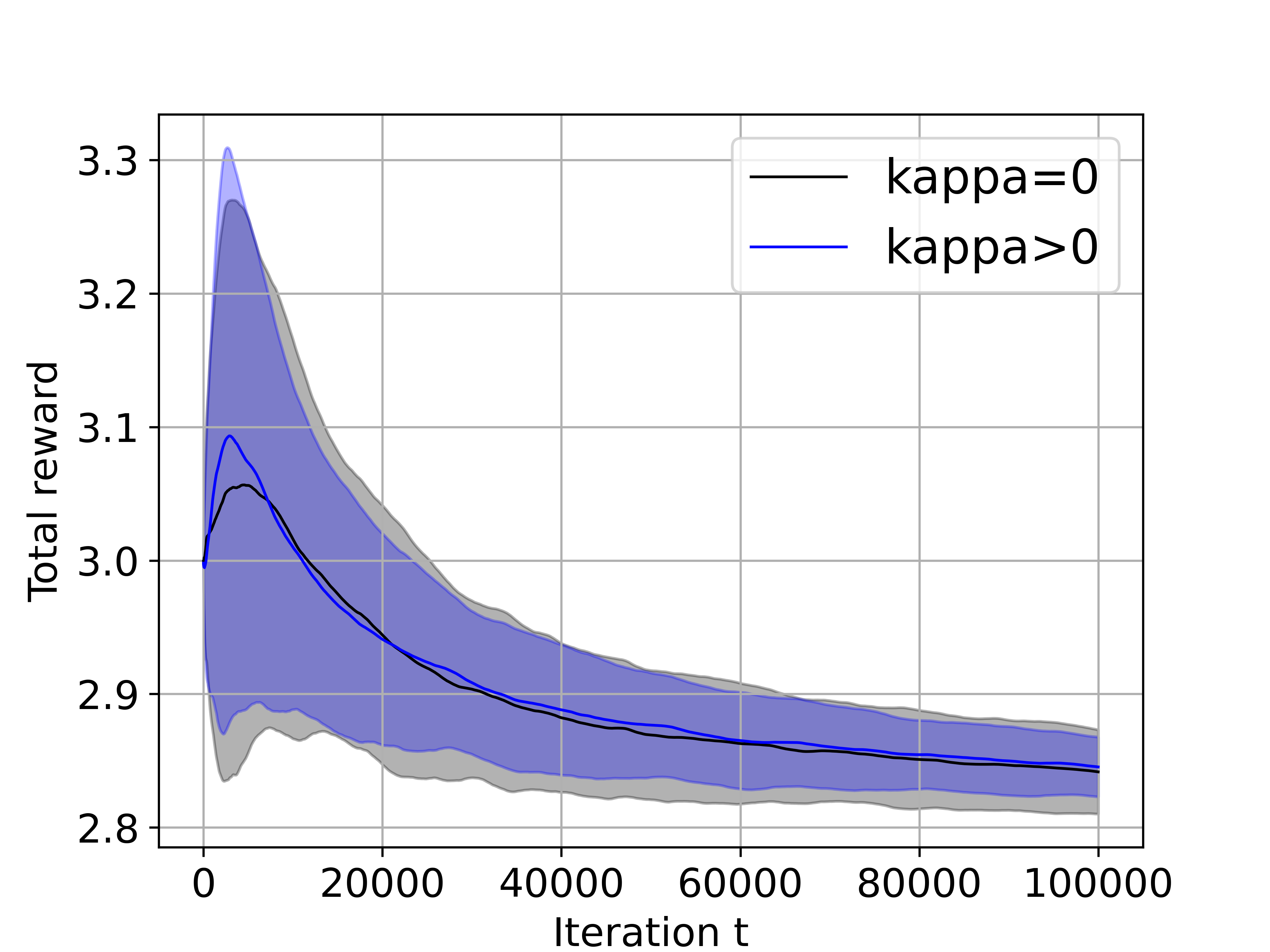}
    \caption{\small Learning Process of the proposed algorithm for objective with $\kappa=0$ and $\kappa>0$. The total reward is the objective in \eqref{simulation_prob}.}
    \label{fig:compare1}
\end{figure}
\begin{figure}[h]
	\centering
	\hspace{-0.4mm}
	\includegraphics[scale=0.48]{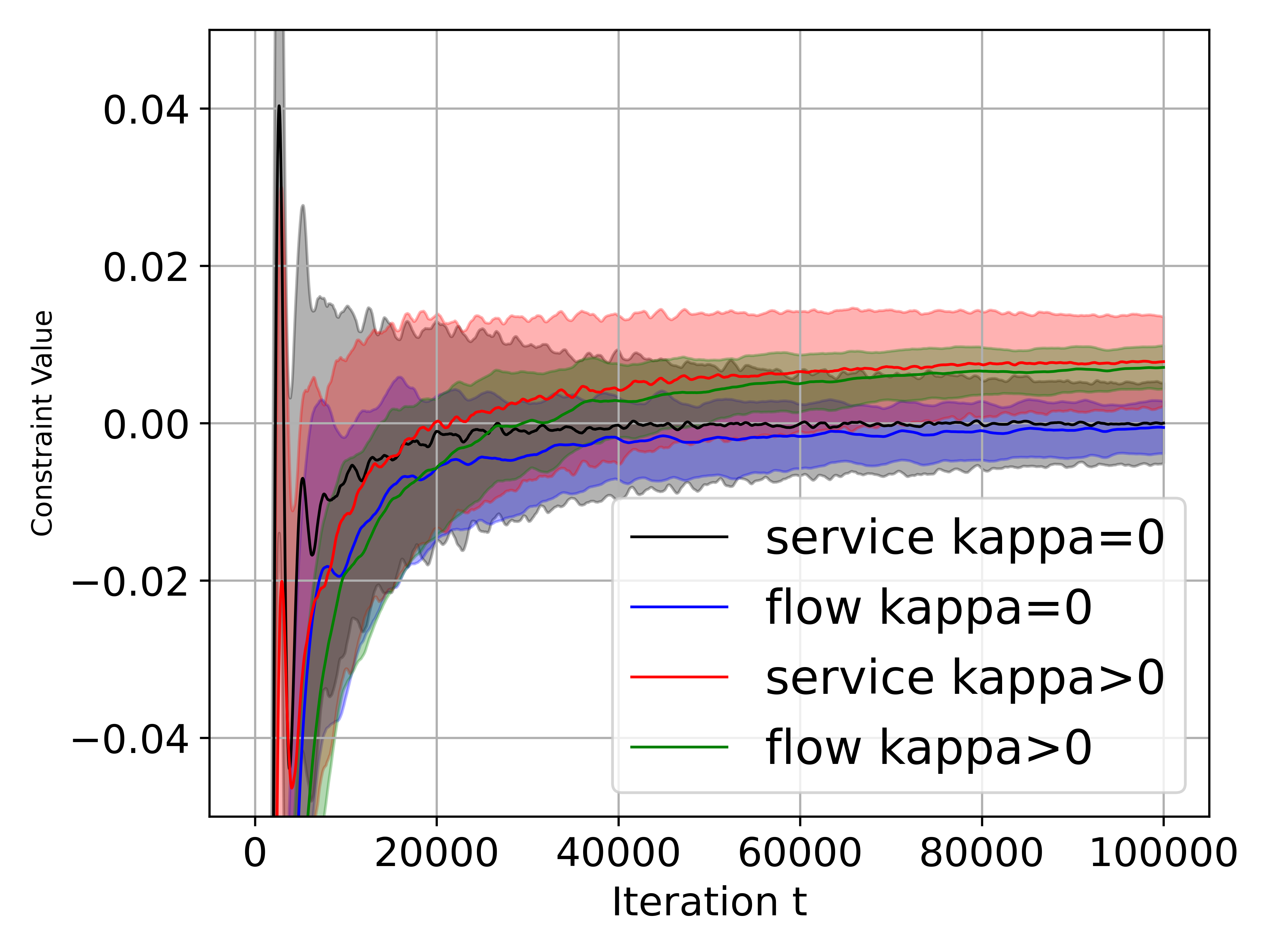}
	\caption{\small Learning Process of the proposed algorithm for constraint value with $\kappa=0$ and $\kappa>0$. The constraint value is the L.H.S. of the constraint in \eqref{simulation_prob}. The constraint value plot with error bars is plotted in Appendix \ref{appendix_sim}.}
	\label{fig:compare2}
\end{figure}
Assuming $\gamma=0.5$, we want to maximize the total discounted cumulative reward while satisfying two constraints with respect to service and flow, simultaneously. Thus, the overall optimization problem is given as 
\begin{align}\label{simulation_prob}
	\min\limits_{\pi^a, \pi^b}&\quad\mathbb{E}\bigg[\sum_{t=0}^{\infty}\gamma^tc(s_t, \pi^a(s_t), \pi^b(s_t))\bigg]\\
	s.t.&\quad
	\mathbb{E}\bigg[\sum_{t=0}^{\infty}\gamma^tc^i(s_t, \pi^a(s_t), \pi^b(s_t))\bigg]\geq 0\quad i=1,2\nonumber
\end{align}

where $s_0\sim\bbrho$, $\pi^a$ and $\pi^b$ are the policies for the service and flow, respectively. We note that the expectation in \eqref{simulation_prob} is with respect to both the stochastic policies and the transition probability. For simulations, we choose $L=5$,  $\mathcal{A}=[0.2,0.4,0.6,0.8]$,  and $\mathcal{B}=[0.4,0.5,0.6,0.7]$ for all states besides the state $s=L$, Further, we select Slater variable $\varphi=0.2$, number of iteration $T=100000$, $\tdc_1=0.02$ and conservative variable $\kappa$ is selected as the statement of Theorem \ref{thm:occupancy_measure}. The initial distribution $\bbrho$ is set as uniform distribution. Moreover, the cost function is set to be $c(s,a,b)=-s+5$, 	the constraint function for the service is defined as $c^1(s,a,b)=-10a+3$, and the constraint function for the flow is $c^2(s,a,b)=-8(1-b)^2+1.2$. We run 200 independent simulations and collect the mean value and standard variance. In  Fig. \ref{fig:compare1} and \ref{fig:compare2}, we show the learning process of cumulative reward and constraint value for $\kappa=0$ and $\kappa>0$ respectively. Note that the y-axis in Fig. \ref{fig:compare1} and \ref{fig:compare2} are cumulative reward and constraint function defined in Eq. \eqref{simulation_prob}. It can be seen that when $\kappa>0$, the constraint values are strictly larger than 0, which matches the result in theory. Further, the rewards are similar for both $\kappa=0$ and $\kappa>0$, while the case where $\kappa>0$ helps to achieve zero constraint violation.

\section{Conclusion}
In this work, we considered the problem of learning optimal policies for infinite-horizon constrained Markov Decision Processes (CMDP) under finite state $\mathcal{S}$ and action $\ccalA$ spaces with $I$ number of constraints. This problem is also called as the constrained reinforcement learning (CRL) in the literature. To solve the problem in a model-free manner, we proposed a novel Conservative Stochastic Primal-Dual Algorithm (CSDPA) based upon the randomized primal-dual saddle point approach proposed in \cite{wang2020randomized}. We show that to achieve an $\epsilon$-optimal policy, it is sufficient to run the proposed Algorithm \ref{alg:spdgd} for $\Omega(\frac{I|\ccalS||\ccalA|\log(|\ccalS||\ccalA|)}{(1-\gamma)^2\varphi^2\eps^2})$ steps. Additionally, we proved that the proposed Algorithm \ref{alg:spdgd} does not violate any of the $I$ constraints which is unique to this work in the CRL literature. The idea is to consider a conservative version (controlled by parameter $\kappa$) of the original constraints and then a suitable choice of $\kappa$ enables us to make the constraint violation zero while still achieving the best sample complexity for the objective suboptimality.

\bibliography{aaai22}

\appendix
\onecolumn
\section*{Supplementary Material for 
\\``Achieving Zero Constraint Violation for Constrained Reinforcement Learning \\via Primal-Dual Approach"}

\section{Preliminaries}\label{sec:app_compare}
\begin{table*}[t]
	\centering
	\resizebox{0.8\textwidth}{!}{\begin{tabular}{|c|c|c|c|c|}
			\hline
			&   Algorithm                                           &                       Sample Complexity                                 & Constraint violation               & Generative Model \\
			\hline
			Model-Based &   OptDual-CMDP \cite{efroni2020exploration} \footnotemark[1]           & $\tdO\bigg(\frac{|\ccalS|^{2}|\ccalA|}{(1-\gamma)^{3}\epsilon^2}\bigg)$ &$\tdO(\epsilon)$   & {No} \\
			\hline
			&   OptPrimalDual-CMDP \cite{efroni2020exploration} \footnotemark[1]    & $\tdO\bigg(\frac{|\ccalS|^{2}|\ccalA|}{(1-\gamma)^{3}\epsilon^2}\bigg)$ &  $\tdO(\epsilon)$   & {No} \\
			\hline
			&   UC-CFH \cite{Kalagarla_Jain_Nuzzo_2021} \footnotemark[2]            & $\tdO\bigg(\frac{|\ccalS|^3|\ccalA|}{(1-\gamma)^3\epsilon^2}\bigg)$     & $\tdO(\epsilon)$   & {No} \\
			\hline
			&   CONRL\cite{brantley2020constrained}                 & $\tdO\bigg(\frac{|\ccalS|^2|\ccalA|}{(1-\gamma)^6\epsilon^2}\bigg)$     & $\tdO(\epsilon)$   & {No} \\
			\hline
			&   OptPess-PrimalDual \cite{liu2021learning}           & $\tdO\bigg(\frac{|\ccalS|^3|\ccalA|}{(1-\gamma)^4\epsilon^2}\bigg)$     & Zero & {No}\\
			\hline
			&   OPDOP \cite{ding2021provably}                       & $\tdO\bigg(\frac{|\ccalS|^2|\ccalA|}{(1-\gamma)^4\epsilon^2}\bigg)$     &$\tdO(\epsilon)$   & {No}\\
			\hline
			&   UCBVI-$\gamma$ \cite{he2021nearly}                  & $\tdO\bigg(\frac{|\ccalS||\ccalA|}{(1-\gamma)^3\epsilon^2}\bigg)$       & {N/A}                 & {No}\\
			\hline\hline
			Model-Free  &   NPG-PD \cite{ding2020natural} \footnotemark[3]                      & $\tdO\bigg(\frac{|\ccalS||\ccalA|}{(1-\gamma)^{5}\epsilon^2}\bigg)$     & $\tdO(\epsilon)$   & {Yes}\\
			\hline
			&   CRPO \cite{xu2021crpo} \footnotemark[4]                              & $\tdO\bigg(\frac{|\ccalS||\ccalA|}{(1-\gamma)^{7}\epsilon^4}\bigg)$     &  $\tdO(\epsilon)$   & {Yes}\\
			\hline
			&   PDSC \cite{chen2021primal} \footnotemark[5]                                 & $\tdO\bigg(\frac{1}{(1-\gamma)^{4}\epsilon^2}\bigg)$                 &    $\tdO(\epsilon)$                          & {Yes}\\
			\hline
			&   Triple-Q \cite{wei2021provably}                      & $\tdO\bigg(\frac{|\ccalS|^{2.5}|\ccalA|^{2.5}}{(1-\gamma)^{18.5}\epsilon^5}\bigg)$    & {Zero}  & {No} \\
			\hline
			&   Randomized Primal–Dual \cite{wang2020randomized}                  & $\tdO\bigg(\frac{|\ccalS||\ccalA|}{(1-\gamma)^4\epsilon^2}\bigg)$   & {N/A}                 & {Yes} \\
			\hline
			\rowcolor{LightCyan}       &   \textbf{CSPDA} (\textbf{This work}, Theorem \ref{thm:policy}) \footnotemark[6]& $\tdO\bigg(\frac{|\ccalS||\ccalA|}{(1-\gamma)^4\epsilon^2}\bigg)$       & \textbf{Zero}    & {\textbf{Yes}}\\
			\hline\hline
			Lower bound &    \cite{Lattimore2012lowerbound} and \cite{azar2013minimax}                    & $\tilde{\Omega}\bigg(\frac{|\ccalS||\ccalA|}{(1-\gamma)^3\epsilon^2}\bigg)$   & {N/A}                 & {N/A}\\
			\hline
	\end{tabular}}
	\caption{\scriptsize This table summarizes the different model-based and mode-free state of the art algorithms available in the literature for CMDPs. We note that the proposed algorithm in this work is able to achieve the best sample complexity among them all while achieving zero constraint violation as well. For the works considering different setting such as episodic setting, we provide a detailed method to convert the result to the form of sample complexity in infinite horizon setup in Appendix \ref{sec:app_convert}.
	}
	\label{table2}
\end{table*}

\footnotetext[1]{\cite{efroni2020exploration} used $\ccalN$, which is the maximum number of non-zero
	transition probabilities across the entire state-action pairs. We bound it by $\ccalS$. Moreover, a factor of $\sqrt{|\ccalA|}$ is missed in their result, which we believe is a typo in their work.}
\footnotetext[2]{\cite{Kalagarla_Jain_Nuzzo_2021} used $C$, which is the upper bound on the number of possible successor
	states for a state-action pair. We bound it by $\ccalS$.}
\footnotetext[3]{We use the result in Theorem 4 in \cite{ding2020natural}. Notice that in the Algorithm 2 of their paper, $\frac{|\ccalS||\ccalA|}{1-\gamma}$ samples are necessary for each outer loop.}

\footnotetext[4]{Notice that in line 4 of Algorithm 1 in \cite{xu2021crpo}, a inner loop with $K_{in}$ iteration is needed for policy evaluation and $K_{in}=\tdO(\frac{T}{(1-\gamma)|\ccalS||\ccalA|})$}
\footnotetext[5]{The dependence on $\ccalS,\ccalA$ is not clear in \cite{chen2021primal}. An estimation for the Q-function is needed in the algorithm. However, the authors didn't include analysis for the estimation.}
\footnotetext[6]{Notice that the value function defined in this paper is a normalized version. Thus, an extra $\frac{1}{(1-\gamma)^2}$ is needed for a fair comparison.}

\subsection{Explanation of comparison among references in Table \ref{table2}}\label{sec:app_convert}
\subsubsection*{Step 1: From Regret to PAC result}
Many references listed in the Table \ref{table2} are in the episodic setting and give the result in the form of regret, which is defined as
\begin{equation}\label{eq:regret_form}
    \sum_{k=1}^{K} V_{r,1}^*(s_1)-V_{r,1}^{\pi_k}(s_1)\leq f(H, |\ccalS|, |\ccalA|, T, \delta)\quad \textbf{with probability at least $1-\delta$}
\end{equation}
where $T=KH$. The following method provides a probably approximately correct (PAC) result from the regret. At the end of learning horizon $K$, a policy $\bbarpi$ can be defined as follow
\begin{equation}
	\bar{\pi}(s) = \begin {cases}
	\pi_1(s) & \text{ with probability } 1/K\\
	\cdots & \cdots\\
	\pi_k(s) & \text{ with probability } 1/K\\
	\cdots & \cdots\\
	\pi_K(s) & \text{ with probability } 1/K
	\end{cases}
\end{equation}
Note that $\bar{\pi}$ chooses the different policies $\pi^k$ for $k\in[K]$ uniformly at random. Thus, we know $\frac{1}{K}\sum_{k=1}^{K}V_{r,1}^{\pi_k}(s_1)=V_{r,1}^{\bbarpi}(s_1)$. Divide Eq. \eqref{eq:regret_form} by $K$ on both side, we have
\begin{equation}
    V_{r,1}^{*}(s_1)-V_{r,1}^{\bbarpi}(s_1)\leq \frac{f(H, |\ccalS|, |\ccalA|, T, \delta)}{K}
\end{equation}
If the function $f$ is sub-linear w.r.t. $T$, then for large enough $K$, we have $V_{r,1}^{*}(s_1)-V_{r,1}^{\bbarpi}(s_1)\leq \eps$ with probability at least $1-\delta$, which means that $\bbarpi$ is an $\eps$-optimal policy.

\subsubsection*{Step 2: From episodic setting to infinite horizon discounted setting}

As mentioned above, many references consider the problem in episodic setting. In order to make a comparison, it is necessary to have a fair conversion. Here, we use the method from \cite{jin2018qlearning}[footnote 3 in page 3]. Firstly, we check whether the MDP model in the given result assume a horizon dependent transition dynamics, i.e, whether $\bbP$ is a function of $h$. If so, then define $\ccalS'=\ccalS H$. If not, then define $\ccalS'=\ccalS$. This conversion is easy to understand and reasonable because an extra $H$ times state space is needed if transition dynamics is different for each $h$. After this step, we change $H$ to $\frac{1}{1-\gamma}$. This is because the infinite horizon discounted value function can be simulated by the following algorithm.

\begin{algorithm}[ht]
	\caption{Unbiased estimator for Value Function}
	\textbf{Input}: Initial distribution $\bbrho$. Discounted factor $\gamma$. Policy $\pi$\\
	\textbf{Output}: Value function $V_{r,1}^{\pi}$
	\begin{algorithmic}[1] 
		\STATE Sample $s_1\sim \bbrho$, $H\sim Geo(1-\gamma)$
		\FOR{Each state $s_1$ in $\ccalS$}
		\FOR{$h=1,2,...,H$}
		\STATE Take action $a_h\sim \pi(\cdot\vert s_h)$, observe next state $s_{h+1}$ and reward $r(s_h,a_h)$
		\ENDFOR
		\STATE $V_{r,1}^{\pi}(s_1)=\sum_{h=1}^{H}r(s_h,a_h)$
		\ENDFOR
		\end{algorithmic}
    \end{algorithm}

The sample horizon is taken from the geometry distribution with parameter $(1-\gamma)$ and thus the expected length of horizon is $\frac{1}{1-\gamma}$, which explains why it is fair to change $H$ to $\frac{1}{1-\gamma}$. Following these two steps, we convert the result in episodic setting into infinite horizon discounted setting.

\subsubsection*{Step3: From High Probability result to Expectation result}
After converting the result from episodic setting to infinite horizon discounted setting, we get an $\eps$-optimal result with probability at least $1-\delta$. However, the result in this paper is in the form of expectation. Thus, we can convert the result with the following method. Notice that the value function $V_r$ is bounded by $\frac{1}{1-\gamma}$, we have
\begin{equation}
    \mbE[V_{r}^*(s_1)-V_r^{\pi}(s_1)]\leq \eps*(1-\delta)+\delta * \frac{1}{1-\gamma}
\end{equation}
If $\delta<\eps(1-\gamma)$, then, we have $\mbE[V_{r}^*(s_1)-V_r^{\pi}(s_1)]\leq 2\eps$.

\subsubsection*{An example for UC-CFH in \cite{Kalagarla_Jain_Nuzzo_2021}}
In the UC-CFH algorithm, the author proposed an $\eps$-optimal result with at most $\tdO(\frac{|\ccalS||\ccalA|C^2H^2}{\eps^2}\log(\frac{1}{\delta}))$ episodes, where $C$ is the upper bound on the number of possible successor
states for a state-action pair. Thus, $C<|\ccalS|$ and the above equation can be bounded by $\tdO(\frac{|\ccalS|^3|\ccalA|H^2}{\eps^2}\log(\frac{1}{\delta}))$. Notice that this is already a PAC result and we begin converting it into infinite horizon discounted setting.

\begin{itemize}
    \item Firstly, we know $K=\tdO(\frac{|\ccalS|^3|\ccalA|H^2}{\eps^2}\log(\frac{1}{\delta}))$ and thus the total sample complexity is $KH = \tdO(\frac{|\ccalS|^3|\ccalA|H^3}{\eps^2}\log(\frac{1}{\delta}))$. Notice that UC-CFH algorithm doesn't assume horizon dependent transition dynamics (They assume in the model, however, not in the algorithm and theorem). Thus, by changing $H$ to $\frac{1}{1-\gamma}$, we have sample complexity $ \tdO(\frac{|\ccalS|^3|\ccalA|}{(1-\gamma)^3\eps^2}\log(\frac{1}{\delta}))$.
    \item Secondly, change $\delta$ to $\eps(1-\gamma)$, we get the sample complexity in the form of expectation, which means with $\tdO(\frac{|\ccalS|^3|\ccalA|}{(1-\gamma)^3\eps^2})$ sample, we have
    \begin{equation}
        \mbE[V_1^*(s_1)-V_1^{\pi_k}(s_1)]\leq \eps
    \end{equation}
\end{itemize}

\section{Notations}
For the purpose of analysis in the appendix, we have used the shorthand notation $\bblambda_{sa}$ for $\bblambda(s,a)$, $\bbr_{sa}$ for $\bbr(s,a)$, and $\bbg^i_{sa}$ for $\bbg^i{(s,a)}$.

\section{Proofs for Section \ref{sec:duality_gap}}
\subsection{Proof of Lemma \ref{lem:bounded_dual_u}}\label{proof_lemma_1}
\begin{proof}
   \textbf{Bound on $\Vert \bbu_{\kappa}^*\Vert_1$:} Let us denote the optimal value of optimization problem in \eqref{eq:formulation_conservative} as $p^*_{\kappa}$ and write the corresponding dual problem as
    \begin{equation}
    	\begin{aligned}
    	\ccalD_{\kappa}(\bbu,\bbv)&:=\max_{\bblambda\geq\bbzero}\ccalL(\bblambda,\bbu,\bbv)=\max_{\bblambda\geq\bbzero}\left<\bblambda,\bbr\right>+\left<\bbu,\bbG^T\bblambda-\kappa\bbone\right>+(1-\gamma)\left<\rho,\bbv\right>+\sum_{a}\bblambda_a^T(\gamma\bbP_a-\bbI)\bbv.
    	\end{aligned}
    \end{equation}
    The optimal dual variables are given by  
    \begin{align}
        (\bbu_{\kappa}^*,\bbv_{\kappa}^*):=\arg\min_{\bbu\geq\bbzero, \bbv}\ccalD_{\kappa}(\bbu,\bbv),
    \end{align}
    and let us denote the optimal dual value by $d_{\kappa}^*=\ccalD_{\kappa}(\bbu_{\kappa}^*,\bbv_{\kappa}^*)$.
    We note that the problem in \eqref{eq:formulation_conservative} is a LP and strong duality holds, i.e $p_{\kappa}^*=d_{\kappa}^*$. To proceed, let us consider a constant $C$ and define a set $\mathcal{C}:=\{(\bbu,\bbv)\geq\bbzero\vert \ccalD_\kappa(\bbu,\bbv)\leq C\}$. For any $(\bbu,\bbv)\in\mathcal{C}$ and a feasible $\hat\bblambda$ which satisfies Assumption \ref{ass:Slater}, we could write 
        \begin{align}\label{eq:bound_dual_u_general}
        C\geq \ccalD_\kappa(\bbu,\bbv)\overset{(a)}\geq&\ccalL(\hat{\bblambda},\bbu,\bbv)\nonumber
        \\
        =&\left<\hat{\bblambda},\bbr\right>+\left<\bbu,\bbG^T\hat{\bblambda}-\kappa\bbone\right>+(1-\gamma)\left<\rho,\bbv\right>+\sum_{a}\hat{\bblambda}_a^T(\gamma\bbP_a-\bbI)\bbv
        \nonumber
        \\
        \overset{(b)}\geq&\left<\hat{\bblambda},\bbr\right>+\left<\bbu,\frac{\varphi\bbone}{2}\right>
        \nonumber
        \\
        =&\left<\hat{\bblambda},\bbr\right>+\frac{\varphi}{2}\Vert \bbu\Vert_1,
        \end{align}
where step (a) holds by the definition of dual function and step (b) is true by Assumption \ref{ass:Slater} and $\kappa\leq\frac{\varphi}{2}$. 
\begin{equation}
	\begin{aligned}
		&\left<\hat{\bblambda}, \bbg^i\right>\geq \kappa \quad\forall i\in[I]\\
		\text{and}\quad & \sum_{a}(\bbI-\gamma \bbP_a^T)\hat{\bblambda}_a=(1-\gamma)\bbrho
	\end{aligned}
\end{equation}
    From weak duality, we have
    \begin{equation}\label{eq:weak_duality}
        D_{\kappa}(\bbu,\bbv)\geq d_{\kappa}^*\geq p_{\kappa}^*=\left<\bblambda^*,\bbr\right> 
    \end{equation}
    Now let $C=\left<\bblambda^*,\bbr\right>$, all inequalities in Eq. \eqref{eq:weak_duality} become equality for $(\bbu,\bbv)\in\{(\bbu,\bbv)\geq\bbzero\vert \ccalD_\kappa(\bbu,\bbv)\leq \left<\bblambda^*,\bbr\right>\}$. Thus, this set is the optimal dual variable set. We set $C=\left<\bblambda^*,\bbr\right>$ and rearrange the Eq. \eqref{eq:bound_dual_u_general} to obtain
    \begin{equation}
        \Vert \bbu_{\kappa}^*\Vert_1\leq \frac{2[\left<\bblambda^*,\bbr\right>-\left<\hat{\bblambda},\bbr\right>]}{\varphi}\leq \frac{2}{\varphi}
    \end{equation}
    where the last step holds because $0\leq \left<\bblambda,\bbr\right>\leq 1$ for any $\bblambda\in \Lambda$ because $\Lambda$ is a probability simplex.

 \textbf{Bound on $\Vert \bbv_{\kappa}^*\Vert_\infty$:} 	To solve the linear program in \eqref{eq:formulation_conservative}, the KKT conditions should be sufficient and necessary, which can be written as
	\begin{subequations}
		\begin{align}
		&\nabla_{\bblambda}\ccalL(\bblambda_{\kappa}^*,\bbu_{\kappa}^*,\bbv_{\kappa}^*)=0 \label{eq:KKTa}\\
		&\left<\bblambda_{\kappa}^*,\bbg^i\right>\geq\kappa\quad \forall i\in[I] \label{eq:KKTb}\\
		&\sum_{a}(\bbI-\gamma \bbP_a^T)\bblambda_{\kappa,a}^*=(1-\gamma)\bbrho \label{eq:KKTc}\\
		&\left<\bbu_{\kappa}^*,\bbG^T\bblambda_{\kappa}^*-\kappa\bbone\right>=0 \label{eq:KKTd}\\
		&\bbu_{\kappa}^*\geq\bbzero \label{eq:KKTe}
		\end{align}
	\end{subequations}
	By Eq. \eqref{eq:KKTa}, we have for any state-action pair $(s,a)$
	\begin{equation}\label{eq:bound_v_scalar}
	\bbr_{sa}+\sum_{i\in[I]}u_{\kappa,i}^*\bbg^i_{sa}-(\bbe_s-\gamma\bbP_{as})^T\bbv_{\kappa}^*=0,
	\end{equation}
	where $u_{\kappa,i}^*$ is the $i^{th}$ elemnt of vector $u_{\kappa}^*$, $\bbP_{as}$ is a column vector and $P_{as}(s')=P(s'|a,s)$. Given a fixed action $\bbara$, denote $\tbr:=[\bbr_{1\bbara},\bbr_{2\bbara},\cdots,\bbr_{|\ccalS\bbara|}]^T$, $\tbg^i:=[\bbg^i_{1\bbara},\bbg^i_{2\bbara},\cdots,\bbg^i_{|\ccalS\bbara|}]^T$ and $\tbP:=[P_{\bbara,1},\cdots,P_{\bbara,|\ccalS|}]\in\mbR^{|\ccalS|\times|\ccalS|}$. By Eq. \eqref{eq:bound_v_scalar}, we have
	\begin{equation}
	    (\bbI-\gamma\tbP^T)\bbv_{\kappa}^*=\tbr+\sum_{i\in[I]}u_{\kappa,i}^*\tbg_i
	\end{equation}
	As a result, we have
	\begin{equation}
	    \begin{aligned}
	    1+\frac{2}{\varphi}&\overset{(a)}\geq 1+\Vert\bbu_{\kappa}^*\Vert_1\overset{(b)}\geq \Vert \tbr+\sum_{i\in[I]}u_{\kappa,i}^*\tbg_i\Vert_{\infty}=\Vert (\bbI-\gamma\tbP^T)\bbv_{\kappa}^*\Vert_\infty\\
	    &\overset{(c)}\geq \Vert\bbv_{\kappa}^*\Vert_{\infty}-\Vert \gamma\tbP^T\bbv_{\kappa}^*\Vert_{\infty}\overset{(d)}\geq(1-\gamma)\Vert\bbv_{\kappa}^*\Vert_{\infty},
	    \end{aligned}
	\end{equation}
	where the step (a) holds by the Lemma \ref{lem:bounded_dual_u},  step (b) holds by the definition of $r,g_i$, step (c) comes from the triangle inequality,  and step (d) is true because each row in $\tbP^T$ adds up to 1. Finally, we have the bound $\Vert \bbv_{\kappa}^*\Vert_\infty\leq \frac{1}{1-\gamma}+\frac{2}{(1-\gamma)\varphi}$.
\end{proof}
\subsection{Proof of Lemma \ref{lem:dual_gap_bound}}\label{lemma_3proof}
\begin{proof}
	Consider the Lagrangian in \eqref{eq_Lagrange} and note that it is convex w.r.t $\bbu$ as well as $\bbv$. w.r.t The gradient of the Lagrange function $\bbu$ and $\bbv$ are given by
	\begin{equation}\label{eq:true_augmented_gradient}
		\begin{aligned}
		\nabla_{\bbu}\ccalL(\bblambda,\bbu,\bbv)&=\bbG^T\bblambda-\kappa\bbone,\\
		\nabla_{\bbv}\ccalL(\bblambda,\bbu,\bbv)&=(1-\gamma)\bbrho+\sum_a(\gamma\bbP_a^T-\bbI)\bblambda_a.
		\end{aligned}
	\end{equation}
	It is obvious that $\nabla^2_{\bbu}\ccalL(\bblambda,\bbu,\bbv)=\nabla_{\bbu,\bbv}\ccalL(\bblambda,\bbu,\bbv)=\nabla_{\bbv,\bbu}\ccalL(\bblambda,\bbu,\bbv)=\nabla^2_{\bbv}\ccalL(\bblambda,\bbu,\bbv)=\bbzero$, which means that the Hessian matrix $\nabla_{\bbw}\ccalL(\bblambda,\bbu,\bbv)$ is a zero matrix. Thus, Lagrange function is convex w.r.t $\bbw$. Then, let us define $\bbw= [\bbu^T, \bbv^T]^T$, $\bar\bbw=\frac{1}{T}\sum_{t=1}^{T}\bar\bbw_t$, and decompose the duality gap as 
	\begin{equation}\label{eq:gap_decompose}
		\begin{aligned}
		\ccalL(\barbu,\barbv,\bblambda_{\kappa}^*)-\ccalL(\bbu,\bbv,\barblambda)&=\ccalL(\barbw,\bblambda_{\kappa}^*)-\ccalL(\bbw,\barblambda)\\
		&\overset{(a)}\leq \frac{1}{T}\sum_{t=1}^{T}\big[\ccalL(\bbw^t,\bblambda_{\kappa}^*)-\ccalL(\bbw,\bblambda^t)\big]
		\\
		&=\frac{1}{T}\sum_{t=1}^{T}\big[\ccalL(\bbw^t,\bblambda_{\kappa}^*)-\ccalL(\bbw^t,\bblambda^t)+\ccalL(\bbw^t,\bblambda^t)-\ccalL(\bbw,\bblambda^t)\big]
		\\
		&\overset{(b)}\leq \frac{1}{T}\sum_{t=1}^{T}\big[\left<\nabla_\bbw\ccalL(\bbw^t,\bblambda^t),\bblambda_{\kappa}^*-\bblambda^t\right>+\left<\nabla_{\bblambda}\ccalL(\bbw^t,\bblambda^t),\bbw^t-\bbw\right>\big],
		\end{aligned}
	\end{equation}
	where step (a) holds by Jensen inequality and the step (b) utilizes the convexity of $\ccalL(\cdot, \bblambda)$ and concavity of $\ccalL(\bbw,\cdot)$. 
\end{proof}

\subsection{Proof of Theorem \ref{thm:duality_gap}} \label{proof_theorem_1}
\begin{proof}
We collect the dual variables $\bbu$ and $\bbv$ in one variable $\bbw$ as defined in Lemma \ref{lem:dual_gap_bound} for the ease of analysis. The next two Lemmas provide the bound on the terms I and II in Eq. \eqref{eq:duality_gap_decompostion}. 
\begin{lemma}\label{lem:bound_primal}
	Let the iterate sequence $\{\bblambda^t\}$ be updated as mentioned in  the updates \eqref{eq:update_lambda1} and \eqref{eq:update_lambda2} of Algorithm \ref{alg:spdgd}, then for any $t$ it holds that
	\begin{align}\label{eq:bound_primal}
	&\left<\nabla_{\bblambda} \ccalL(\bbw^t,\bblambda^t),\bblambda-\bblambda^t\right>
	\nonumber\\
	&\quad\leq \frac{1}{\beta}\big[KL(\bblambda||\bblambda^t)-KL(\bblambda||\bblambda^{t+1})\big]\!\!+\!\!\frac{\beta}{2}\sum_{s,a}\lambda_{sa}^t(\Delta_{sa}^t)^2\nonumber
	\\
	&\qquad+\left<\hat{\nabla}_{\bblambda}\ccalL(\bbw^t,\bblambda^t)-\nabla_{\bblambda}\ccalL(\bbw^t,\bblambda^t),\bblambda^t-\bblambda\right>.
	\end{align}
\end{lemma}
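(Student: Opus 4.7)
The target is a standard one-step mirror-ascent inequality, so the plan is to \emph{(i)} subtract and add the stochastic gradient so that the noise term on the right of \eqref{eq:bound_primal} becomes automatic, \emph{(ii)} use the closed-form of the mirror step together with the three-point identity for the KL-Bregman divergence, \emph{(iii)} convert $KL(\bblambda\|\bblambda^{t+\frac12})$ into $KL(\bblambda\|\bblambda^{t+1})$ by absorbing the $\ell_1$-normalization step, and \emph{(iv)} bound the remaining residual by a weighted second moment using scalar $e^x$ and $\log(1+x)$ inequalities.

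First, write
\begin{equation*}
\left<\nabla_{\bblambda}\ccalL(\bbw^t,\bblambda^t),\bblambda-\bblambda^t\right> = \left<\hat\nabla_{\bblambda}\ccalL(\bbw^t,\bblambda^t),\bblambda-\bblambda^t\right> + \left<\nabla_{\bblambda}\ccalL(\bbw^t,\bblambda^t)-\hat\nabla_{\bblambda}\ccalL(\bbw^t,\bblambda^t),\bblambda-\bblambda^t\right>,
\end{equation*}
so only the deterministic inequality $\left<\hat\nabla,\bblambda-\bblambda^t\right>\le \tfrac{1}{\beta}[KL(\bblambda\|\bblambda^t)-KL(\bblambda\|\bblambda^{t+1})]+\tfrac{\beta}{2}\sum_{s,a}\lambda^t_{sa}(\Delta^t_{sa})^2$ remains to be established. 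Since the maximization in \eqref{eq:update_lambda1} is unconstrained over the positive orthant, the first-order condition gives the multiplicative update $\lambda^{t+\frac12}_{sa}=\lambda^t_{sa}\exp(\beta\,\hat\nabla_{sa})$, and the three-point identity for the KL-Bregman divergence yields, for every $\bblambda$,
\begin{equation*}
\left<\hat\nabla_{\bblambda}\ccalL(\bbw^t,\bblambda^t),\bblambda-\bblambda^{t+\frac12}\right>=\tfrac{1}{\beta}\bigl[KL(\bblambda\|\bblambda^t)-KL(\bblambda\|\bblambda^{t+\frac12})-KL(\bblambda^{t+\frac12}\|\bblambda^t)\bigr].
\end{equation*}

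The normalization step is then handled by noting that for any $\bblambda\in\Lambda$ (so that $\sum_{s,a}\lambda_{sa}=1$) the explicit computation
\begin{equation*}
KL(\bblambda\|\bblambda^{t+\frac12})=KL(\bblambda\|\bblambda^{t+1})+\log\|\bblambda^{t+\frac12}\|_1
\end{equation*}
holds. Substituting this into the previous display and adding $\left<\hat\nabla,\bblambda^{t+\frac12}-\bblambda^t\right>$ to both sides reduces the desired bound to
\begin{equation*}
\left<\hat\nabla,\bblambda^{t+\frac12}-\bblambda^t\right>-\tfrac{1}{\beta}KL(\bblambda^{t+\frac12}\|\bblambda^t)-\tfrac{1}{\beta}\log\|\bblambda^{t+\frac12}\|_1\le \tfrac{\beta}{2}\sum_{s,a}\lambda^t_{sa}(\Delta^t_{sa})^2.
\end{equation*}
Plugging in the closed form renders each of the three terms on the left an explicit scalar function of $\beta\hat\nabla_{sa}$ weighted by $\lambda^t_{sa}$; applying $\log(1+y)\le y$ to the partition term and the Taylor bound $e^x-1-x\le \tfrac{x^2}{2}e^{|x|}$ coordinatewise collapses the residual to the weighted second moment $\tfrac{\beta}{2}\sum_{s,a}\lambda^t_{sa}(\hat\nabla_{sa})^2$, which is what $\tfrac{\beta}{2}\sum_{s,a}\lambda^t_{sa}(\Delta^t_{sa})^2$ denotes.

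The delicate point is the last step: $\hat\nabla_{sa}$ carries the inverse-sampling factor $1/\bbzeta^t_{sa}$, so a priori the residuals $e^{\beta\hat\nabla_{sa}}-1-\beta\hat\nabla_{sa}$ are not uniformly small. This is exactly why Algorithm \ref{alg:spdgd} samples from the inflated distribution $\bbzeta^t=(1-\delta)\bblambda^t+\tfrac{\delta}{|\ccalS||\ccalA|}\bbone$, which caps $1/\bbzeta^t_{sa}$ by $|\ccalS||\ccalA|/\delta$, and why the shift $M$ is used to recenter $Z_{sa}-M$. Together these guarantee that $\beta|\hat\nabla_{sa}|$ stays bounded on the support of $\bblambda^t$, so the Taylor remainder $e^{\beta|\hat\nabla_{sa}|}$ can be absorbed into a dimensionless constant. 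Once this uniform boundedness is in hand, the remainder of the argument is routine bookkeeping.
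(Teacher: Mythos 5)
Your overall route (closed-form multiplicative update, telescoping the KL divergence, and bounding the log-partition/Taylor remainder by a weighted second moment) is the same skeleton as the paper's proof, but your resolution of what you yourself flag as ``the delicate point'' is where the argument genuinely breaks. You bound the remainder via $e^{x}-1-x\leq \tfrac{x^{2}}{2}e^{|x|}$ and then claim $e^{\beta|\hat{\nabla}_{sa}|}$ is an absorbable constant because $\bbzeta^t_{sa}\geq \delta/(|\ccalS||\ccalA|)$. That cap gives $|\Delta^t_{sa}|\lesssim M|\ccalS||\ccalA|/\delta$, so $\beta|\Delta^t_{sa}|$ is of order $\sqrt{|\ccalS||\ccalA|\log(|\ccalS||\ccalA|)/T}\cdot\delta^{-1}$ under the step-size used later; this is \emph{not} uniformly $O(1)$ for all $t$ and all admissible parameters, and the lemma is stated without any largeness condition on $T$. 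Even where it is bounded, your bound produces $\tfrac{\beta}{2}e^{c}\sum_{s,a}\lambda^t_{sa}(\Delta^t_{sa})^2$ rather than the exact constant $\tfrac{\beta}{2}$ appearing in \eqref{eq:bound_primal}. The paper avoids all of this by a different use of the shift parameter: $M=4[\tfrac{1}{\varphi}+\tfrac{1}{1-\gamma}+\tfrac{2}{(1-\gamma)\varphi}]$ is chosen so that $Z_{sa}-M\leq 0$, hence $\Delta^t_{sa}\leq 0$ for every $(s,a)$, and then the one-sided inequality $e^{x}\leq 1+x+\tfrac{x^{2}}{2}$, valid for $x\leq 0$, applies with no exponential remainder and no boundedness requirement on $\beta|\Delta^t_{sa}|$. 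In other words, $M$ is a sign-forcing device, not a variance recentering; missing this is the substantive gap.

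Two smaller issues. First, your normalization identity has the wrong sign: since $\bblambda^{t+1}=\bblambda^{t+\frac12}/\Vert\bblambda^{t+\frac12}\Vert_1$ and $\sum_{s,a}\lambda_{sa}=1$, one has
\begin{equation*}
KL(\bblambda\Vert\bblambda^{t+\frac12})=KL(\bblambda\Vert\bblambda^{t+1})-\log\Vert\bblambda^{t+\frac12}\Vert_1,
\end{equation*}
not $+\log\Vert\bblambda^{t+\frac12}\Vert_1$; the sign matters because $\log\Vert\bblambda^{t+\frac12}\Vert_1=\log\sum_{s,a}\lambda^t_{sa}e^{\beta\Delta^t_{sa}}$ is exactly the log-partition term that must be bounded from above, and it must enter the final inequality with the correct orientation. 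Second, your opening decomposition produces the noise term with the inner product against $\bblambda-\bblambda^t$, whereas \eqref{eq:bound_primal} has it against $\bblambda^t-\bblambda$; this is consistent only if you also flip which of $\hat{\nabla}_{\bblambda}\ccalL$ and $\nabla_{\bblambda}\ccalL$ comes first, so be careful that the bookkeeping matches the stated bound. The paper's proof sidesteps the three-point-identity machinery entirely by directly computing $KL(\bblambda\Vert\bblambda^t)-KL(\bblambda\Vert\bblambda^{t+1})=\beta\langle\bblambda,\hat{\nabla}_{\bblambda}\ccalL\rangle-\log\sum_{s,a}\lambda^t_{sa}e^{\beta\Delta^t_{sa}}$ from the normalized closed form, which is cleaner here.
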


\begin{lemma}\label{lem:bound_dual}
   Define $\ccalW=\ccalU\times\ccalV$ and consider the iterate sequence $\{\bbw^t\}$ updated according to the rule Eq. \eqref{eq:update_u} and \eqref{eq:update_v} in Algorithm \ref{alg:spdgd}. For any $t$, it holds that
    	\begin{align}\label{eq:bound_dual}
    		&\left<\nabla_\bbw\ccalL(\bbw^t,\bblambda^t),\bbw^t-\bbw\right>
    		\\
    		&\quad\leq \frac{1}{2\alpha}\bigg[\Vert\bbw^t\!-\!\bbw\Vert^2\!-\!\Vert\bbw^{t+1}\!-\!\bbw\Vert^2\!+\!\alpha^2\Vert\hat{\nabla}_{\bbw}\ccalL(\bbw^t,\bblambda^t)\Vert^2\nonumber\\
    		&\quad\quad\qquad+2\alpha\left<\nabla_\bbw\ccalL(\bbw,\bblambda)-\hat{\nabla}_\bbw\ccalL(\bbw,\bblambda),\bbw^t-\bbw\right>\bigg].\nonumber
    	\end{align}
\end{lemma}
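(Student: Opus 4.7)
The proof plan is a textbook one-step analysis of projected stochastic (sub)gradient descent applied to the combined dual iterate $\bbw^t := [(\bbu^t)^T,(\bbv^t)^T]^T$. Combining the updates \eqref{eq:update_u} and \eqref{eq:update_v} gives $\bbw^{t+1} = \Pi_{\ccalW}(\bbw^t - \alpha\,\hat{\nabla}_\bbw \ccalL(\bbw^t,\bblambda^t))$ on the product set $\ccalW = \ccalU\times\ccalV$, which is convex since $\ccalU$ is a scaled $\ell_1$-ball and $\ccalV$ is a scaled $\ell_\infty$-ball per Lemma \ref{lem:bounded_dual_u}. Since $\bbw\in\ccalW$ is arbitrary on the right-hand side of the claim, non-expansiveness of the Euclidean projection onto a convex set immediately yields
\begin{equation*}
\Vert \bbw^{t+1} - \bbw\Vert^2 \;\leq\; \Vert \bbw^t - \alpha\,\hat{\nabla}_\bbw \ccalL(\bbw^t,\bblambda^t) - \bbw\Vert^2.
\end{equation*}

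Next I would expand the squared norm on the right, isolate the cross term $-2\alpha\left<\hat{\nabla}_\bbw \ccalL(\bbw^t,\bblambda^t),\bbw^t-\bbw\right>$, move it to the other side, and divide through by $2\alpha$. This single manipulation already produces the telescoping differences $\Vert\bbw^t-\bbw\Vert^2-\Vert\bbw^{t+1}-\bbw\Vert^2$ and the stochastic-gradient-norm term $\alpha^2\Vert\hat{\nabla}_\bbw \ccalL(\bbw^t,\bblambda^t)\Vert^2$ that appear inside the bracket in the claim, leaving $\left<\hat{\nabla}_\bbw \ccalL(\bbw^t,\bblambda^t),\bbw^t-\bbw\right>$ on the left.

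Finally, to convert the stochastic-gradient inner product into the deterministic-gradient inner product appearing on the left-hand side of the statement, I would add and subtract
\begin{equation*}
\left<\nabla_\bbw \ccalL(\bbw^t,\bblambda^t),\bbw^t-\bbw\right> = \left<\hat{\nabla}_\bbw \ccalL(\bbw^t,\bblambda^t),\bbw^t-\bbw\right> + \left<\nabla_\bbw \ccalL(\bbw^t,\bblambda^t)-\hat{\nabla}_\bbw \ccalL(\bbw^t,\bblambda^t),\bbw^t-\bbw\right>,
\end{equation*}
and substitute the previous bound. The resulting noise inner product then appears scaled by $2\alpha$ inside the final bracket exactly as written, and will later be controlled (in fact, vanish in conditional mean) by the unbiasedness of $\hat{\nabla}_\bbw \ccalL$ established after \eqref{eq_sampled_Lagrange}.

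I do not expect a substantive obstacle here: the Lagrangian $\ccalL(\bbw,\bblambda)$ is linear — hence convex — in $\bbw$, so no mirror-descent / Bregman machinery is needed (that was required only for the primal $\bblambda$-update in Lemma \ref{lem:bound_primal} because $\bblambda$ lives on a simplex), and the entire argument collapses to the classical single-step projected-SGD inequality. The only bookkeeping subtlety is to read the arguments of $\ccalL$ in the noise term on the right-hand side of the claim as being evaluated at the current iterate $(\bbw^t,\bblambda^t)$, which is what the derivation actually produces.
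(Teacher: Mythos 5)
Your proposal is correct and follows essentially the same route as the paper's proof: combine the two dual updates into a single projected stochastic gradient step on $\ccalW=\ccalU\times\ccalV$, use non-expansiveness of the Euclidean projection, expand the square, and add/subtract the true gradient to isolate the noise inner product. Your closing remark that the noise term's arguments must be read as $(\bbw^t,\bblambda^t)$ is also right, and matches how the paper itself invokes the lemma (the term $T_6$ in the proof of Theorem \ref{thm:duality_gap}).
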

Next, utilizing the results of Lemma \ref{lem:bound_primal} and \ref{lem:bound_dual} (see proofs in Appendix \ref{lemma_4} and \ref{lemma_5}) into Lemma \ref{lem:dual_gap_bound}, we prove the main result in Theorem \ref{thm:duality_gap}, which establishes the final bound on the duality gap as follows. 
Let $\bblambda=\bblambda_{\kappa}^*$ in Eq. \eqref{eq:bound_primal} and $(\bbu^\dagger,\bbv^\dagger):=\arg\min_{\bbu,\bbv}\ccalL(\bbu,\bbv,\barblambda)$ in Eq. \eqref{eq:bound_dual}. Then, sum up Eq. \eqref{eq:bound_primal} and \eqref{eq:bound_dual} from $t=1$ to $T$ , we have
	\begin{equation}
		\begin{aligned}
		\frac{1}{T}\sum_{t=1}^{T}\bigg[&\left<\nabla_{\bblambda} \ccalL(\bbw^t,\bblambda^t),\bblambda_{\kappa}^*-\bblambda^t\right>+\left<\nabla_\bbw\ccalL(\bbw^t,\bblambda^t),\bbw^t-\bbw^{\dagger}\right>\bigg]\\
		&\leq\underbrace{\frac{KL(\bblambda_{\kappa}^*||\bblambda^1)}{T\beta}}_{T_1}+\underbrace{\frac{\beta}{2T}\sum_{t=1}^{T}\sum_{s,a}\lambda_{sa}^t(\Delta_{sa}^t)^2}_{T_2}+\underbrace{\frac{1}{T}\sum_{t=1}^{T}\left<\hat{\nabla}_{\bblambda}\ccalL(\bbw^t,\bblambda^t)-\nabla_{\bblambda}\ccalL(\bbw^t,\bblambda^t),\bblambda^t-\bblambda_{\kappa}^*\right>}_{T_3}\\
		&\quad+\underbrace{\frac{1}{2T\alpha}\Vert\bbw^1-\bbw^{\dagger}\Vert^2}_{T_4}+\underbrace{\frac{\alpha}{2T}\sum_{t=1}^{T}\Vert\hat{\nabla}_{\bbw}\ccalL(\bbw^t,\bblambda^t)\Vert^2}_{T_5}+\underbrace{\frac{1}{T}\sum_{t=1}^{T}\left<\nabla_\bbw\ccalL(\bbw^t,\bblambda^t)-\hat{\nabla}_\bbw\ccalL(\bbw^t,\bblambda^t),\bbw^t-\bbw^{\dagger}\right>}_{T_6}
		\end{aligned}
	\end{equation}
	Combine the above result with the statement of Lemma. \ref{lem:dual_gap_bound} to write 
	\begin{equation}\label{here2}
		\mbE[\ccalL(\barbu,\barbv,\bblambda_{\kappa}^*)-\ccalL(\bbu^{\dagger},\bbv^{\dagger},\barblambda)]\leq 	\sum_{j=1}^{6}\mbE[T_j].
	\end{equation}
	We derive an upper bound on the right hand side of \eqref{here2} in Appendix \ref{first_T_1}-\ref{first_T_6}. Following the results in Appendix \ref{first_T_1}-\ref{first_T_6}, we have
	\begin{equation}
		\begin{aligned}
		&\mbE[T_1]\leq \frac{\log(|\ccalS||\ccalA|)}{T\beta},\quad 
		\mbE[T_2]\leq \frac{1024\beta|\ccalS||\ccalA|}{(1-\gamma)^2\varphi^2}\quad \mbE[T_3]=0, \\
		&\mbE[T_4]\leq \frac{104|\ccalS|}{T\alpha(1-\gamma)^2\varphi^2},\quad 
		\mbE[T_5]\leq \frac{37}{2}I\alpha,\quad 
		\mbE[T_6]\leq \frac{48\sqrt{|\ccalS|I}}{\sqrt{T}(1-\gamma)\varphi}.
		\end{aligned}
	\end{equation}
	Let $\beta=(1-\gamma)\varphi\sqrt{\frac{\log(|\ccalS||\ccalA|)}{T|\ccalS||\ccalA|}}$ and $\alpha=\frac{\sqrt{|\ccalS }|}{(1-\gamma)\varphi\sqrt{TI}}$, the final bound for duality gap could be written as 
	\begin{equation}
		\begin{aligned}
		\mbE[\ccalL(\barbu,\barbv,\bblambda_{\kappa}^*)-\ccalL(\bbu^{\dagger},\bbv^{\dagger},\barblambda)]&\leq \frac{\sqrt{|\ccalS||\ccalA|\log(|\ccalS||\ccalA|)}}{\sqrt{T}(1-\gamma)\varphi}+\frac{1024\sqrt{|\ccalS||\ccalA|\log(|\ccalS||\ccalA|)}}{\sqrt{T}(1-\gamma)\varphi}\\
		&\quad+\frac{104\sqrt{|\ccalS|I}}{\sqrt{T}(1-\gamma)\varphi}+\frac{37\sqrt{|\ccalS|I}}{2\sqrt{T}(1-\gamma)\varphi}+\frac{48\sqrt{|\ccalS|I}}{\sqrt{T}(1-\gamma)\varphi}\\
		&\leq \ccalO\bigg(\sqrt{\frac{I|\ccalS||\ccalA|\log(|\ccalS||\ccalA|)}{T}}\cdot \frac{1}{(1-\gamma)\varphi}\bigg),
	\end{aligned}
	\end{equation}
	which is as stated in the statement of Theorem  \ref{thm:duality_gap}.
\end{proof}

\subsection{Proof of Lemma \ref{lem:bound_primal}} \label{lemma_4}
The Proof of Lemma \ref{lem:bound_primal} in this work follows similar logic to  \cite[Lemma C.2]{Junyu}. The main difference lies in the selection of shift parameters $M$ and we provide the proof here for completeness. 
\begin{proof}
	Let us defined $\Delta_{sa}$ as the $(s,a)$-th component of $\hat{\nabla}_{\bblambda}\ccalL(\bblambda^t,\bbu^t,\bbv^t)$. Consider the update in Eq. \eqref{eq:update_lambda1} and note that the problem is separable for each component of $\bblambda$ and could be solved in closed form as follows. 
	    \begin{align}
    	\max_{\bblambda\in\Lambda}&\left<\hat{\nabla}_{\bblambda}\ccalL(\bblambda^t,\bbu^t,\bbv^t),\bblambda-\bblambda^t\right>-\frac{1}{\beta}KL(\bblambda\Vert\bblambda^t)\nonumber
    	\\
    	&=\max_{\bblambda\in\Lambda}\quad\left<\hat{\nabla}_{\bblambda}\ccalL(\bblambda^t,\bbu^t,\bbv^t),-\bblambda^t\right>+\max_{\bblambda}\left\{\sum_{s,a}\Delta_{sa}^t\lambda_{sa}-\frac{1}{\beta}\sum_{s,a}\lambda_{sa}\log\left(\frac{\lambda_{sa}}{\lambda_{sa}^t}\right)\right\}\nonumber
    	\\
    	&=\max_{\bblambda\in\Lambda}\quad\sum_{s,a}\max_{\lambda_{sa}}\left\{\lambda_{sa}\bigg[\Delta_{sa}^t-\frac{1}{\beta}\log\left(\frac{\lambda_{sa}}{\lambda_{sa}^t}\right)\bigg]\right\},\label{here}
    	\end{align}
	where we drop the terms which does not depend upon the variable $\bblambda$ and $\Lambda$ denotes the set of probability distributions. Next, we solve the unconstrained maximization in \eqref{here} by differentiating and equating it to zero as follows 
	\begin{equation}
	    \frac{d}{d\lambda_{sa}}\bigg(\lambda_{sa}\bigg[\Delta_{sa}^t-\frac{1}{\beta}\log\left(\frac{\lambda_{sa}}{\lambda_{sa}^t}\right)\bigg]\bigg)\Bigg|_{\lambda_{sa}=\lambda_{sa}^{t+\frac{1}{2}}}=\Delta_{sa}^t-\frac{1}{\beta}\log\left(\frac{\lambda_{sa}^{t+\frac{1}{2}}}{\lambda_{sa}^t}\right)+\frac{1}{\beta}=0.
	\end{equation}
	After rearranging the terms, we obtain
	\begin{equation}
        \lambda_{sa}^{t+\frac{1}{2}}=\lambda_{sa}^t\exp(\beta\Delta_{sa}^t+1).
	\end{equation}
	Now, we project back the solution on to the set of valid probability distribution and obtain the update as 
	\begin{equation}\label{dual_update}
	    \lambda_{sa}^{t+1}=\frac{\lambda_{sa}^t\cdot\exp(\beta\Delta_{sa}^t)}{\sum_{s',a'}\lambda_{s'a'}^t\cdot\exp(\beta\Delta_{s'a'}^t)},
	\end{equation}
	where we note that $ \lambda_{sa}^{t+1}\in\Lambda$. 
	Next, we analyze the one step KL divergence of $\bblambda^{t+1}$ to any $\bblambda$  as 
		\begin{align}
		KL(\bblambda||\bblambda^t)-KL(\bblambda||\bblambda^{t+1})=&\sum_{s,a}\lambda_{sa}\log\left(\frac{\lambda_{sa}}{\lambda_{sa}^t}\right)-\sum_{s,a}\lambda_{sa}\log\left(\frac{\lambda_{sa}}{\lambda_{sa}^{t+1}}\right)\nonumber
		\\
		=&\sum_{s,a}\lambda_{sa}\log\left(\frac{\lambda_{sa}^{t+1}}{\lambda_{sa}^t}\right).
		\end{align}
		Next, we substitute the definition of $\lambda^{t+1}_{sa}$ to obtain
				\begin{align}\label{eq:bound_primal1}
		KL(\bblambda||\bblambda^t)-KL(\bblambda||\bblambda^{t+1})
		=&\sum_{s,a}\lambda_{sa}\bigg[\beta\Delta_{sa}^t-\log\left(\sum_{s',a'}\lambda_{s'a'}^t\cdot\exp(\beta\Delta_{s'a'}^{t})\right)\bigg]\nonumber\\
		=&\beta\left<\bblambda,\hat{\nabla}_{\bblambda}\ccalL(\bblambda^t,\bbu^t,\bbv^t)\right>-\log\left(\sum_{s',a'}\lambda_{s'a'}^t\cdot\exp(\beta\Delta_{s'a'}^{t})\right),
		\end{align}
		where we utilize the fact that $\sum_{s,a}\lambda_{sa}=1$. To proceed next, recall that we have $\Delta_{sa}=\frac{r_{sa}+\gamma v_{s'}-v_s+\sum_{i\in[I]}u_ig_{i,sa}-M}{\zeta_{sa}}$. We note that
	\begin{equation}
	    |v_{s}|\leq 2\left[\frac{1}{1-\gamma}+\frac{2}{(1-\gamma)\varphi}\right], \quad \quad |r_{sa}+\gamma v_{s'}|\leq 2\left[\frac{1}{1-\gamma}+\frac{2}{(1-\gamma)\varphi}\right], \quad \text{and} \quad \bigg|\sum_{i\in[I]}u_ig_{sa}\bigg|\leq \frac{4}{\varphi}. 
	\end{equation}
	Hence, with the selection  $M=4\left[\frac{1}{\varphi}+\frac{1}{1-\gamma}+\frac{2}{(1-\gamma)\varphi}\right]$, we can conclude that $\Delta_{sa}\leq 0$. Since $\exp(x)\leq (1+x+\frac{x^2}{2})$ for $x\leq 0$, we can upper bound the second term on the right hand side of \eqref{eq:bound_primal1} as
	\begin{equation}\label{eq:bound_primal2}
	    \begin{aligned}
	    \log\left(\sum_{s',a'}\lambda_{s'a'}^t\cdot\exp(\beta\Delta_{s'a'}^{t})\right)&\leq \log\bigg(\sum_{s',a'}\lambda^t_{s'a'}\cdot(1+\beta\Delta_{s'a'}^t+\frac{\beta^2}{2}(\Delta^t_{s'a'})^2)\bigg)\\
	    &=\log\bigg(1+\beta\sum_{s',a'}\lambda^t_{s'a'}\Delta^t_{s'a'}+\frac{\beta^2}{2}\sum_{s',a'}\lambda^t_{s'a'}(\Delta^t_{s'a'})^2\bigg)\\
	    &=\log\bigg(1+\beta\left<\hat{\nabla}_{\bblambda}\ccalL(\bbu^t,\bbv^t,\bblambda^t),\bblambda^t\right>+\frac{\beta^2}{2}\sum_{s',a'}\lambda^t_{s'a'}(\Delta^t_{s'a'})^2\bigg)\\
	    &\leq \beta\left<\hat{\nabla}_{\bblambda}\ccalL(\bbu^t,\bbv^t,\bblambda^t),\bblambda^t\right>+\frac{\beta^2}{2}\sum_{s',a'}\lambda^t_{s'a'}(\Delta^t_{s'a'})^2,
	    \end{aligned}
	\end{equation}
	where the last inequality holds by $\log(1+x)\leq x$ for all $x>-1$. By combining Eq. \eqref{eq:bound_primal1} and \eqref{eq:bound_primal2}, we obtain
	\begin{equation}
	    \begin{aligned}
	     KL(\bblambda||\bblambda^t)-KL(\bblambda||\bblambda^{t+1})&\geq \beta\left<\bblambda,\hat{\nabla}_{\bblambda}\ccalL(\bblambda^t,\bbu^t,\bbv^t)\right>-\beta\left<\hat{\nabla}_{\bblambda}\ccalL(\bbu^t,\bbv^t,\bblambda^t),\bblambda^t\right>-\frac{\beta^2}{2}\sum_{s',a'}\lambda^t_{s'a'}(\Delta^t_{s'a'})^2.
	    \end{aligned}
	\end{equation}
	Rearrange the items and divide both sides by $\beta$, to obtain
	\begin{equation}
	    0\leq \frac{1}{\beta}[ KL(\bblambda||\bblambda^t)-KL(\bblambda||\bblambda^{t+1})]+\left<\hat{\nabla}_{\bblambda}\ccalL(\bblambda^t,\bbu^t,\bbv^t),\bblambda^t-\bblambda\right>+\frac{\beta}{2}\sum_{s',a'}\lambda^t_{s'a'}(\Delta^t_{s'a'})^2.
	\end{equation}
	Add $\left<\nabla_{\bblambda}\ccalL(\bblambda^t,\bbu^t,\bbv^t),\bblambda-\bblambda^t\right>$ on both side to get the desired result.
\end{proof}
\subsection{Proof of Lemma \ref{lem:bound_dual}}\label{lemma_5}
\begin{proof}
	We can combine  the update rule in Eq. \eqref{eq:update_u}-\eqref{eq:update_v} to obtain an update for $\bbw\in\mathcal{W}:=\mathcal{U}\times\mathcal{V}$. For any $\bbw\in\ccalW$, it holds that
	\begin{equation}\label{eq:bound_part1}
	\begin{aligned}
	\Vert \bbw^{t+1}-\bbw\Vert^2&=\Vert \Pi_\ccalW(\bbw^t-\alpha\hat{\nabla}_{\bbw}\ccalL(\bbw^t,\bblambda^t))-\bbw\Vert^2\\
	&\leq \Vert\bbw^t-\alpha\hat{\nabla}_{\bbw}\ccalL(\bbw^t,\bblambda^t)-\bbw\Vert^2\\
	&=\Vert\bbw^t-\bbw\Vert^2+\alpha^2\Vert\hat{\nabla}_{\bbw}\ccalL(\bbw^t,\bblambda^t)\Vert^2-2\alpha\left<\hat{\nabla}_\bbw\ccalL(\bbw^t,\bblambda^t),\bbw^t-\bbw\right>\\
	&=\Vert\bbw^t-\bbw\Vert^2+\alpha^2\Vert\hat{\nabla}_{\bbw}\ccalL(\bbw^t,\bblambda^t)\Vert^2-2\alpha\left<\hat{\nabla}_\bbw\ccalL(\bbw^t,\bblambda^t)-\nabla_\bbw\ccalL(\bbw,\bblambda)+\nabla_\bbw\ccalL(\bbw,\bblambda),\bbw^t-\bbw\right>, 
	\end{aligned}
	\end{equation}
	where the first inequality holds by the non-expansiveness of the Projection operator. The following equalities holds by expanding the squares and by adding subtracting the term $2\alpha\left<\nabla_\bbw\ccalL(\bbw,\bblambda),\bbw^t-\bbw\right>$. After rearranging the terms in  the above expression, we obtain
	\begin{align}
	2\alpha\left<\nabla_\bbw\ccalL(\bbw,\bblambda),\bbw^t-\bbw\right>\leq & \Vert\bbw^t-\bbw\Vert^2-\Vert\bbw^{t+1}-\bbw\Vert^2+\alpha^2\Vert\hat{\nabla}_\bbw\ccalL(\bbw^t,\bblambda^t)\Vert^2\nonumber
	\\
	&-2\alpha\left<\hat{\nabla}_\bbw\ccalL(\bbw^t,\bblambda^t)-\nabla_\bbw\ccalL(\bbw^t,\bblambda^t),\bbw^t-\bbw\right>. 
	\end{align}
	Next, divide the both sides by $2\alpha>0$ to obtain  the statement of Lemma \ref{lem:bound_dual}.
\end{proof}

\subsection{Upper bound for $\mathbb{E}[T_1]$} \label{first_T_1}
\begin{equation}
	\begin{aligned}
		\mbE[T_1]&=\frac{KL(\bblambda_{\kappa}^*||\bblambda^1)}{T\beta}=\frac{1}{T\beta}\sum_{s,a}\lambda_{\kappa,sa}^*\log\bigg(\frac{\lambda_{\kappa,sa}^*}{\lambda_{sa}^1}\bigg)=\frac{1}{T\beta}\sum_{s,a}\lambda_{\kappa,sa}^*[\log\lambda_{\kappa,sa}^*-\log\lambda_{sa}^1]\\
		&\leq \frac{1}{T\beta}\sum_{s,a}\lambda_{\kappa,sa}^*\log(|\ccalS||\ccalA|)=\frac{\log(|\ccalS||\ccalA|)}{T\beta}.
	\end{aligned}
\end{equation}
\subsection{Upper bound for $\mathbb{E}[T_2]$}
For any fixed $\bbu^t, \bbv^t,\bblambda^t$, we have
\begin{equation}
    \begin{aligned}
    \mbE[\sum_{s,a}\lambda_{sa}^t(\Delta_{sa}^t)^2|\bbu^t,\bbv^t,\bblambda^t]&=\mbE_{s_t,a_t}\bigg[\sum_{s,a}\lambda_{sa}^t\bigg(\frac{r_{sa}+\gamma v_{s'}-v_s+\sum_{i}u_ig_{i,sa}-M}{\zeta_{sa}^t}\cdot\bbone_{(s,a)=(s_t,a_t)}\bigg)^2\bigg]\\
    &=\mbE_{s_t,a_t}\bigg[\lambda_{s_ta_t}^t\bigg(\frac{r_{s_ta_t}+\gamma v_{s_t'}-v_{s_t}+\sum_{i}u_ig_{i,s_ta_t}-M}{\zeta_{s_ta_t}^t}\bigg)^2\bigg]\\
    &=\sum_{s_t,a_t}\bigg[\lambda_{s_ta_t}^t\zeta_{s_ta_t}^t\bigg(\frac{r_{s_ta_t}+\gamma v_{s_t'}-v_{s_t}+\sum_{i}u_ig_{i,s_ta_t}-M}{\zeta_{s_ta_t}^t}\bigg)^2\bigg]\\
    &=\sum_{s_t,a_t}\frac{\lambda_{s_ta_t}^t\bigg(r_{s_ta_t}+\gamma v_{s_t'}-v_{s_t}+\sum_{i}u_ig_{i,s_ta_t}-M\bigg)^2}{(1-\delta)\lambda_{s_ta_t}^t+\frac{\delta}{|\ccalS||\ccalA|}}\\
    &\leq\sum_{s_t,a_t}\frac{\lambda_{s_ta_t}^t\bigg(r_{s_ta_t}+\gamma v_{s_t'}-v_{s_t}+\sum_{i}u_ig_{i,s_ta_t}-M\bigg)^2}{(1-\delta)\lambda_{s_ta_t}^t}\\
    &\leq \frac{64|\ccalS||\ccalA|[\frac{1}{\varphi}+\frac{1}{1-\gamma}+\frac{2}{(1-\gamma)\varphi}]^2}{1-\delta}\\
    &\leq\frac{1024|\ccalS||\ccalA|}{(1-\delta)(1-\gamma)^2\varphi^2}.
    \end{aligned}
\end{equation}
Next, let us write down the term $\mbE[T_2]$ as
\begin{equation}
	\begin{aligned}
	\mbE[T_2]=\mbE[\frac{\beta}{2T}\sum_{t=1}^{T}\sum_{s,a}\lambda_{sa}^t(\Delta_{sa}^t)^2]\overset{(a)}=&\frac{\beta}{2T}\sum_{t=1}^{T}\mbE[\sum_{s,a}\lambda_{sa}^t(\Delta_{sa}^t)^2]\\
	\overset{(b)}=&\frac{\beta}{2T}\sum_{t=1}^{T}\mbE[\mbE[\sum_{s,a}\lambda_{sa}^t(\Delta_{sa}^t)^2|\bbu^t,\bbv^t,\bblambda^t]]
	\\
	\leq&\frac{512\beta|\ccalS||\ccalA|}{(1-\delta)(1-\gamma)^2\varphi^2}\leq \frac{1024\beta|\ccalS||\ccalA|}{(1-\gamma)^2\varphi^2},
	\end{aligned}
\end{equation}
where step (a) holds by the linear of expectation and step (b) holds due to law of total expectation. The last inequality holds by $\delta\in(0,\frac{1}{2})$.
\subsection{Expression for $\mathbb{E}[T_3]$}
For any fixed $\bbu^t, \bbv^t,\bblambda^t$, we have
\begin{equation}
\mbE[\hat{\nabla}_{\bblambda} \ccalL(\bbu^t,\bbv^t,\bblambda^t)|\bbu^t,\bbv^t,\bblambda^t]=\nabla_{\bblambda} \ccalL(\bbu^t,\bbv^t,\bblambda^t)-M\cdot\bbone.
\end{equation}
Thus,
\begin{equation}
\mbE[T_3]=\frac{1}{T}\sum_{t=1}^{T}\mbE[\left<\hat{\nabla}_{\bblambda}\ccalL(\bbw^t,\bblambda^t)-\nabla_{\bblambda}\ccalL(\bbw^t,\bblambda^t),\bblambda^t-\bblambda\right>]=\frac{1}{T}\sum_{t=1}^{T}\mbE[\left<-M\cdot\bbone,\bblambda^t-\bblambda\right>]=0
\end{equation}
where the last step is true because $\left<\bblambda^t\cdot\bbone\right>=\left<\bblambda^*\cdot\bbone\right>=1$
\subsection{Upper bound for $\mathbb{E}[T_4]$}
For any $\bbu\in\ccalU$
\begin{equation}
    \Vert\bbu^1-\bbu\Vert^2\leq \Vert\bbu^1\Vert^2+\Vert\bbu\Vert^2+2|\left<\bbu^1,\bbu\right>|\leq \Vert\bbu^1\Vert^2+\Vert\bbu\Vert^2+2\Vert\bbu^1\Vert\Vert\bbu\Vert\leq \frac{64}{\varphi^2}
\end{equation}
where the last inequality holds by $\Vert\bbx\Vert_2\leq \Vert\bbx\Vert_1$ for any $\bbx$ and the definition of $\ccalU$.
Similarly, for any $\bbv\in\ccalV$
\begin{equation}
    \begin{aligned}
    \Vert\bbv^1-\bbv\Vert^2&\leq \Vert\bbv^1\Vert^2+\Vert\bbv\Vert^2+2\Vert\bbv^1\Vert\Vert\bbv^t\Vert\leq |\ccalS|(\Vert\bbv^1\Vert_\infty^2+\Vert\bbv\Vert_\infty^2+2\Vert\bbv^1\Vert_\infty\Vert\bbv^t\Vert_\infty)\\
    &\leq 16|\ccalS|[\frac{1}{1-\gamma}+\frac{2}{(1-\gamma)\varphi}]^2\leq \frac{144|\ccalS|}{(1-\gamma)^2\varphi^2}
    \end{aligned}
\end{equation}
Finally, combine above two inequalities,
\begin{equation}
    \mbE[T_4]=\frac{1}{2T\alpha}\Vert\bbw^1-\bbw^\dagger\Vert^2=\frac{1}{2T\alpha}[\Vert\bbu^1-\bbu^\dagger\Vert^2+\Vert\bbv^1-\bbv^\dagger\Vert^2]\leq\frac{8}{T\alpha\varphi^2}\bigg[4+\frac{9|\ccalS|}{(1-\gamma)^2}\bigg]\leq\frac{104|\ccalS|}{T\alpha(1-\gamma)^2\varphi^2}
\end{equation}
\subsection{Upper bound for $\mathbb{E}[T_5]$}
For any fixed $\bbu^t,\bbv^t,\bblambda^t$, we have
\begin{equation}\label{eq:bound_T5_1}
    \begin{aligned}
    \mbE\bigg[\Vert\hat{\nabla}_{\bbu}\ccalL(\bbu^t,\bbv^t,\bblambda^t)\Vert^2\bigg|\bbu^t,\bbv^t,\bblambda^t\bigg]&=\mbE_{s_t,a_t}\bigg[\Vert\frac{\lambda^t_{s_ta_t}\bbg_{s_ta_t}}{\zeta_{s_ta_t}}-\kappa\bbone\Vert^2\bigg|\bbu^t,\bbv^t,\bblambda^t\bigg]\\
    &\leq \mbE_{s_t,a_t}\bigg[2\Vert\frac{\lambda^t_{s_ta_t}\bbg_{s_ta_t}}{(1-\delta)\lambda_{s_ta_t}^t+\frac{\delta}{|\ccalS||\ccalA|}}\Vert^2\bigg|\bbu^t,\bbv^t,\bblambda^t\bigg]+2\kappa^2I\\
    &\leq \mbE_{s_t,a_t}\bigg[2\Vert\frac{\bbg_{s_ta_t}}{(1-\delta)}\Vert^2\bigg|\bbu^t,\bbv^t,\bblambda^t\bigg]+2\kappa^2I\\
    &=\frac{2\Vert\bbg_{s_ta_t}\Vert^2}{(1-\delta)^2}+2\kappa^2I\leq \frac{2I}{(1-\delta)^2}+2I 
    \end{aligned}
\end{equation}
where the last step holds because $|g_i(s,a)|\leq 1,\forall i$ by the definition and the fact $0<\kappa\leq 1$. 
\begin{equation}\label{eq:bound_T5_2}
	\begin{aligned}
	\mbE\bigg[\Vert\hat{\nabla}_{\bbv}\ccalL(\bbu^t,\bbv^t,\bblambda^t)\Vert^2\bigg|\bbu^t,\bbv^t,\bblambda^t\bigg]&=\mbE_{s_t,a_t,s_t',s_0}\bigg[\Vert(1-\gamma)\bbe_{s_0}+\frac{\lambda_{s_ta_t}(\gamma\bbe_{s_t'}-\bbe_{s_t})}{\zeta_{s_ta_t}}\Vert^2\bigg|\bbu^t,\bbv^t,\bblambda^t\bigg]\\
	&\leq\mbE_{s_t,a_t,s_t',s_0}\bigg[\Vert(1-\gamma)\bbe_{s_0}+\frac{\lambda_{s_ta_t}(\gamma\bbe_{s_t'}-\bbe_{s_t})}{(1-\delta)\lambda_{s_ta_t}^t+\frac{\delta}{|\ccalS||\ccalA|}}\Vert^2\bigg|\bbu^t,\bbv^t,\bblambda^t\bigg]\\
	&\leq \mbE_{s_t,a_t,s_t',s_0}\bigg[3(1-\gamma)^2+\frac{3\gamma^2+3}{(1-\delta)^2}\bigg]\leq 3+\frac{6}{(1-\delta)^2}
	\end{aligned}
\end{equation}
Combined Eq. \eqref{eq:bound_T5_1}, \eqref{eq:bound_T5_2} with the definition of $\bbw$,
\begin{equation}
    \begin{aligned}
    \mbE[T_5]&=\frac{\alpha}{2T}\sum_{t=1}^{T}\mbE\Vert\hat{\nabla}_{\bbw}\ccalL(\bbw^t,\bblambda^t)\Vert^2=\frac{\alpha}{2T}\sum_{t=1}^{T}\bigg[\mbE\Vert\hat{\nabla}_{\bbu}\ccalL(\bbu^t,\bbv^t,\bblambda^t)\Vert^2+\mbE\Vert\hat{\nabla}_{\bbv}\ccalL(\bbu^t,\bbv^t,\bblambda^t)\Vert^2\bigg]\\
    &\leq \frac{\alpha}{2}\bigg[3+\frac{2I+6}{(1-\delta)^2}+2I\bigg]\leq \frac{37I\alpha}{2}
    \end{aligned}
\end{equation}
where the last step holds by $\delta\in(0,\frac{1}{2})$
\subsection{Upper bound for $\mathbb{E}[T_6]$}\label{first_T_6}
Firstly, notice that $T_6$ is different from $T_3$ because $\bbw^{\dagger}$ depends on $\barblambda$, which is a random variable. However $\bblambda_{\kappa}^*$ depends only on $\kappa$, which is a constant. Thus, in order to bound $T_6$, we need following Lemma.
\begin{lemma}[\cite{beck2017first}]\label{lem:Bach2019}
	Let $\ccalZ\subset \mbR^d$ be a convex set and $\omega:\ccalZ\rightarrow \mbR$ be a $1$-strongly convex function with respect to norm $\Vert\cdot\Vert$ over $\ccalZ$. With the assumption that for all $x\in\ccalZ$ we have $\omega(\bbx)-\min_{\bbx\in\ccalZ}\omega(\bbx)\leq \frac{1}{2}D^2$, then for any martingale difference sequence $\{\bbZ_k\}_{k=1}^{K}\in\mbR^d$ and any random vector $\bbx\in\ccalZ$, it holds that
	\begin{equation}
	\mbE\bigg[\sum_{k=1}^{K}\left<\bbZ_k,\bbx\right>\bigg]\leq \frac{D}{2}\sqrt{\sum_{k=1}^{K}\mbE[\Vert\bbZ_k\Vert_*^2]}
	\end{equation}
	where $\Vert\cdot\Vert_*$ denotes the dual norm of $\Vert\cdot\Vert$
\end{lemma}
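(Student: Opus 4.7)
The plan is to establish the bound by the classical ``ghost iterate'' construction: one runs an auxiliary mirror-descent sequence whose stochastic gradients are the martingale increments $\{Z_k\}$ themselves. The $Z_k$'s play the role of gradients, $1$-strong convexity of $\omega$ supplies the one-step descent inequality, and the martingale-difference hypothesis is used at the very end to kill a ``free'' sum of inner products.

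Concretely, first I would pick a step size $\eta > 0$ (to be optimized later) and define the ghost sequence by
\[
y_1 \in \arg\min_{z \in \mathcal{Z}} \omega(z), \qquad y_{k+1} = \arg\min_{z \in \mathcal{Z}} \left\{ \eta \langle Z_k, z \rangle + B_\omega(z, y_k) \right\},
\]
where $B_\omega(y, z) = \omega(y) - \omega(z) - \langle \nabla \omega(z), y - z \rangle$ is the Bregman divergence induced by $\omega$. The crucial structural property is that $y_k$ is a deterministic function of $Z_1, \ldots, Z_{k-1}$, hence measurable with respect to the filtration $\mathcal{F}_{k-1} := \sigma(Z_1, \ldots, Z_{k-1})$, regardless of the (possibly random) vector $x$ in the lemma statement.

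Next I would invoke the standard one-step mirror-descent inequality: first-order optimality of $y_{k+1}$ combined with $1$-strong convexity of $\omega$ yields, for every $x \in \mathcal{Z}$,
\[
\eta \langle Z_k, y_k - x \rangle \leq B_\omega(x, y_k) - B_\omega(x, y_{k+1}) + \frac{\eta^2}{2} \|Z_k\|_*^2.
\]
Summing over $k$ telescopes the Bregman terms, and using $B_\omega(x, y_1) \leq \omega(x) - \omega(y_1) \leq \tfrac{1}{2}D^2$ (the first inequality follows from optimality of $y_1$), I would obtain
\[
\sum_{k=1}^K \langle Z_k, x \rangle \leq \sum_{k=1}^K \langle Z_k, y_k \rangle + \frac{D^2}{2\eta} + \frac{\eta}{2} \sum_{k=1}^K \|Z_k\|_*^2.
\]

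Finally, taking expectations and using the tower property together with $\mathbb{E}[Z_k \mid \mathcal{F}_{k-1}] = 0$ and $y_k \in \mathcal{F}_{k-1}$ gives $\mathbb{E}[\langle Z_k, y_k \rangle] = 0$, so the ghost-iterate sum vanishes; then balancing the two remaining terms by choosing $\eta \asymp D / \sqrt{\sum_k \mathbb{E}\|Z_k\|_*^2}$ delivers the claimed bound. The one genuinely delicate point is this measurability argument: because $x$ may be random and even depend on the $Z_k$'s, one must be careful to construct the ghost sequence \emph{without reference to $x$}, so that $\mathbb{E}\langle Z_k, y_k \rangle = 0$ continues to hold; everything else reduces to routine bookkeeping inside a standard mirror-descent recursion.
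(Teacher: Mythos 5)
The paper never actually proves this lemma --- it is imported verbatim with a citation to \cite{beck2017first} --- so there is no in-paper argument to compare against; your ``ghost iterate'' construction is the standard proof of this result in the literature, and its skeleton is sound: the auxiliary mirror-descent sequence built from the $Z_k$'s alone (never from $x$) is the right device, the measurability point you flag ($y_k\in\mathcal{F}_{k-1}$, hence $\mathbb{E}\langle Z_k,y_k\rangle=0$ even though $x$ may depend on the whole sequence) is indeed the crux, and the telescoping plus $B_\omega(x,y_1)\le\omega(x)-\omega(y_1)\le\tfrac12D^2$ is correct. Two small repairs: your one-step inequality is stated for $\langle Z_k, y_k-x\rangle$ but your summed display bounds $\sum_k\langle Z_k,x\rangle$ from above, so the ghost sequence should be driven by $-Z_k$ (harmless, since $\{-Z_k\}$ is also a martingale difference sequence with the same dual norms).

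The substantive issue is the constant. Balancing $\tfrac{D^2}{2\eta}+\tfrac{\eta}{2}\sum_k\mathbb{E}\Vert Z_k\Vert_*^2$ at $\eta=D/\bigl(\sum_k\mathbb{E}\Vert Z_k\Vert_*^2\bigr)^{1/2}$ yields $D\sqrt{\sum_k\mathbb{E}\Vert Z_k\Vert_*^2}$, i.e.\ \emph{twice} the stated bound, so your argument does not ``deliver the claimed bound'' as written. This is not a defect of your method: the bound with prefactor $D/2$ is false as literally stated. Take $\mathcal{Z}=[-1,1]$, $\omega(x)=x^2/2$ (so $D=1$), $K=1$, $Z_1=\varepsilon$ uniform on $\{-1,+1\}$ and $x=\varepsilon$; then
\begin{equation*}
\mathbb{E}[\langle Z_1,x\rangle]=1>\tfrac12=\tfrac{D}{2}\sqrt{\mathbb{E}[\Vert Z_1\Vert_*^2]},
\end{equation*}
while the constant $D$ is attained with equality, hence tight. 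So the correct conclusion of your proof is the lemma with $D$ in place of $D/2$; the discrepancy lies in the transcription of the cited result (the source's normalization of $D$ differs), not in your argument. Since the lemma is only used in the paper to bound $\mathbb{E}[T_6]$ up to absolute constants, replacing $D/2$ by $D$ changes nothing downstream.
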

For any fixed $\bbu^t,\bbv^t,\bblambda^t$, the gradient estimation is unbiased. 
\begin{equation}
	\mbE[\hat{\nabla}_{\bbphi}\ccalL(\bbu^t,\bbv^t,\bblambda^t)]=\nabla_{\bbphi}\ccalL(\bbu^t,\bbv^t,\bblambda^t)
\end{equation}
where $\bbphi=\bbu\,\text{or}\,\bbv$. Thus,
\begin{align}
	\mbE[T_6]&=\frac{1}{T}\sum_{t=1}^{T}\mbE\bigg[\left<\nabla_\bbw\ccalL(\bbw^t,\bblambda^t)-\hat{\nabla}_\bbw\ccalL(\bbw^t,\bblambda^t),\bbw^t-\bbw^\dagger\right>\bigg]\nonumber\\
	&=\frac{1}{T}\sum_{t=1}^{T}\mbE\bigg[\left<\hat{\nabla}_\bbw\ccalL(\bbw^t,\bblambda^t)-\nabla_\bbw\ccalL(\bbw^t,\bblambda^t),\bbw^\dagger\right>\bigg]\label{eq:boundT6_1}.
\end{align}
To apply Lemma \ref{lem:Bach2019}, let $\ccalZ=\ccalW$, $\omega(\bbx)=\frac{1}{2}\Vert\bbx\Vert^2$, $\bbx=\bbw^\dagger$ and $\bbZ_k=\hat{\nabla}_\bbw\ccalL(\bbw^k,\bblambda^k)-\nabla_\bbw\ccalL(\bbw^k,\bblambda^k)$, which is a martingale difference. Then, $\omega(\bbx)-\min_{\bbx\in\ccalZ}\omega(\bbx)=\omega(\bbw)=\frac{1}{2}\Vert\bbw\Vert^2\leq \frac{1}{2}D^2$ and thus $D\geq \Vert \bbw\Vert$. The norm of $\bbw$ can be bounded as
\begin{equation}
	\begin{aligned}
	\Vert \bbw\Vert^2&=\Vert \bbu\Vert^2+\Vert\bbv\Vert^2\leq \Vert\bbu\Vert_1^2+|\ccalS|\Vert\bbv\Vert_\infty^2=(\frac{4}{\varphi})^2+|\ccalS|\bigg[ \frac{2}{1-\gamma}+\frac{4}{(1-\gamma)\varphi}\bigg]^2\\   
	&\leq \frac{16}{\varphi^2}+\frac{4|\ccalS|}{(1-\gamma)^2}+\frac{16|\ccalS|}{(1-\gamma)^2\varphi^2}+\frac{16|\ccalS|}{(1-\gamma)^2\varphi}\leq \frac{52|\ccalS|}{(1-\gamma)^2\varphi^2}.
	\end{aligned}
\end{equation}
Thus, $\Vert \bbw\Vert\leq \frac{8\sqrt{|\ccalS|}}{(1-\gamma)\varphi}=:D$. Apply Lemma \ref{lem:Bach2019} to Eq. \eqref{eq:boundT6_1},
\begin{equation}
    \begin{aligned}
    \mbE[T_6']&\leq \frac{2\sqrt{
    13|\ccalS|}}{T(1-\gamma)\varphi}\sqrt{\sum_{t=1}^{T}\mbE[\Vert\hat{\nabla}_\bbw\hat{\ccalL}(\bbw^t,\bblambda^t)-\nabla_\bbw\hat{\ccalL}(\bbw^t,\bblambda^t)\Vert^2]}\\
    &\leq \frac{2\sqrt{
    13|\ccalS|}}{T(1-\gamma)\varphi}\sqrt{\sum_{t=1}^{T}\mbE[\Vert\hat{\nabla}_\bbw\hat{\ccalL}(\bbw^t,\bblambda^t)\Vert^2]}\\
    &= \frac{2\sqrt{
    13|\ccalS|}}{T(1-\gamma)\varphi}\sqrt{\frac{2T}{\alpha}\mbE[T_5']}\\
	&\leq \frac{2\sqrt{
    13|\ccalS|}}{\sqrt{T}(1-\gamma)\varphi}\sqrt{37I}\leq\frac{48\sqrt{
    |\ccalS|I}}{\sqrt{T}(1-\gamma)\varphi}.
    \end{aligned}
\end{equation}
\section{Proofs for Section \ref{sec:occupancy_measure}}
\subsection{Proof of Lemma \ref{lem:diff_conservative}} \label{proof_lemdiff_conservative}
\begin{proof}
	Recall $\bblambda^*$ is the optimal occupancy measure to the original problem, which gives
	\begin{equation}
	\left<\bblambda^*,\bbg^i\right>\geq 0
	\end{equation}
	Further, under the Slater Condition Assumption \ref{ass:Slater}, there exists at least one occupancy measure $\tilde{\bblambda}$ such that
	\begin{equation}
	\left<\tilde{\bblambda},\bbg^i\right>\geq \varphi
	\end{equation}
	Define a new occupancy measure $\hat{\bblambda}=(1-\frac{\kappa}{\varphi})\bblambda^*+\frac{\kappa}{\varphi}\tilde{\bblambda}$. It can be shown a feasible occupancy measure to the conservative problem.
	\begin{equation}
	\left<\hat{\bblambda},\bbg^i\right>=\left<(1-\frac{\kappa}{\varphi})\bblambda^*+\frac{\kappa}{\varphi}\tilde{\bblambda},\bbg^i\right>\geq\frac{\kappa}{\varphi}\varphi=\kappa
	\end{equation}
	\begin{equation}
	\sum_{a}(\bbI-\gamma\bbP_a^T)\hat{\bblambda}_a=(1-\frac{\kappa}{\varphi})\sum_{a}(\bbI-\gamma\bbP_a^T)\bblambda^*_a+\frac{\kappa}{\varphi}\sum_{a}(\bbI-\gamma\bbP_a^T)\tilde{\bblambda}_a=(1-\gamma)\tbrho
	\end{equation}
	Then, we can bound the difference
	\begin{equation}
	    \begin{aligned}
	    \left<\bblambda^*,\bbr\right>-\left<\bblambda^*_{\bbkappa},\bbr\right>&\overset{(a)}\leq \left<\bblambda^*,\bbr\right>-\left<\hat{\bblambda},\bbr\right>    =\left<\bblambda^*-(1-\frac{\kappa}{\varphi})\bblambda^*-\frac{\kappa}{\varphi}\tilde{\bblambda},\bbr\right>\\
	    &=\left<\frac{\kappa}{\varphi}\bblambda^*-\frac{\kappa}{\varphi}\tilde{\bblambda},\bbr\right>\overset{(b)}\leq \left<\frac{\kappa}{\varphi}\bblambda^*,\bbr\right>\overset{(c)}\leq \frac{\kappa}{\varphi}\\
	    \end{aligned}
	\end{equation}
	
	{The first step (a) holds because $\bblambda^*_{\bbkappa}$ is the optimal solution of the conservative problem, which gives larger value function than any other feasible occupancy measure. We drop the negative term in the step (b) and the last step (c) is true because $\left<\bblambda^*,\bbr\right>\leq 1$ by the definition.}
\end{proof}

\subsection{Proof of Theorem \ref{thm:occupancy_measure}} \label{proof_thm:occupancy_measure}
\begin{proof}
    In order to construct the relation between duality gap and result in occupancy measure space, let us consider the expression for the Lagrangian function. By the feasibility of
    $\bblambda^*_{\bbkappa}$, we can write
    \begin{equation}\label{eq:saddle_primal}
        \begin{aligned}
        \ccalL(\bbu^t,\bbv^t,\bblambda_{\bbkappa}^*)=&\left<\bblambda^*_{\bbkappa},\bbr\right>+\left<\bbu^t,\bbG^T\bblambda^*_{\bbkappa}-\bbkappa\right>+\bigg[\sum_{a}(\bblambda_{\bbkappa,a}^*)^T(\gamma\bbP_a-\bbI)-(1-\gamma)\bbrho\bigg]\bbv^t
        \\
        \geq& \left<\bblambda^*_{\bbkappa},\bbr\right>.
        \end{aligned}
    \end{equation}
    Define the set $\ccalI=\{i|\left<g_i,\barblambda\right><0\}$. Denote $\bbu'=[u_1',u_2',\cdots,u_{I}']^T$, where $u_i'=u_i$ if $i\in\ccalI$ and $u_i'=0$ otherwise. Define $C_1:=\frac{2}{\varphi}$ and $C_2=\frac{1}{1-\gamma}+\frac{2}{(1-\gamma)\varphi}$ for simplicity, which is the bound for $\Vert\bbu^*_{\kappa}\Vert_{1}$ and $\Vert \bbv^*_{\kappa}\Vert_\infty$, respectively. By the definition of $\bbu^\dagger,\bbv^\dagger$
    \begin{equation}\label{eq:saddle_dual}
        \begin{aligned}
        \ccalL(\bbu^\dagger,\bbv^\dagger,\barblambda)&=\min_{\bbu\in\ccalU,\bbv\in\ccalV}\left<\barblambda,\bbr\right>+\left<\bbu,\bbG^T\barblambda-\bbkappa\right>+\bigg[\sum_{a}(\barblambda_a)^T(\gamma\bbP_a-\bbI)-(1-\gamma)\bbrho\bigg]\bbv\\
        &\overset{(a)}=\min_{\bbu'\in\ccalU,\bbv\in\ccalV}\left<\barblambda,\bbr\right>+\left<\bbu',[\bbG^T\barblambda-\bbkappa]_{-}\right>+\bigg[\sum_{a}(\barblambda_a)^T(\gamma\bbP_a-\bbI)-(1-\gamma)\bbrho\bigg]\bbv,
        \end{aligned}
    \end{equation}
{where the notation $x_{-}:=\min\{x,0\}$ and the equality holds because $u_i=0, i\in\ccalI^c$ for those constraints which are satisfied.} Let us consider the second term on the right hand side of the above expression as follows 
    \begin{align}
        |\left<\bbu',[\bbG^T\barblambda-\kappa\bbone]_{-}\right>|\leq& \Vert\bbu'\Vert_1\Vert[\bbG^T\barblambda-\kappa\bbone]_{-}\Vert_{\infty}\nonumber
        \\
        \leq& 2C_1\Vert[\bbG^T\barblambda-\kappa\bbone]_{-}\Vert_{\infty}.
    \end{align}
      Notice that equality in the above inequality is achievable by selecting $u_j^\dagger=2C_1$ for $j=\argmax_{i}|\left<\barblambda,\bbg^i\right>-\kappa|$ and $u_k^\dagger=0$ for $k\neq j$. Such $\bbu^\dagger$ gives the minimum of $\left<\bbu',[\bbG^T\barblambda-\kappa\bbone]_{-}\right>=2C_1\Vert[\bbG^T\bblambda-\kappa\bbone]_{-}\Vert_{\infty}$. Similarly,  $\bbv^\dagger=2C_2\bbone$ gives the minimum of $\bigg[\sum_{a}(\barblambda_a)^T(\gamma\bbP_a-\bbI)-(1-\gamma)\rho\bigg]\bbv=2C_2\Vert\sum_{a}(\barblambda_a)^T(\gamma\bbP_a-\bbI)-(1-\gamma)\rho\Vert_{1}$ by Holder inequality. Hence, we could write the expression in \eqref{eq:saddle_dual} as
       \begin{equation}\label{eq:saddle_dual2}
        \begin{aligned}
        \ccalL(\bbu^\dagger,\bbv^\dagger,\barblambda)      &=\left<\barblambda,\bbr\right>-2C_1\Vert[\bbG^T\barblambda-\bbkappa]_{-}\Vert_{\infty}-2C_2\Vert\sum_{a}(\barblambda_a)^T(\gamma\bbP_a-\bbI)-(1-\gamma)\bbrho\Vert_{1}.
        \end{aligned}
    \end{equation}
    Combining Eq. \eqref{eq:saddle_dual2} with \eqref{eq:saddle_primal} and then taking expectation, we obtain
    \begin{equation}
        \mbE[\ccalL(\bbu^t,\bbv^t,\bblambda_{\bbkappa}^*)-\ccalL(\bbu^\dagger,\bbv^\dagger,\barblambda)]\geq\mbE\bigg[\left<\bblambda^*_{\bbkappa},\bbr\right> -\left<\barblambda,\bbr\right>+2C_1\Vert[\bbG^T\bblambda^t-\bbkappa]_{-}\Vert_{\infty}+2C_2\Vert\sum_{a}(\bblambda_a^t)^T(\gamma\bbP_a-\bbI)+(1-\gamma)\bbrho\Vert_{1}\bigg].
    \end{equation}
    Combining with the result in Theorem \ref{thm:duality_gap}, there exists a constant $\tdc_1$ such that
    \begin{align}
        \mbE\Bigg[\left<\bblambda^*_{\bbkappa},\bbr\right> -\left<\barblambda,\bbr\right>+&2C_1\Vert[\bbG^T\bblambda^t-\bbkappa]_{-}\Vert_{\infty}+2C_2\Big\|\sum_{a}(\bblambda_a^t)^T(\gamma\bbP_a-\bbI)+(1-\gamma)\bbrho\Big\|_{1}\Bigg]
        \nonumber
        \\
        &\leq \tdc_1\bigg(\sqrt{\frac{I|\ccalS||\ccalA|\log(|\ccalS||\ccalA|)}{T}}\cdot \frac{1}{(1-\gamma)\varphi}\bigg).
    \end{align}
    Denote $L:= \tdc_1\bigg(\sqrt{\frac{I|\ccalS||\ccalA|\log(|\ccalS||\ccalA|)}{T}}\cdot \frac{1}{(1-\gamma)\varphi}\bigg)$. By the Theorem \ref{thm:convergence_bound} (see Appendix \ref{extra_appendix} for reference), we directly get
    \begin{subequations}\label{eq:bound_occupancy_measure_temp}
        \begin{align}
        \mbE[\left<\bblambda_{\kappa}^*,\bbr\right>-\left<\barblambda,\bbr\right>]&\leq L,\label{eq:bounded_conservative_constrain0}
        \\
        \mbE\Vert[\bbG^T\barblambda-\kappa\bbone]_{-}\Vert_{\infty}&\leq \frac{2\epsilon}{C_1}=L\varphi\label{eq:bounded_conservative_constraint1},
        \\
        \mbE\Big\|\sum_{a}(\gamma\bbP_a^T-\bbI)\barblambda_a+(1-\gamma)\bbrho\Big\|_{1}&\leq \frac{2\epsilon}{C_2}\leq(1-\gamma)L\varphi.
        \end{align}
    \end{subequations}
    Note that the result in \eqref{eq:bounded_conservative_constrain0} is at $\bblambda_{\kappa}^*$ and in order to obtain the result for $\bblambda^*$, let us consider  and by the statement of Lemma \ref{lem:diff_conservative}, we could write 
    \begin{equation}\label{eq:bound_occupancy_measure_objective}
        \begin{aligned}
        \mbE[\left<\bblambda^*,\bbr\right>-\left<\barblambda,\bbr\right>]= &\left<\bblambda^*,\bbr\right>- \left<\bblambda_{\kappa}^*,\bbr\right>+\mbE[\left<\bblambda_{\kappa}^*,\bbr\right>-\left<\barblambda,\bbr\right>]
        \\
        \leq& \frac{\kappa}{\varphi}+L,
        \end{aligned}
    \end{equation}
    where we have utilized the upper bound developed in Lemma  \ref{lem:diff_conservative}. Next, recall that $\kappa=2\tdc_1\bigg(\sqrt{\frac{I|\ccalS||\ccalA|\log(|\ccalS||\ccalA|)}{T}}\cdot\frac{1}{1-\gamma}\bigg)$ and from the definition of $L$, we can write
        \begin{equation}\label{eq:bound_occupancy_measure_objective}
        \begin{aligned}
        \mbE[\left<\bblambda^*,\bbr\right>-\left<\barblambda,\bbr\right>]\leq 3\tdc_1\bigg(\sqrt{\frac{I|\ccalS||\ccalA|\log(|\ccalS||\ccalA|)}{T}}\cdot \frac{1}{(1-\gamma)\varphi}\bigg),
        \end{aligned}
    \end{equation}
    which establishes the upper bound for the optimally gap for the original optimization problem.  
    Further, from the result in \eqref{eq:bounded_conservative_constraint1}, we have for all $i\in[I]$
    \begin{equation}
        \mbE|[\left<\barblambda,\bbg^i\right>-\kappa]_{-}|\leq L\varphi.
    \end{equation}
    Note that by the definition of $[x]_{-}:=\min\{x,0\}$, it holds that $|[x]_{-}|=-\min\{x,0\}$ which holds due to the fact that $\min\{x,0\}$ is either zero or negative. Therefore, it holds that $|\left<\barblambda,\bbg^i\right>-\kappa]_{-}|=-[\left<\barblambda,\bbg^i\right>-\kappa]_{-}$ and thus
    \begin{equation}
        \mbE\big([\left<\barblambda,\bbg^i\right>-\kappa]_{-}\big)\geq -L\varphi.
    \end{equation}
    Further, since $[x]_{-}$ is a concave function with respect to $x$, via Jensen's inequality, we can write 
    \begin{equation}\label{here34}
        [\mbE[\left<\barblambda,\bbg^i\right>-\kappa]]_{-}\geq \mbE\big([\left<\barblambda,\bbg^i\right>-\kappa]_{-}\big)\geq -L\varphi.
    \end{equation}
    Again, by the definition of $[x]_{-}$, we simplifies \eqref{here34} to
    \begin{align}
        \min\{\mbE\left<\barblambda,\bbg^i\right>-\kappa,0\}\geq -L\varphi.
    \end{align}
     Thus, we obtain either $\mbE[\left<\barblambda,\bbg^i\right>]\geq\kappa> 0$ or $\mbE[\left<\barblambda,\bbg^i\right>]\geq\kappa -L\varphi$. The first case is trivial and for the second case, recall $\kappa=2\tdc_1\bigg(\sqrt{\frac{I|\ccalS||\ccalA|\log(|\ccalS||\ccalA|)}{T}}\cdot\frac{1}{1-\gamma}\bigg)$
    \begin{equation}\label{eq:bound_occupancy_measure_constraint}
        \mbE\left<\barblambda,\bbg^i\right>\geq\kappa -L\varphi=\tdc_1\bigg(\sqrt{\frac{I|\ccalS||\ccalA|\log(|\ccalS||\ccalA|)}{T}}\cdot\frac{1}{1-\gamma}\bigg)
    \end{equation}
    Let $T=\tdc_1^2\frac{I|\ccalS||\ccalA|\log(|\ccalS||\ccalA|)}{(1-\gamma)^2\varphi^2\epsilon^2}$. By Eq. \eqref{eq:bound_occupancy_measure_temp}, we have the final result
    \begin{subequations}
        \begin{align}
        \mbE[\left<\bblambda^*,\bbr\right>-\left<\barblambda,\bbr\right>]&\leq 3\epsilon\\
        \mbE\left<\barblambda,\bbg^i\right>&\geq \eps\varphi \quad\forall i\in[I]\\
        \mbE\Vert\sum_{a}(\gamma\bbP_a^T-\bbI)\barblambda_a+(1-\gamma)\bbrho\Vert_{1}&\leq(1-\gamma)\epsilon\varphi
        \end{align}
    \end{subequations}
    Recall that it is required $\kappa\leq \min\{\frac{\varphi}{2},1\}$, which gives
    \begin{equation}
        T\geq 4\tdc_1^2\frac{I|\ccalS||\ccalA|\log(|\ccalS||\ccalA|)}{(1-\gamma)^2\varphi^2}\max\{4,\varphi^2\}
    \end{equation}
\end{proof}

\subsection{Proof of Corollary \ref{coro:non_zero_violation_occupancy_measure}} \label{proof_cor_1}
\begin{proof}
    Under the condition that $\kappa=0$, it is obvious that $\bblambda^*=\bblambda_{\kappa}^*$. Thus, we have
    \begin{equation}
        \mbE[\left<\bblambda^*,\bbr\right>-\left<\barblambda,\bbr\right>]\leq L= \tdc_1\bigg(\sqrt{\frac{I|\ccalS||\ccalA|\log(|\ccalS||\ccalA|)}{T}}\cdot \frac{1}{(1-\gamma)\varphi}\bigg)
    \end{equation}
    Furthermore, similar to Eq. \eqref{eq:bound_occupancy_measure_constraint}
    \begin{equation}
        \mbE\left<\barblambda,\bbg^i\right>\geq\kappa -L\varphi=-\tdc_1\bigg(\sqrt{\frac{I|\ccalS||\ccalA|\log(|\ccalS||\ccalA|)}{T}}\cdot \frac{1}{1-\gamma}\bigg)
    \end{equation}
    Let $T=\tdc_1^2\frac{I|\ccalS||\ccalA|\log(|\ccalS||\ccalA|)}{(1-\gamma)^2\varphi^2\epsilon^2}$, we derive the following result
    \begin{subequations}
        \begin{align}
        \mbE[\left<\bblambda^*,\bbr\right>-\left<\barblambda,\bbr\right>]&\leq \eps\\
        \mbE\left<\barblambda,\bbg^i\right>&\geq -\eps\quad\forall i\in[I]\\
        \mbE\Vert\sum_{a}(\gamma\bbP_a^T-\bbI)\barblambda_a+(1-\gamma)\bbrho\Vert_{1}&\leq(1-\gamma)\epsilon\varphi
        \end{align}
    \end{subequations}
\end{proof}

\section{Proofs for Section \ref{sec:policy}}
\subsection{Proof of Theorem \ref{thm:policy}}\label{proof_thm:policy}
\begin{proof}
	By the result in Eq. \eqref{eq:bound_occupancy_measure3} and the definition of $\|\cdot\|_1$, we have
	\begin{equation}\label{eq:dual_result_decompose}
	    \mbE\bigg[\sum_s\Big|\sum_{a'}\bbarlambda_{sa'}-\gamma\sum_{a'}\sum_{s'}P_{a'}(s',s)\barblambda_{s'a'}-(1-\gamma)\rho_s\Big|\bigg]\leq(1-\gamma)\epsilon\varphi.
	\end{equation}
	For each $s\in\ccalS$, let us define 
	\begin{equation}\label{eq:definition_eps_s}
	    \Big|\sum_{a'}\bbarlambda_{sa'}-\gamma\sum_{a'}\sum_{s'}P_{a'}(s',s)\barblambda_{s'a'}-(1-\gamma)\rho_s\Big|=(1-\gamma)\eps_s.
	\end{equation}
{We notice that the left hand side of Eq. \eqref{eq:definition_eps_s} gives the physical meaning of occupancy measure, which can be seen in the following Eq. \eqref{eq:transform_dual1}-\eqref{eq:lambda_physical}}.	Furthermore, Notice that $\eps_s$ is a random variable. It is obvious that $\eps_s\geq 0$ and $\mbE[\sum_s\eps_s]\leq \eps\varphi$ by Eq. \eqref{eq:dual_result_decompose}. Then, define the policy induced by $\bbarlambda$ as $\bbarpi(a|s)=\frac{\bbarlambda_{sa}}{\sum_{a'}\bbarlambda_{sa'}}\geq 0$. Multiply the both sides of Eq. \eqref{eq:definition_eps_s} by $\bbarpi(a|s)$ to obtain
	\begin{equation}\label{eq:transform_dual1}
	    \Big|\bbarlambda_{sa}-\gamma\sum_{a'}\sum_{s'}P_{a'}(s',s)\bbarpi(a|s)\bbarlambda_{s'a'}-(1-\gamma)\rho_s\bbarpi(a|s)\Big|= (1-\gamma)\epsilon_s\bbarpi(a|s),\quad \forall a\in\ccalA, s\in\ccalS.
	\end{equation}
	Now define $\rho_{sa}=\rho_s\bbarpi(a|s)$ which can be considered as the initial distribution for state and action following policy $\bbarpi$. Define $P_{\bbarpi}(s,a,s',a')=P_a(s,s')\cdot\bbarpi(a'|s')$, which can be considered as the transition matrix from current state and action pair $(s,a)$ to next state and action pair $(s',a')$. Furthermore, define $\eps_{sa}=\eps_s\bbarpi(a|s)$ and it is obvious that $\sum_{a}\eps_{sa}=\eps_s$. Then, Eq. \eqref{eq:transform_dual1} can be simplified as
	\begin{equation}
	    \Big|\bbarlambda_{sa}-\gamma\sum_{a'}\sum_{s'}P_{\bbarpi}(s',a',s,a)\bbarlambda_{s'a'}-(1-\gamma)\rho_{sa}\Big|= (1-\gamma)\epsilon_{sa},\quad \forall a\in\ccalA, s\in\ccalS.
	\end{equation}
	With a little abuse of notation $\pm$, we can write
	\begin{equation}\label{eq:dual_scaler_form}
	    \bbarlambda_{sa}-\gamma\sum_{a'}\sum_{s'}P_{\bbarpi}(s',a',s,a)\bbarlambda_{s'a'}= (1-\gamma)(\rho_{sa}\pm\epsilon_{sa}),\quad \forall a\in\ccalA, s\in\ccalS,
	\end{equation}
	where $\pm$ means the left hand side can be equal to $(1-\gamma)(\rho_{sa}+\eps_{sa})$ or $(1-\gamma)(\rho_{sa}-\eps_{sa})$. Next, define $\tbrho\in\mbR^{|\ccalS||\ccalA|}=[\rho_{s_1a_1},\rho_{s_1a_2},\cdots,\rho_{s_{|\ccalS|}a_1},\rho_{s_2a_1},\cdots,\rho_{s_{|\ccalS|}a_{|\ccalA|}}]^T$ as a vector, define $\tbepsilon\in\mbR^{|\ccalS||\ccalA|}=[\eps_{s_1a_1},\eps_{s_1a_2},\cdots,\eps_{s_2a_1},\cdots,\eps_{s_{|\ccalS|}a_{|\ccalA|}}]^T$ as a vector, and define $\bbP_{\bbarpi}\in\mbR^{|\ccalS||\ccalA|\times|\ccalS||\ccalA|}$ as a matrix. Then, we could write the expression in Eq. \eqref{eq:dual_scaler_form} in the following compact form as 
	\begin{equation}
	    \barblambda-\gamma\bbP^T_{\bbarpi}\barblambda= (1-\gamma)(\tbrho\pm\tbepsilon)
	\end{equation}
	Notice that $\Vert\bbP^T_{\bbarpi}\Vert_1=\max_{j}\sum_{i=1}^{|\ccalS||\ccalA|}|\bbP^T_{\bbarpi}(i,j)|=1$ and thus $\Vert\gamma\bbP_{\bbarpi}^T\Vert\leq \gamma$. This means $(I-\gamma\bbP^T_{\bbarpi})$ is invertable and $(I-\gamma\bbP^T_{\bbarpi})^{-1}=\sum_{i=0}^{\infty}\gamma^i(\bbP^T_{\bbarpi})^i$. Thus, we have
	\begin{equation}\label{eq:lambda_physical}
	    \barblambda=(1-\gamma)(\bbI-\gamma\bbP^T_{\bbarpi})^{-1}(\tbrho\pm\tbepsilon).
	\end{equation}
	Rearrange items, take inner-product with $\bbr$ and take absolute value, we have
	\begin{equation}
	    |\left<\barblambda-(1-\gamma)(\bbI-\gamma\bbP^T_{\bbarpi})^{-1}\tbrho,\bbr\right>|= (1-\gamma)\left<(\bbI-\gamma\bbP^T_{\bbarpi})^{-1}\tbepsilon,\bbr\right>.
	\end{equation}
	Notice that
	\begin{equation}
	    \left<(1-\gamma)(\bbI-\bbP^T_{\bbarpi})^{-1}\tbrho,\bbr\right>=\tbrho^T\bbr+\gamma\tbrho^T\bbP_{\bbarpi}\bbr+\gamma^2\tbrho^T(\bbP_{\bbarpi})^2\bbr+\cdots=\mbE_{s_0\sim\tbrho}[V_r^{\bbarpi}(s_0)]=J_{r,\rho}(\bbarpi)
	\end{equation}
	The above equation can be bounded by
	\begin{equation}
	    \begin{aligned}
	    \mbE|\left<\barblambda,\bbr\right>-\mbE_{s_0\sim\tbrho}[V_r^{\bbarpi}(s_0)]|&\leq \mbE(1-\gamma)|\left<(\bbI-\gamma\bbP^T_{\bbarpi})^{-1}\tbepsilon,\bbr\right>|\\
	    &\overset{(a)}\leq (1-\gamma)\mbE\Vert(\bbI-\gamma\bbP^T_{\bbarpi})^{-1}\tbepsilon\Vert_{1}\Vert\bbr\Vert_{\infty}\\
	    &\overset{(b)}\leq (1-\gamma)\Vert(\bbI-\gamma\bbP^T_{\bbarpi})^{-1}\Vert_1\mbE\Vert\tbepsilon\Vert_{1}\\
	    &\overset{(c)}\leq (1-\gamma)\sum_{i=0}^{\infty}\Vert\gamma^i(\bbP^T_{\bbarpi})^i\Vert_1\eps\varphi\\
	    &\overset{(d)}\leq (1-\gamma)\sum_{i=0}^{\infty}\gamma^i\eps\varphi=\eps\varphi,
	    \end{aligned}
	\end{equation}
	where step (a) holds by Holder inequality, step (b) holds by definition of matrix norm and reward, step (c) holds by triangle inequality and $\mbE\Vert \tbepsilon \Vert_1=\mbE[\sum_s\eps_s]\leq \eps\varphi$. The last step (d) is true because $\Vert\bbP^T_{\bbarpi}\Vert_1=1$. Finally, we get the result
	\begin{equation}\label{eq:objective_error}
	    \mbE[\left<\barblambda,\bbr\right>-J_{r,\rho}(\bbarpi)]\leq \eps\varphi.
	\end{equation}
	Recall $\left<\bblambda^*,\bbr\right>-\left<\barblambda,\bbr\right>\leq 3\eps$ in Eq. \eqref{eq:bound_occupancy_measure4}, hence we can write 
	\begin{align}
	    J_{r,\rho}(\pi^*)-\mbE[J_{r,\rho}(\bbarpi)]=&\big(\left<\bblambda^*,\bbr\right>-\mbE\left<\barblambda,\bbr\right>\big)+\mbE\big[\left<\barblambda,\bbr\right>-J_{r,\rho}(\bbarpi)\big]
	    \nonumber
	    \\
	    \leq& 4\eps,
	\end{align}
	which is for the objective suboptimality gap in the primal domain. Rescaling $\eps$ to $\frac{\eps}{4}$ finishes the proof. Similarly, for the constraints in the primal domain, we could write
	\begin{equation}\label{eq:constraint_error}
	    \mbE[J_{g_i,\rho}(\bbarpi)-\left<\barblambda,\bbg\right>]\geq -\eps\varphi.
	\end{equation}
From the result in Eq. \eqref{eq:bound_occupancy_measure2}, note that we have $\mbE\left<\barblambda,\bbg\right>\geq \epsilon\varphi$. Hence, after rearranging the terms in \eqref{eq:constraint_error}, we obtain 
	\begin{align}
	    \mbE[J_{g_i,\rho}(\bbarpi)]\geq&-\eps\varphi+\mbE\big[\left<\barblambda,\bbg\right>\big]\nonumber
	    \\
	    =&-\eps\varphi+\eps\varphi\nonumber
	    \\
	    =&0.
	\end{align}
	Hence proved. 
\end{proof}

\subsection{Proof of Corollary \ref{coro:non_zero_violation_policy}} \label{proof_cor_2}
\begin{proof}
    Recall the result in Eq. \eqref{eq:bound_occupancy_measure1} and \eqref{eq:objective_error}, we directly have
    \begin{equation}
        J_{r,\rho}(\pi^*)-\mbE[J_{r,\rho}(\bbarpi)]\leq 2\eps
    \end{equation}
    Similarly, combine Eq. \eqref{eq:bound_occupancy_measure2} and \eqref{eq:constraint_error}, we have
    \begin{equation}
        \mbE[J_{g_i,\rho}(\bbarpi)]\geq -2\eps
    \end{equation}
    Re-scaling $\eps$ to $\frac{\eps}{2}$ finishes the proof.
\end{proof}
\section{Optimization Theory} \label{extra_appendix}
Consider the standard optimization problem
\begin{equation}\label{eq:opt_primalproblem}
    f_{opt}=\min_{x\in\ccalX}\{f(\bbx):\bbg(\bbx)\leq 0,\bbA\bbx+\bbb=0\}
\end{equation}
where $\bbA\in\mbR^{d*n}$, $\bbb\in\mbR^{d}$, $\bbx\in\mbR^n$ and $\bbg:\mbR^{n}\rightarrow\mbR^{m}$. Define the value function as
\begin{equation}
    p(\bbu,\bbt)=\min_{\bbx\in\ccalX}\{f(\bbx):\bbg(\bbx)\leq\bbu,\bbA\bbx+\bbb=\bbt\}
\end{equation}
and the dual function as
\begin{equation}
    q(\bby,\bbz)=\min_{\bbx\in\ccalX}\{f(\bbx)+\bby^T\bbg(\bbx)+\bbz^T(\bbA\bbx+\bbb)\}, \bby\in\mbR^{m}_{+}, \bbz\in\mbR^{d}
\end{equation}
Then the dual problem can be written as 
\begin{equation}\label{eq:opt_dualproblem}
    q_{opt}=\max_{\bby\in\mbR^{m}_{+},\bbz\in\mbR^{d}} q(\bby,\bbz)
\end{equation}
\begin{lemma}(Theorem 3.59 in \cite{beck2017first})\label{lem:opt_condition}
    (\bby,\bbz) is an optimal solution of problem Eq. \eqref{eq:opt_dualproblem} if and only if $-(\bby,\bbz)\in\partial p(\bbzero,\bbzero)$
\end{lemma}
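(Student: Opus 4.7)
The plan is to prove both directions of the equivalence by passing between the value function's subgradient inequality and weak/strong duality identities. The key observation is that the dual function satisfies $q(\bby,\bbz) = \inf_{\bbx\in\ccalX}\{f(\bbx)+\bby^T\bbg(\bbx)+\bbz^T(\bbA\bbx+\bbb)\}$, and for any $\bbx\in\ccalX$ the pair $(\bbu,\bbt)=(\bbg(\bbx),\bbA\bbx+\bbb)$ is a ``perturbation'' for which $\bbx$ is feasible, so $p(\bbu,\bbt)\le f(\bbx)$. This bridge is what makes the subgradient inequality equivalent to the duality statement.

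For the forward direction ($\Rightarrow$), I would assume $(\bby,\bbz)$ is dual-optimal, so $q(\bby,\bbz)=q_{opt}$ and $\bby\ge\bbzero$. Invoking strong duality (which is the standing hypothesis of this chapter of Beck) gives $q_{opt}=f_{opt}=p(\bbzero,\bbzero)$. Then for any $(\bbu,\bbt)$ and any $\bbx$ feasible for the perturbed problem (i.e.\ $\bbg(\bbx)\le\bbu$ and $\bbA\bbx+\bbb=\bbt$), I use $\bby\ge\bbzero$ to write
\[
f(\bbx)\ge f(\bbx)+\bby^T(\bbg(\bbx)-\bbu)+\bbz^T(\bbA\bbx+\bbb-\bbt)\ge q(\bby,\bbz)-\bby^T\bbu-\bbz^T\bbt.
\]
Taking the infimum over such $\bbx$ yields $p(\bbu,\bbt)\ge p(\bbzero,\bbzero)-\bby^T\bbu-\bbz^T\bbt$, which is exactly the subgradient inequality $-(\bby,\bbz)\in\partial p(\bbzero,\bbzero)$.

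For the converse ($\Leftarrow$), suppose $-(\bby,\bbz)\in\partial p(\bbzero,\bbzero)$, so $p(\bbu,\bbt)\ge p(\bbzero,\bbzero)-\bby^T\bbu-\bbz^T\bbt$ for all $(\bbu,\bbt)$. I must check two things: $\bby\ge\bbzero$, and $q(\bby,\bbz)=p(\bbzero,\bbzero)$. For the sign condition, I use the monotonicity of $p$ in $\bbu$: enlarging the inequality constraint can only shrink the optimal value, so for every unit vector $\bbe_i$ and every $\tau>0$ we have $p(\tau\bbe_i,\bbzero)\le p(\bbzero,\bbzero)$; combined with the subgradient inequality this forces $y_i\ge0$. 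For the value identity, I plug the specific perturbation $(\bbu,\bbt)=(\bbg(\bbx),\bbA\bbx+\bbb)$, using $p(\bbu,\bbt)\le f(\bbx)$, to get
\[
f(\bbx)+\bby^T\bbg(\bbx)+\bbz^T(\bbA\bbx+\bbb)\ge p(\bbzero,\bbzero)\quad\text{for every }\bbx\in\ccalX,
\]
and taking the infimum over $\bbx$ gives $q(\bby,\bbz)\ge p(\bbzero,\bbzero)$. Weak duality supplies the reverse inequality $q(\bby,\bbz)\le f_{opt}=p(\bbzero,\bbzero)$, so $q(\bby,\bbz)=p(\bbzero,\bbzero)=q_{opt}$, proving dual optimality.

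The only real subtlety is justifying that $p(\bbzero,\bbzero)=f_{opt}=q_{opt}$ (i.e.\ strong duality and that $p(\bbzero,\bbzero)$ is finite, so that subgradients actually exist); this is where the convexity hypotheses and a constraint qualification of the referenced textbook enter. Everything else is a direct manipulation of inequalities, so I expect the nontrivial step to be setting the framework correctly (identifying the correct sign conventions for $(\bby,\bbz)$ versus $-(\bby,\bbz)$ in the subdifferential, and handling $\bby\ge\bbzero$ via the monotonicity of the value function in $\bbu$) rather than any deep computation.
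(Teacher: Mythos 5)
Your proof is correct. The paper itself offers no proof of this lemma---it is quoted verbatim from Beck's textbook (Theorem 3.59 of \cite{beck2017first}) and used as a black box in the proof of Theorem \ref{thm:convergence_bound}---so there is no in-paper argument to compare against; yours is the standard perturbation/value-function proof that the citation points to. Your care on the hypotheses is also exactly right: strong duality ($f_{opt}=q_{opt}=p(\bbzero,\bbzero)$, finite) is genuinely needed for the forward implication, since with a duality gap one only obtains $p(\bbu,\bbt)\geq q_{opt}-\bby^T\bbu-\bbz^T\bbt$, which is strictly weaker than the subgradient inequality at $(\bbzero,\bbzero)$ (and indeed $\partial p(\bbzero,\bbzero)$ is then empty), whereas your backward direction correctly derives strong duality from the subgradient inequality rather than assuming it.
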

\begin{theorem}(Theorem 3.60 in \cite{beck2017first})\label{thm:convergence_bound}
    Let $f,\bbg$ be convex functions, $\ccalX$ a nonempty convex set, $\bbA\in\mbR^{d*n}$ and $\bbb\in\mbR^{d}$. Let $f_{opt}$, $q_{opt}$ be the optimal values of the primal and dual problems Eq. \eqref{eq:opt_primalproblem} and \eqref{eq:opt_dualproblem}, respectively. Suppose that $f_{opt}=q_{opt}$ and that the optimal set of the dual problem is nonempty. Let $(\bby^*,\bbz^*)$ be the optimal solution of the dual problem, Assume that $\tbx\in\ccalX$ satisfies 
    \begin{equation}
        f(\tbx)-f_{opt}+C_1\Vert\bbg(\tbx)_{+}\Vert_{\infty}+C_2\Vert\bbA\tbx+\bbb\Vert_{1}\leq \delta
    \end{equation}
    where $\delta>0$ and $C_1,C_2$ are constants satisfying $C_1\geq 2\Vert\bby^*\Vert_1$, $C_2\geq 2\Vert \bbz^*\Vert_{\infty}$, then
    \begin{equation}
        \begin{aligned}
            f(\tbx)-f_{opt}&\leq\delta\\
            \Vert \bbg(\tbx)_{+}\Vert_\infty&\leq\frac{2\delta}{C_1}\\
            \Vert\bbA\tbx+\bbb\Vert_{1}&\leq\frac{2\delta}{C_2}
        \end{aligned}
    \end{equation}
\end{theorem}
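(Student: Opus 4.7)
The plan is to exploit the subdifferential characterization of dual optimality from Lemma \ref{lem:opt_condition} to obtain a sensitivity inequality on the value function $p$, then combine this lower bound on $f(\tbx)$ (in terms of its constraint violation) with the hypothesis to isolate each of the three quantities.

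First, by Lemma \ref{lem:opt_condition}, the dual optimal point $(\bby^*,\bbz^*)$ satisfies $-(\bby^*,\bbz^*)\in\partial p(\bbzero,\bbzero)$. Since $p$ is a convex function of the perturbation parameters (standard: it is the value function of a convex program), the subgradient inequality reads
\begin{equation}
p(\bbu,\bbt)\geq p(\bbzero,\bbzero)-\bby^{*T}\bbu-\bbz^{*T}\bbt=f_{opt}-\bby^{*T}\bbu-\bbz^{*T}\bbt
\end{equation}
for every admissible $(\bbu,\bbt)$. I would then specialize this inequality at the point $\bbu:=\bbg(\tbx)_{+}$ and $\bbt:=\bbA\tbx+\bbb$. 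With this choice $\tbx$ is feasible for the perturbed program (since $\bbg(\tbx)\leq\bbg(\tbx)_{+}$ componentwise), so $p(\bbu,\bbt)\leq f(\tbx)$. Chaining the two inequalities yields the key lower bound
\begin{equation}
f(\tbx)\geq f_{opt}-\bby^{*T}\bbg(\tbx)_{+}-\bbz^{*T}(\bbA\tbx+\bbb).
\end{equation}

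Next, I apply Hölder's inequality in dual norm pairs: $\bby^{*T}\bbg(\tbx)_{+}\leq\|\bby^*\|_1\,\|\bbg(\tbx)_{+}\|_\infty$ and $\bbz^{*T}(\bbA\tbx+\bbb)\leq\|\bbz^*\|_\infty\,\|\bbA\tbx+\bbb\|_1$. Using the size assumptions $C_1\geq 2\|\bby^*\|_1$ and $C_2\geq 2\|\bbz^*\|_\infty$, this gives
\begin{equation}
f_{opt}-f(\tbx)\leq \tfrac{C_1}{2}\|\bbg(\tbx)_{+}\|_\infty+\tfrac{C_2}{2}\|\bbA\tbx+\bbb\|_1.
\end{equation}

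Finally I would add this inequality to the hypothesis $f(\tbx)-f_{opt}+C_1\|\bbg(\tbx)_{+}\|_\infty+C_2\|\bbA\tbx+\bbb\|_1\leq\delta$; the primal-gap terms cancel, leaving
\begin{equation}
\tfrac{C_1}{2}\|\bbg(\tbx)_{+}\|_\infty+\tfrac{C_2}{2}\|\bbA\tbx+\bbb\|_1\leq\delta,
\end{equation}
from which the two constraint-type bounds $\|\bbg(\tbx)_{+}\|_\infty\leq 2\delta/C_1$ and $\|\bbA\tbx+\bbb\|_1\leq 2\delta/C_2$ follow because both terms on the left are nonnegative. The objective-gap bound $f(\tbx)-f_{opt}\leq\delta$ is then read off the hypothesis by dropping the two nonnegative constraint terms. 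The only potentially subtle step is justifying that $\partial p(\bbzero,\bbzero)$ indeed coincides with the negatives of dual optima (the content of Lemma \ref{lem:opt_condition}, which we invoke as a black box), and verifying that $p$ is convex so the subgradient inequality applies globally; everything else is bookkeeping with Hölder and the given constants.
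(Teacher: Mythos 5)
Your proposal is correct and follows essentially the same route as the paper's proof: both invoke Lemma \ref{lem:opt_condition} to get the subgradient inequality for the value function $p$ at $(\bbzero,\bbzero)$, specialize it at $\bbu=\bbg(\tbx)_{+}$, $\bbt=\bbA\tbx+\bbb$ together with $p(\bbu,\bbt)\leq f(\tbx)$, and finish with H\"older's inequality and the size conditions on $C_1,C_2$. The only cosmetic difference is bookkeeping: you substitute $\Vert\bby^*\Vert_1\leq C_1/2$, $\Vert\bbz^*\Vert_\infty\leq C_2/2$ early and cancel the primal gap, whereas the paper keeps the coefficients $(C_1-\Vert\bby^*\Vert_1)$ and $(C_2-\Vert\bbz^*\Vert_\infty)$ and bounds $\delta/(C_1-\Vert\bby^*\Vert_1)\leq 2\delta/C_1$ at the end.
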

\begin{proof}
    It is trivial that $f(\tbx)-f_{opt}\leq \delta$ due to the fact that $C_1\Vert\bbg(\tbx)_{+}\Vert_{\infty}$ and $C_2\Vert\bbA\tbx+\bbb\Vert_{1}$ are both non-negative.
    Since $(\bby^*,\bbz^*)$ is the optimal solution for the dual problem, it follows by Lemma \ref{lem:opt_condition} that $-(\bby^*,\bbz^*)\in(\bbzero,\bbzero)$. Therefore, for any $(\bbu,\bbt)\in dom(p)$
    \begin{equation}\label{eq:opt_bound1}
        p(\bbu,\bbt)-p(\bbzero,\bbzero)\geq \left<-\bby^*,\bbu\right>+\left<-\bbz^*,\bbt\right>
    \end{equation}
    Plugging $\bbu=\tbu:=[\bbg(\tbx)]_{+}$ and $\bbt=\tbt:=\bbA\tbx+\bbb$ into Eq. \eqref{eq:opt_bound1}, while using the inequality $p(\tbu,\tbt)\leq f(\tbx)$ and the equality $p(\bbzero,\bbzero)=f_{opt}$, we obatin
    \begin{equation}
        \begin{aligned}
            (C_1-\Vert\bby^*\Vert_{1})\Vert\tbu\Vert_\infty+(C_2-\Vert\bbz^*\Vert_\infty)\Vert\tbt\Vert_1&=-\Vert\bby^*\Vert_{1}\Vert\tbu\Vert_{\infty}-\Vert\bbz^*\Vert_{\infty}\Vert\tbt\Vert_{1}+C_1\Vert\tbu\Vert_{\infty}+C_2\Vert\tbt\Vert_{1}\\
            &\leq \left<-\bby^*,\tbu\right>+\left<-\bbz^*,\tbt\right>+C_1\Vert\tbu\Vert_{\infty}+C_2\Vert\tbt\Vert_{1}\\
            &\leq p(\tbu,\tbt)-p(\bbzero,\bbzero)+C_1\Vert\tbu\Vert_{\infty}+C_2\Vert\tbt\Vert_{1}\\
            &\leq f(\tbx)-f_{opt}+C_1\Vert\tbu\Vert_{\infty}+C_2\Vert\tbt\Vert_{1}\\
            &\leq \delta
        \end{aligned}
    \end{equation}
    It is clear that $C_1-\Vert\bby^*\Vert_{1}$ and $C_2-\Vert\bbz^*\Vert_\infty$ are both non-negative. Thus,
    \begin{equation}
        \begin{aligned}
            (C_1-\Vert\bby^*\Vert_{1})\Vert\tbu\Vert_\infty&\leq \delta\\
            (C_2-\Vert\bbz^*\Vert_\infty)\Vert\tbt\Vert_1&\leq \delta
        \end{aligned}
    \end{equation}
    Finally, using the assumption $C_1\geq 2\Vert\bby^*\Vert_1$, $C_2\geq 2\Vert \bbz^*\Vert_{\infty}$
    \begin{equation}
        \begin{aligned}
            \Vert[g(\tbx)_{+}]\Vert_\infty=\Vert\tbu\Vert_\infty\leq \frac{\delta}{C_1-\Vert\bby\Vert_1}&\leq \frac{2\delta}{C_1}\\
            \Vert[\bbA\tbx+\bbb]\Vert_1=\Vert\tbt\Vert_1\leq \frac{\delta}{C_2-\Vert\bbz\Vert_\infty}&\leq \frac{2\delta}{C_2}\\
        \end{aligned}
    \end{equation}
\end{proof}

\end{document}